\newtheorem{theorem}{Theorem}
\newtheorem{corollary}[theorem]{Corollary}
\newtheorem{principle}{Principle}
\newtheorem{proposition}[theorem]{Proposition}
\newtheorem{fact}[theorem]{Fact}
\newtheorem{lemma}[theorem]{Lemma}
\newtheorem{definition}[theorem]{Definition}
\numberwithin{theorem}{section}
\numberwithin{equation}{section}
\newcommand*\diff{\mathop{}\!\mathrm{d}}
\newcommand{\vspan}[1]{\texttt{span}\left(#1\right)}
\newcommand{\ket}[1]{\lvert #1\rangle}
\newcommand{\bra}[1]{\langle #1 \rvert}
\newcommand{\braket}[2]{\langle #1 \mid #2 \rangle}
\def\field{{\omega}}
\def\gset{{\Omega}}
\def\nbhd{{\gU}}
\def\sfield{{\ket{\varphi}}}
\def\sfcoef{{\varphi}}
\def\avgfield{{\ket{\bar \varphi}}}
\def\lH{{\widetilde{\mathcal{H}}}}
\def\subH{{\Delta\mathcal{H}}}
\def\bG{{{\mathbf G}}}
\def\bH{{{\mathbf H}}}
\def\bM{{{\mathbf M}}}
\def\bN{{{\mathbf N}}}
\def\bS{{{\mathbf S}}}
\def\lG{{\widetilde{\mathbf G}}}
\def\lM{{\widetilde{\mathbf M}}}
\def\lN{{\widetilde{\mathbf N}}}
\def\avgfcoef{{\bar \varphi}}
\def\Dt{{\Delta t}}
\def\dt{{\diff t}}
\def\ds{{\diff s}}
\def\pps{{ \bm{\Gamma}}}
\def\ppss{{ \gamma}}
\def\ppsv{{\bar \gamma}}
\def\ppsp{{{\mathbf{P}}}}
\def\ppsq{{{\mathbf{Q}}}}
\def\ppsadj{{\Lambda}}
\def\genset{{\gP}}
\def\act{{{\widetilde{\mathbf A}}}}
\def\proj{{\mathbf{\Pi}}}
\def\infevo{{ \mathbf{\Phi}}}
\def\eqref#1{equation~\ref{#1}}
\def\1{\bm{1}}
\def\rmF{{\mathbf{F}}}
\DeclareMathAlphabet{\mathsfit}{\encodingdefault}{\sfdefault}{m}{sl}
\SetMathAlphabet{\mathsfit}{bold}{\encodingdefault}{\sfdefault}{bx}{n}
\def\gA{{\mathcal{A}}}
\def\gB{{\mathcal{B}}}
\def\gD{{\mathcal{D}}}
\def\gH{{\mathcal{H}}}
\def\gM{{\mathcal{M}}}
\def\gN{{\mathcal{N}}}
\def\gO{{\mathcal{O}}}
\def\gP{{\mathcal{P}}}
\def\gU{{\mathcal{U}}}
\def\gX{{\mathcal{X}}}
\def\sC{{\mathbb{C}}}
\def\sR{{\mathbb{R}}}
\newcommand{\E}{\mathbb{E}}
\definecolor{codegreen}{rgb}{0,0.6,0}
\definecolor{codegray}{rgb}{0.5,0.5,0.5}
\definecolor{codepurple}{rgb}{0.58,0,0.82}
\definecolor{backcolour}{rgb}{0.95,0.95,0.92}
\lstdefinestyle{mystyle}{
    backgroundcolor=\color{backcolour},   
    commentstyle=\color{codegreen},
    keywordstyle=\color{magenta},
    numberstyle=\tiny\color{codegray},
    stringstyle=\color{codepurple},
    basicstyle=\ttfamily\footnotesize,
    breakatwhitespace=false,         
    breaklines=true,                 
    captionpos=b,                    
    keepspaces=true,                 
    numbers=left,                    
    numbersep=5pt,                  
    showspaces=false,                
    showstringspaces=false,
    showtabs=false,                  
    tabsize=2
}
\newcommand{\dnameU}{Objective-Driven Dynamics}
\newcommand{\fnameU}{Objective-Driven Dynamical Stochastic Field}
\newcommand{\dname}{objective-driven dynamics}
\newcommand{\fname}{objective-driven dynamical stochastic field}
\title{\fontsize{15pt}{16pt}\selectfont A Framework for Objective-Driven Dynamical Stochastic Fields}
\author{%
Yibo Jacky Zhang\\
Stanford University\\
\texttt{yiboz@stanford.edu}
\And 
Sanmi Koyejo\\
Stanford University\\
\texttt{sanmi@stanford.edu}
}
\begin{document}
\maketitle

\begin{abstract}

    Fields offer a versatile approach for describing complex systems composed of interacting and dynamic components. In particular, some of these dynamical and stochastic systems may exhibit goal-directed behaviors aimed at achieving specific objectives, which we refer to as \textit{intelligent fields}. However, due to their inherent complexity, it remains challenging to develop a formal theoretical description of such systems and to effectively translate these descriptions into practical applications. In this paper, we propose three fundamental principles to establish a theoretical framework for understanding intelligent fields: complete configuration, locality, and purposefulness. Moreover, we explore methodologies for designing such fields from the perspective of artificial intelligence applications. This initial investigation aims to lay the groundwork for future theoretical developments and practical advances in understanding and harnessing the potential of such objective-driven dynamical stochastic fields.
        
\end{abstract}

\section{Introduction}\label{sec:intro}

A field refers to a theoretical framework that assigns configurations to every point in spacetime, capturing local interactions. Fields are ubiquitous across a wide range of systems where complex global behaviors can emerge from simple local rules. In particular, we are interested in dynamical and stochastic fields that evolve to achieve specific objectives. We refer to such fields as \textit{intelligent fields}, reflecting their capacity to model objective-driven behavior in systems ranging from artificial intelligence to neural processes.

Developing a fundamental understanding of these objective-driven dynamical stochastic fields poses a significant challenge. It raises several critical questions: What underlying principles give rise to their intricate behaviors? How can we develop a formal mathematical framework to better understand such systems? Could uncovering these principles enable us to design intelligent fields, perhaps as approaches to artificial intelligence? This paper is motivated by these abstract and foundational questions.

Our approach begins with postulating three principles that could underpin such a system. 
The first principle asserts that there is an evolving configuration that completely characterizes the dynamics of the system. The second principle, locality, suggests that the dynamics occur within a spatial-temporal context. Lastly, the principle of purposefulness proposes that the behavior of the system is directed by its objectives. 
Although seemingly simple, these principles lead us to develop a framework for objective-driven dynamical stochastic fields, i.e., the intelligent fields. 
We provide a formal mathematical development of the framework and discuss methodologies for designing the behavior of these fields with a view towards potential applications in artificial intelligence.

\subsection*{An Overview of the Framework}\label{sec:overview}

Let us first introduce, in a general way, the three principles and the framework for intelligent fields, offering an overview of the paper. We provide abstract definitions of the three principles and discuss the rationale behind them. The concrete definitions are specified in later sections.

The intuition for the first principle relates to the concept of a \textit{complete configuration}. Consider a system described by a time-varying configuration. If the configuration is complete, meaning that it does not require additional information to determine the system, then the current configuration is sufficient to describe the future. We consider stochastic systems under realism, meaning that these configurations evolve probabilistically and exist independently of observation. Consequently, a complete configuration at any given moment must fully determine the future probabilistic evolution of the system without relying on past configurations, making the dynamics Markovian.  In particular, the configuration being complete implies that the transition rule governing the evolution of the configuration can be formulated to be time-invariant, i.e., homogeneous. Otherwise, additional external information would be necessary to define how the transition rules themselves change over time, contradicting the assumption of completeness.
The following statement summarizes the above intuition.

\begin{principle}[\textbf{Complete Configuration}]\label{pinciple:config}
    The system is described by a complete configuration at every time, and the dynamics of the configuration are Markovian and time-invariant.
\end{principle}

It would be quite boring if it were merely a large, plain configuration evolving on its own. Instead, imagine the system to be composed of two entities, each characterized by its own configuration. Each entity’s configuration has two parts: one private and one shared. The private part is accessible only to the entity itself, whereas the shared part is observable by both entities. Thus, at any given moment, each entity observes its own private configuration along with the shared configurations from both entities, but not the other entity's private configuration. As a result, each entity evolves based solely on this partial information.

This is really a notion of \textit{locality}. Imagine we continue this division, and we would have a collection of entities $\gX$ equipped with a topology specifying the neighborhood relations of each $x$ in $\gX$.
Therefore, the system configuration is divided into local configurations associated with each $x\in \gX$, and the time-evolution of a local configuration only depends on the observed configurations of its neighbors. Each point in spacetime is assigned a value satisfying some local relations, and thus we can view the whole system as a field where locality is incorporated. 

\begin{principle}[\textbf{Locality}]\label{pinciple:locality}
    The system is modeled by a field, i.e., each point $x$ in the space $\gX$ is associated with a local entity, and the time-evolution of the local configuration only depends on  its local neighborhood.
\end{principle}

So far, we have described a self-evolving complex system as a dynamical stochastic field. But what governs its evolution? In physical systems, evolution is often determined by the principle of least action, where the system follows a trajectory that minimizes the action, defined as the integral of the system’s Lagrangian. Similarly, a system's stationary state may correspond to its minimum energy configuration. In the context of intelligent agents, behavior is typically directed towards maximizing cumulative reward. Likewise, machine learning models are trained to minimize a loss function. Across these diverse systems, a common theme can be observed: evolution is driven by the minimization or maximization of a value.

Following this intuition, we call this value the \textit{objective value}, as a generalization of the aforementioned concepts, and we demand our field to satisfy the minimization of this objective value.

\begin{principle}[\textbf{Purposefulness}]\label{pinciple:purpose}
    The system evolves to minimize an objective value. Combined with the locality principle, this implies that each entity in the system evolves to minimize its own objective value.
\end{principle}

Together, the three principles define an objective-driven dynamical stochastic field. This field can be conceptualized as follows: each point in space corresponds to an entity that possesses internal configurations and exchanges signals with its neighboring entities. Each entity seeks to minimize its own objective value, which is determined by local interactions. We explore mechanisms in which the local objective value arises from the objective signals generated and propagated by neighboring entities. From the perspective of artificial intelligence, these local mechanisms can be designed in specific ways to guide the evolution of the field toward achieving a desired global goal. Consequently, the system evolves over time to achieve an objective, with its dynamics described as a field in spacetime. Figure~\ref{fig:illustration} provides an illustration of this model.

\begin{figure}
    \centering
    \includegraphics[width=0.7\linewidth]{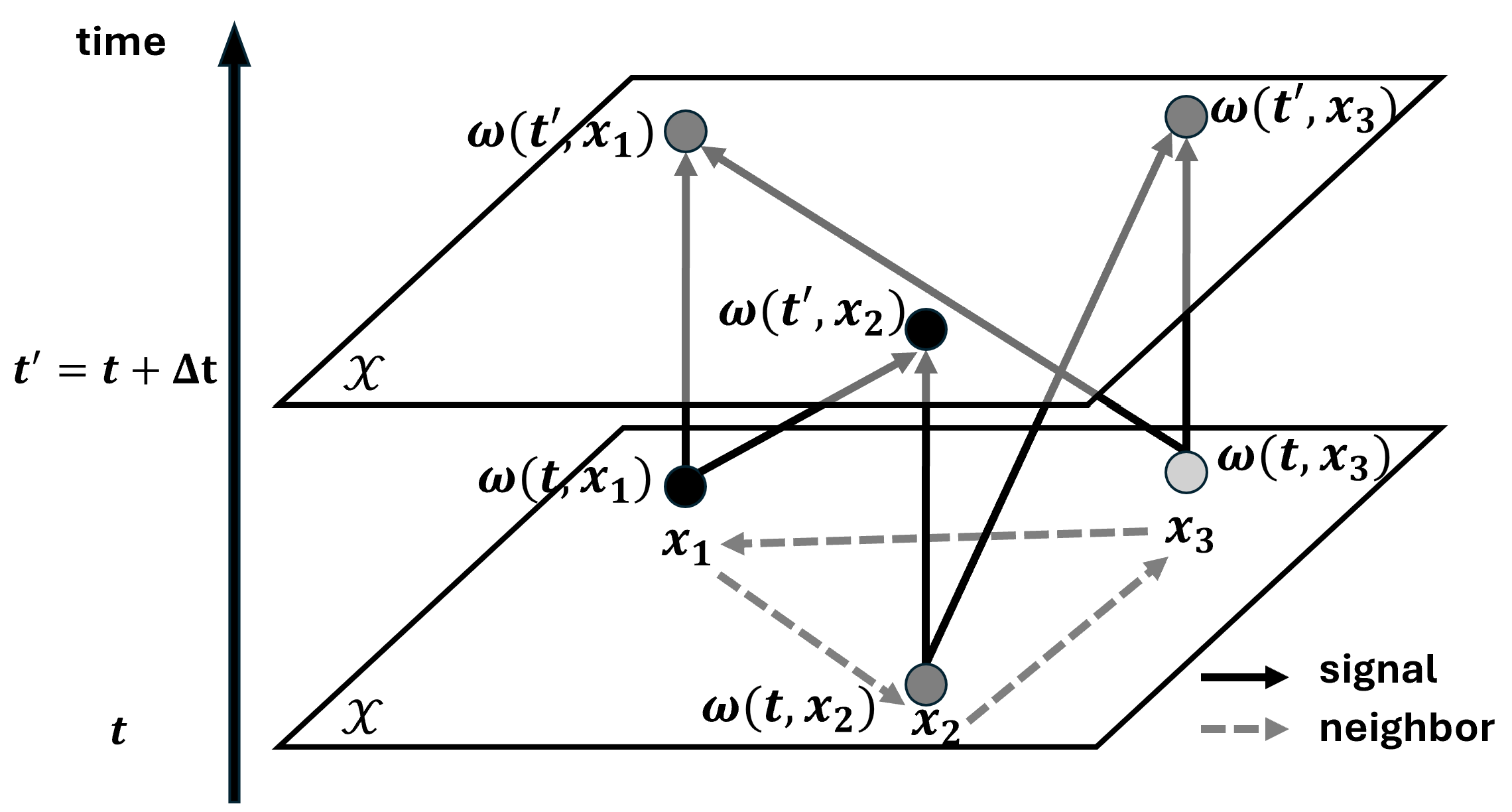}
    \caption{A spacetime diagram illustrating the evolution of local configurations $\field(t, x)$ over time for three entities $x_1, x_2, x_3$ in a discrete space $\gX$. Time progresses vertically from $t$ to $t' = t + \Delta t$, where $\Delta t$ is an infinitesimal time step. Each horizontal layer corresponds to the system at a specific time. Dashed gray arrows represent directed neighboring relationships, indicating directions of signal propagation (e.g., $x_1$ receives signals from $x_3$, but not from $x_2$). Solid arrows represent communication and objective signals, which only propagate forward in time and are limited to immediate neighbors as defined by the dashed links. This shows that the updated local configuration $\field(t', x)$ depends only on the previous configurations of the entity and its neighbors at time $t$. The local objective value is defined as the long-term average of the received objective signals, and each entity evolves to minimize its own local objective.}
    \label{fig:illustration}
\end{figure}

In this paper, we present a formal theoretical framework for analyzing and understanding such a system, which we term an intelligent field, reflecting its capacity to model objective-driven behavior. This framework will become concrete in the following sections as we make specific choices to mathematically define the three principles: complete configuration, locality, and purposefulness.

\paragraph{Structure of the Paper} 
\begin{itemize}
    \item In Section~\ref{sec:field}, we formally define the principles of complete configuration and locality. From the two principles, we build a theoretical model for the dynamical stochastic field. 
    \item Subsequently, in Section~\ref{sec:IF}, we detail the purposefulness principle, complete the dynamical stochastic field with objective-driven behaviors, and explore methodologies for designing the behavior of the system. 
    \item Section~\ref{sec:field} and Section~\ref{sec:IF} form the core technical content of this paper, with key results summarized in Section~\ref{subsec:summary-results}. The proposed framework is general and potentially interdisciplinary; its specific connections to various domains are discussed in Section~\ref{subsec:literature}. 
    \item Finally, we conclude the paper in Section~\ref{sec:conclusion}. For clarity and readability, all proofs are included in the appendix. 
\end{itemize}

\section{Dynamical Stochastic Fields}\label{sec:field}

\setcounter{principle}{0}

Let us build the theoretical tools for the dynamical stochastic fields, derived from the first two principles: complete configuration and locality. To begin, we need to provide a concrete definition of the first principle.
\begin{principle}[\textbf{Complete Configuration}]\label{pinciple:config-detail}
    The system is described by configuration $\field \in \gset$ for a set of configurations $\gset$, and the dynamics of the configuration are continuous-time Markovian and time-invariant. 
\end{principle}
In particular, this work focuses on a finite configuration set $\gset$ to maintain simplicity and clarity of presentation. %
Nevertheless, most of the theoretical development of this work is formulated in a general manner, making it straightforward to extend the theory to infinite configuration sets.

To analyze the system, we find it useful to model its dynamics in the following Hilbert space.%

\textbf{Notation.} Vectors are represented using bra–ket notation, and operators are denoted in bold font. Additional notation will be introduced throughout the paper as needed.

\begin{definition}[\textbf{Hilbert Space $\gH(\gset)$ Constructed from a Set $\gset$}]\label{def:hilbert}
    Consider a set $\gset$, let $\gH(\gset)$ denote a Hilbert space  over the reals $\sR$ constructed by specifying a set of orthonormal basis vectors. The basis vectors and the inner product are defined as: 
	\begin{align}
		\bigl\{\ket{\field}\bigr\}_{\field \in \gset},\qquad \forall \field, \field'\in \gset:\braket{\field}{ \field'} =
		\delta_{\field}^{\field'}, 
	\end{align}
	where we use the Kronecker delta notation, i.e., $\delta_{\field}^{\field'}=1$ only if $\field=\field'$ and $\delta_{\field}^{\field'}=0$ otherwise.
	
	Thus, the Hilbert space is defined to be the closure of the linear span over the reals $\sR$ of the basis vectors:
	\begin{align}
		\gH(\gset)=\vspan{\{\ket{\field } \}_{\field\in \gset}}.
	\end{align}

    When the ground set $\gset$ is clear from context, we simply use $\gH$ to denote $\gH(\gset)$. 
\end{definition}

The dynamics of the system can be described by dynamics within this Hilbert space. Concretely, elements $\sfield\in \gH$ can be used to represent a  probabilistic distribution over $\gset$.

\begin{definition}[\textbf{Normalized Vector}]\label{def:normalized}
    A vector $\sfield=\sum_{\field\in \gset} \sfcoef^{\field} \ket{\field}\in \gH$ is \textit{normalized} if it represents a probability distribution, i.e.,
\begin{align}
    \forall \field: \sfcoef^{\field}\geq 0, \ \quad \text{and}\quad  \sum_{\field\in \gset}  \sfcoef^{\field}=1 .
\end{align}
\end{definition}

Thus, given a normalized vector $\sfield$ that describes the probability distribution of a system at a particular time, the probability of finding the system in configuration $\field$ is 
\begin{align}
    \sfcoef^{\field}=\braket{\field}{\varphi} \in [0, 1].
\end{align}

Next, let us examine the time evolution of the system $\field(t)$, equivalently denoted as $\field_t$. As discussed previously, the time evolution over an interval $\Dt$ can be described by a conditional probability 
\begin{align}
    p_{\Dt}(\field'\mid \field):=p(\field(t+\Dt)=\field' \mid \field(t)=\field). \label{eq:dynamic-1}
\end{align}
We can see that the above quantity converges to the delta function $\delta_{\field}^{\field'}$ as $\Dt\to 0$.  Since the evolution of the system is Markovian, the process can  be characterized by its infinitesimal generator. To be self-contained, we define this operator as follows and show that it governs the evolution of the system. 

In order to define the infinitesimal generator, let us first define the space of linear operators.
\begin{definition}[\textbf{Space of Linear Operators $\gB(\gH(\gset), \gH(\gset'))$}]\label{def:linear-map}
	Let $\gH(\gset), \gH(\gset')$ be two Hilbert spaces constructed by $\gset, \gset'$, respectively. We denote $\gB(\gH(\gset), \gH(\gset'))$ as the space of continuous linear maps from $\gH(\gset)$ to $\gH(\gset')$. Note that a linear operator $\bG\in\gB( \gH(\gset), \gH(\gset'))$ can be represented by 
	\begin{align}
		\bG=\sum_{\field,\field'\in \gset}\ket{\field'} G^{\field'}_{\field} \bra{\field},
	\end{align}
	where $G^{\field'}_{\field}\in \sR$. This means that a linear operator $\bG$ is determined by
	\begin{align}
		G^{\field'}_{\field}=\bra{\field'}\bG\ket{\field}.
	\end{align}
\end{definition}
The infinitesimal generator operator is  such a linear operator  that maps from $\gH$ to itself.  %

\begin{definition}[\textbf{Infinitesimal Generator $\bG$}]\label{def:inf-generator}
	The infinitesimal generator $\bG\colon\gH(\gset)\to \gH(\gset)$ of the system is a linear operator $\bG\in \gB(\gH, \gH)$ determined by the transition rule $p_\Dt$ (\eqref{eq:dynamic-1}) as the following.  For all $\field', \field\in \gset$:
	\begin{align}
		G^{\field'}_{\field}=\bra{\field'}\bG\ket{\field}= \lim_{\Dt\to 0} \frac{ p_{\Dt}(\field'\mid \field)-\delta_{\field}^{\field'}}{\Dt}.
	\end{align}
	By definition, $G^{\field}_{\field}\leq 0$, $G^{\field'}_{\field}\geq 0$ if $\field'\neq \field$,  and $\sum_{\field'}G^{\field'}_{\field}=0$.
\end{definition}

The above infinitesimal generator encapsulates all the information we need to characterize the dynamics of $\ket{\varphi(t)}$, which describes the distribution of system configurations at time $t$. It provides a highly compact formulation of the system dynamics, as shown below. It is called the master equation, or a reformulation of the Kolmogorov forward equations~\citep{kolmogoroff1931analytischen, feller1940integro}.
\begin{proposition}[\textbf{Dynamics of the System}]\label{prop:dynamics-generator}
		The time evolution of the system satisfies the following first-order differential equation:
	\begin{align}
		\frac{\diff }{\dt}\ket{\varphi(t)}=\bG \ket{\varphi(t)}.
	\end{align}
	Its solution is 
	\begin{align}
		\ket{\varphi(t)}=e^{\bG t}\sfield(0).
	\end{align}
	Note that the exponential of the linear operator is defined as $
	e^{\bG t}=1+\sum_{n=1}^{\infty} \frac{1}{n!}(\bG t)^n$. 	When the context is clear,  we  may simply use $\sfield$ to denote $\ket{\varphi(t)}$ for notational ease.
\end{proposition}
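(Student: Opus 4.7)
The plan is to derive the ODE directly from the Markov property and then recognize the solution as a matrix exponential. First, I would introduce the transition operator $\mathbf{P}_{\Dt} \in \gB(\gH, \gH)$ with matrix elements $\bra{\field'}\mathbf{P}_{\Dt}\ket{\field} = p_\Dt(\field' \mid \field)$, and observe that the law of total probability together with the Markov property from Principle~\ref{pinciple:config-detail} gives $\ket{\varphi(t+\Dt)} = \mathbf{P}_\Dt \ket{\varphi(t)}$ for every $t \geq 0$ and every $\Dt > 0$. Time-invariance ensures that $\mathbf{P}_\Dt$ depends only on $\Dt$ and not on the absolute time $t$, which is what allows the same operator to act at every instant.

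Second, I would form the difference quotient
\[
\frac{\ket{\varphi(t+\Dt)} - \ket{\varphi(t)}}{\Dt} = \frac{\mathbf{P}_\Dt - \mathbf{I}}{\Dt}\ket{\varphi(t)},
\]
and take $\Dt \to 0$. By Definition~\ref{def:inf-generator}, the matrix elements of $(\mathbf{P}_\Dt - \mathbf{I})/\Dt$ converge entry-wise to $G^{\field'}_{\field}$. Because $\gset$ is finite, $\gH$ is finite-dimensional and entry-wise convergence of operator representations is equivalent to convergence in any operator norm, so the difference quotient converges to $\bG \ket{\varphi(t)}$. This establishes the master equation $\frac{\diff}{\dt}\ket{\varphi(t)} = \bG \ket{\varphi(t)}$. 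For the closed form, once this linear ODE on the finite-dimensional space $\gH$ is in hand, I would verify directly that the series $e^{\bG t} = \mathbf{I} + \sum_{n=1}^\infty \frac{1}{n!}(\bG t)^n$ converges absolutely in operator norm (since $\bG$ is bounded), differentiate term-by-term to obtain $\frac{\diff}{\dt} e^{\bG t} = \bG e^{\bG t}$, and invoke uniqueness of solutions to linear ODEs with the initial condition $e^{\bG \cdot 0} = \mathbf{I}$ to conclude $\ket{\varphi(t)} = e^{\bG t}\sfield(0)$.

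The main obstacle is one of regularity rather than conceptual depth. The argument requires that the limits in Definition~\ref{def:inf-generator} actually exist (implicit in treating $\bG$ as a well-defined operator) and that the $\Dt \to 0$ limit can be exchanged with the action on $\ket{\varphi(t)}$ and, in the solution step, with the infinite sum. The finite-dimensional hypothesis on $\gset$ makes both points automatic: all operator norms are equivalent, $\bG$ is necessarily bounded, and term-wise differentiation of the exponential series is justified by absolute convergence. In an infinite-dimensional extension one would need strong continuity of the semigroup $\{\mathbf{P}_\Dt\}_{\Dt \geq 0}$ and the Hille-Yosida machinery, but under the present finiteness assumption the proof reduces to bookkeeping.
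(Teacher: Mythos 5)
Your proposal is correct and follows essentially the same route as the paper's proof: both derive the master equation by applying the law of total probability to the difference quotient $\bigl(\ket{\varphi(t+\Dt)}-\ket{\varphi(t)}\bigr)/\Dt$ and identifying the limit with $\bG$ via Definition~\ref{def:inf-generator}, then solve the resulting linear ODE with the operator exponential. The only differences are presentational — you package the transition kernel as an operator $\mathbf{P}_\Dt$ and invoke uniqueness of ODE solutions, whereas the paper verifies the identity basis vector by basis vector and checks the exponential solution by direct differentiation — and your added remarks on finite-dimensionality justifying the limit exchanges are a welcome touch of rigor the paper leaves implicit.
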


Therefore, we have two formulations of the system dynamics: the dynamics of the configuration $\field(t)\in \gset$ itself and the dynamics of the probability distribution $\ket{\varphi(t)}\in \gH$. To distinguish the two formulations, we adopt  the following convention.

\textbf{Notation.} We refer to $\field(t)\in \gset$ as a \textit{\textbf{configuration}} of the system dynamics and a normalized vector $\ket{\varphi(t)}\in \gH$ as a \textit{\textbf{state}} of the system dynamics. A state represents a probability distribution of configurations.

In this paper, we primarily focus on the state perspective, i.e., the dynamics of the probability distribution. We will later demonstrate the equivalence of both perspectives via a path integral formulation at the end of this section. 
Before formally introducing the locality principle in Section~\ref{subsec:field-formulation}, let us warm up by examining the following simplified case.

\subsection{A Two-entity View}\label{subsec:two-entity-field}
As a first step towards the field formulation, consider a system configuration composed of two entities, $x$ and $y$. One might also interpret  this as a partition of the underlying space $\gX$.%

Entity $x$ possesses a configuration $\mu\in \gM$ and $\alpha \in \gA$, where only $\alpha$ is observable to entity $y$. Symmetrically, entity $y$ possesses a configuration $\nu\in \gN$ and $\beta \in \gB$, where only $\beta$ is observable to entity $x$. Therefore, the system configuration is described by:
\begin{align}
    \field=(\alpha, \beta, \mu, \nu)\in \gset,\qquad \text{where}\ \ \gset=\gA\times\gB\times\gM\times\gN.
\end{align}

The locality principle implies that, at time $t$, entity $x$’s immediate next behavior $(\alpha_{t+\Dt}, \mu_{t+\Dt})$ depends only on the currently observed local configurations $(\alpha_t, \beta_t, \mu_t)$, and entity $y$’s next behavior $(\beta_{t+\Dt}, \nu_{t+\Dt})$ depends only on the currently observed local configurations $(\alpha_t, \beta_t, \nu_t)$. 

Formally, the locality principle implies the immediate independence relation, as shown below. Following the notation as stated in \eqref{eq:dynamic-1}, we have
\begin{align}
    p_{\Dt}(\alpha'\beta'\mu'\nu'\mid \alpha\beta\mu\nu) =p_{\Dt}(\alpha'\mu'\mid \alpha\beta\mu)\cdot p_{\Dt}(\beta'\nu'\mid \alpha\beta\nu)+o(\Dt),
    \label{eq:3-1-1}
\end{align}
where $o(\Dt)$ satisfies $\lim_{\Dt\to 0 }\frac{o(\Dt)}{\Dt}=0$, and
\begin{align}
    \forall \nu: p_{\Dt}(\alpha'\mu'\mid \alpha\beta\mu)&=p_{\Dt}(\alpha'\mu'\mid \alpha\beta\mu\nu)+o(\Dt),\\
    \forall \mu: p_{\Dt}(\beta'\nu'\mid \alpha\beta\nu)&=p_{\Dt}(\beta'\nu'\mid \alpha\beta\mu\nu)+o(\Dt).
\end{align}

Therefore, $p_{\Dt}(\alpha'\mu'\mid \alpha\beta\mu)$ characterizes the behavior of entity $x$, while $p_{\Dt}(\beta'\nu'\mid \alpha\beta\nu)$ characterizes the behavior of entity $y$. Together, they define the dynamics of the entire system. 

It is thus interesting to investigate whether we can similarly formulate infinitesimal generators for each of the two entities. To achieve this,  we follow Definition~\ref{def:hilbert} to construct the corresponding Hilbert spaces and Definition~\ref{def:inf-generator} to formulate the infinitesimal generators.

\begin{definition}[\textbf{Infinitesimal Generators $\bM, \bN$ for Two Entities}]\label{def:generator-entity}
	The infinitesimal generator $\bM\colon\gH(\gA\times \gB\times \gM)\to \gH(\gA\times \gM)$ of  entity $x$ is a linear operator determined by 
	\begin{align}
		\bra{\alpha' \mu' }\bM \ket{\alpha \beta \mu }= \lim_{\Dt\to 0} \frac{ p_{ \Dt}(\alpha' \mu' \mid \alpha \beta \mu)-\delta_{\alpha  \mu }^{\alpha' \mu' }}{\Dt}.
	\end{align}
	Similarly, the infinitesimal generator $\bN\colon\gH(\gA\times \gB\times \gN)\to \gH(\gB\times \gN)$ of  entity $y$ is a linear operator determined by
	\begin{align}
		\bra{\beta' \nu' }\bN \ket{\alpha \beta \nu }= \lim_{\Dt\to 0} \frac{ p_{ \Dt}(\beta' \nu' \mid \alpha \beta \nu)-\delta_{\beta\nu}^{\beta'\nu'}}{\Dt}.
	\end{align}
\end{definition}

As we can see, the operator $\bM$ encodes all the information about the transition rules of entity $x$, and $\bN$ encodes the transition rules of entity $y$. Therefore, one should expect that the generator of the whole system, $\bG$, can be expressed in terms of $\bM$ and $\bN$. In order to do this, we need to embed both $\bM, \bN$ into the space $\gB(\gH, \gH)$, the space of continuous linear operators mapping from $\gH(\gset)$ to itself, where $\bG$ resides. The embedding needs to be done such that we can obtain $\bG$ from $\bM$ and $\bN$. Luckily, as we shall see, such embedding arises straightforwardly once we introduce the necessary technical machinery.

\begin{fact}\label{fact:iso-tensor}
    Given two sets $\gset,\gset'$, the Hilbert space $\gH(\gset\times \gset')$ and the tensor product $\gH(\gset )\otimes \gH(\gset')$ are isometrically isomorphic:
    \begin{align}
        \gH(\gset\times \gset') \cong \gH(\gset )\otimes \gH(\gset'),
    \end{align}
    i.e., there exists a bijective linear map between the two spaces that preserves the inner product. This can be intuitively viewed as an equivalence between the two spaces.
\end{fact}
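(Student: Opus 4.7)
The plan is to construct an explicit isometric isomorphism $\Phi\colon \gH(\gset\times\gset')\to \gH(\gset)\otimes\gH(\gset')$ by defining it on basis vectors and extending by linearity (and continuity, though for the finite sets considered here the span is already closed). Specifically, I would define
\begin{align}
    \Phi\bigl(\ket{(\field,\field')}\bigr) = \ket{\field}\otimes \ket{\field'}\qquad \text{for all }(\field,\field')\in \gset\times\gset',
\end{align}
and then extend $\sR$-linearly to all of $\gH(\gset\times \gset')$. The reverse map $\Psi$ is defined by $\Psi(\ket{\field}\otimes \ket{\field'})=\ket{(\field,\field')}$ on the tensor-product basis and extended linearly, giving a formal candidate for the inverse.

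Next I would verify the key properties in order. First, well-definedness: since $\Phi$ is defined on the orthonormal basis $\{\ket{(\field,\field')}\}_{(\field,\field')\in \gset\times\gset'}$ of $\gH(\gset\times\gset')$, linear extension produces a bona fide linear map. Second, isometry: using the inner product on $\gH(\gset\times \gset')$, we have $\braket{(\field_1,\field_1')}{(\field_2,\field_2')}=\delta_{(\field_1,\field_1')}^{(\field_2,\field_2')}=\delta_{\field_1}^{\field_2}\,\delta_{\field_1'}^{\field_2'}$, whereas on the tensor product
\begin{align}
    \bigl(\bra{\field_1}\otimes\bra{\field_1'}\bigr)\bigl(\ket{\field_2}\otimes\ket{\field_2'}\bigr) = \braket{\field_1}{\field_2}\cdot \braket{\field_1'}{\field_2'}= \delta_{\field_1}^{\field_2}\,\delta_{\field_1'}^{\field_2'}.
\end{align}
These agree on all basis pairs, so by sesquilinearity (here, bilinearity over $\sR$) $\Phi$ preserves the inner product on arbitrary vectors. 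Third, bijectivity: the images $\{\ket{\field}\otimes \ket{\field'}\}$ form the standard orthonormal basis of $\gH(\gset)\otimes \gH(\gset')$, so $\Phi$ sends a basis bijectively to a basis; together with $\Psi$ being its two-sided inverse on basis elements, linearity gives $\Psi\circ \Phi=\mathrm{id}$ and $\Phi\circ \Psi=\mathrm{id}$.

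I do not anticipate a substantive obstacle; the statement is essentially a definitional unwinding once one fixes a concrete realization of $\otimes$. The only mild subtlety worth flagging is ensuring that the ``tensor product of Hilbert spaces'' used here is indeed the one generated by formal tensors of basis vectors with the inner product $\langle a\otimes b, c\otimes d\rangle = \langle a,c\rangle\langle b,d\rangle$, extended bilinearly, since this is what makes the bases align. For infinite $\gset$ one would additionally need to invoke continuity of $\Phi$ to extend from the algebraic span to its Hilbert-space closure, but since the paper restricts to finite configuration sets the algebraic argument suffices.
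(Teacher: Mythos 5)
Your proposal is correct and matches the paper's own argument, which constructs the very same basis map $\ket{\field\field'}\mapsto \ket{\field}\otimes\ket{\field'}$ and asserts bijectivity and inner-product preservation; you simply spell out the verification the paper leaves to the reader. No gaps.
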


Therefore, we may embed $\gH(\gA\times \gM)$ in $\gH(\gset)=\gH(\gA\times\gB\times\gM\times\gN)$ by using the tensor product  $\gH(\gB )\otimes \gH(\gN)$. 

\textbf{Notation.} We adopt the convention that all tensor products are automatically ordered, e.g.,  $\gH(\gA )\otimes \gH(\gM) \otimes \gH(\gB )\otimes \gH(\gN)$ is equivalent to $\gH(\gA )\otimes \gH(\gB) \otimes \gH(\gM )\otimes \gH(\gN)$. Therefore, the tensor product of basis vectors $\ket{\alpha\mu}\in \gH(\gA\times \gM)$ and $\ket{\beta\nu}\in \gH(\gB\times \gN)$ is $\ket{\alpha\mu}\otimes \ket{\beta\nu}=\ket{\alpha\beta\mu\nu}\in \gH$. Formally, this means we work in a quotient space obtained by identifying tensor products differing only by ordering. For ease of notation, however, we omit explicit references to this quotient operation and simply adopt this ordering as a standard convention throughout this paper. 

Thus, the embeddings are stated in the following definition. 
\begin{definition}[\textbf{Embed $\bM, \bN$ in $\gB(\gH, \gH)$}]\label{def:generator-embed}
	Given  infinitesimal generator $\bM\colon\gH(\gA\times \gB\times \gM)\to \gH(\gA\times \gM)$ of the entity $x$, and  $\bN\colon\gH(\gA\times \gB\times \gN)\to \gH(\gB\times \gN)$ of the entity $y$, we define their embeddings $\bar \bM, \bar \bN$ in the space $\gB(\gH, \gH)$  as the following.  Given  basis vectors $ \ket{\alpha\beta\mu\nu}\in \gH$,
	\begin{align}
		\bar \bM\ket{\alpha\beta\mu\nu}&:= (\bM\ket{\alpha\beta\mu}) \otimes \ket{\beta\nu},\\
		\bar \bN\ket{\alpha\beta\mu\nu}&:= (\bN\ket{\alpha\beta\nu}) \otimes \ket{\alpha\mu}.
	\end{align}
    When it is clear in the context, we simply use $\bM, \bN$ to denote their embeddings $\bar \bM, \bar \bN$, respectively.
\end{definition}
As we can see, the embedding $\bM\in \gB(\gH, \gH)$ is a generator of the whole system where only the entity $x$ is active; and $ \bN\in  \gB(\gH, \gH)$ is a generator of the whole system where only the entity $y$ is active.

Indeed, the generator of the whole system $\bG$, can be expressed in terms of the two generators $ \bM$ and $\bN$ of the individual entities. Moreover,  as we shall show next, the relationship between $\bG$ and $ \bM, \bN$ exhibits a particularly elegant structure. 

\begin{proposition}[\textbf{Decomposition of the Infinitesimal Generator $\bG$}]\label{prop:decomposition}
	The locality principle in the two-entity view (\eqref{eq:3-1-1}) implies
	\begin{align}
		\bG= \bM +  \bN.
	\end{align}
\end{proposition}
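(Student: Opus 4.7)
The plan is to verify the decomposition at the level of matrix elements, i.e., show that $G^{\alpha'\beta'\mu'\nu'}_{\alpha\beta\mu\nu} = \bra{\alpha'\beta'\mu'\nu'}(\bar\bM + \bar\bN)\ket{\alpha\beta\mu\nu}$ for every pair of basis vectors. Since $\bG,\bar\bM,\bar\bN$ are all continuous linear operators on $\gH$, agreement on the basis extends to agreement as operators.

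First I would apply Definition~\ref{def:inf-generator} to $\bG$ and substitute the locality factorization \eqref{eq:3-1-1} into the numerator, giving
\begin{align*}
	G^{\alpha'\beta'\mu'\nu'}_{\alpha\beta\mu\nu} = \lim_{\Dt\to 0}\frac{p_{\Dt}(\alpha'\mu'\mid\alpha\beta\mu)\,p_{\Dt}(\beta'\nu'\mid\alpha\beta\nu) - \delta^{\alpha'\beta'\mu'\nu'}_{\alpha\beta\mu\nu}}{\Dt},
\end{align*}
where the $o(\Dt)$ from \eqref{eq:3-1-1} vanishes in the limit. Next, I would expand each factor using Definition~\ref{def:generator-entity}, writing $p_{\Dt}(\alpha'\mu'\mid \alpha\beta\mu) = \delta^{\alpha'\mu'}_{\alpha\mu} + \Dt\, M^{\alpha'\mu'}_{\alpha\beta\mu} + o(\Dt)$ and analogously for the $\bN$ piece. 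Multiplying these expansions, the leading term $\delta^{\alpha'\mu'}_{\alpha\mu}\delta^{\beta'\nu'}_{\beta\nu}$ equals $\delta^{\alpha'\beta'\mu'\nu'}_{\alpha\beta\mu\nu}$ and cancels the subtracted Kronecker delta. The order-$\Dt$ cross terms $\Dt\, M^{\alpha'\mu'}_{\alpha\beta\mu}\delta^{\beta'\nu'}_{\beta\nu}$ and $\Dt\, N^{\beta'\nu'}_{\alpha\beta\nu}\delta^{\alpha'\mu'}_{\alpha\mu}$ survive, while the $\Dt^2$ product term and all $o(\Dt)$ contributions drop out upon dividing by $\Dt$ and taking the limit.

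Finally I would match these surviving terms to the embeddings of Definition~\ref{def:generator-embed}. Computing
\begin{align*}
	\bra{\alpha'\beta'\mu'\nu'}\bar\bM\ket{\alpha\beta\mu\nu} = \bra{\alpha'\mu'}\bM\ket{\alpha\beta\mu}\braket{\beta'\nu'}{\beta\nu} = M^{\alpha'\mu'}_{\alpha\beta\mu}\delta^{\beta'\nu'}_{\beta\nu},
\end{align*}
and similarly $\bra{\alpha'\beta'\mu'\nu'}\bar\bN\ket{\alpha\beta\mu\nu} = N^{\beta'\nu'}_{\alpha\beta\nu}\delta^{\alpha'\mu'}_{\alpha\mu}$, identifies the limit with $\bra{\alpha'\beta'\mu'\nu'}(\bar\bM+\bar\bN)\ket{\alpha\beta\mu\nu}$, as desired.

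The main obstacle is bookkeeping rather than any deep difficulty: one must verify that the $o(\Dt)$ from the locality assumption combines properly with the $o(\Dt)$ produced by each factor, and in particular that cross products such as $\Dt\, M\cdot o(\Dt)$ and $o(\Dt)\cdot o(\Dt)$ genuinely drop out after dividing by $\Dt$. A useful sanity check is that $\bG$, $\bar\bM$, $\bar\bN$ all send normalized vectors to vectors summing to zero (conservation of total probability), which is consistent with the identity. Since the finiteness of $\gset$ makes $\gB(\gH,\gH)$ finite-dimensional, no subtleties about domain of definition or closure arise, and matrix-element equality immediately yields the operator identity $\bG = \bar\bM + \bar\bN$.
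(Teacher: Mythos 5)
Your proposal is correct and follows essentially the same route as the paper's proof: expand each conditional probability as $\delta + \Dt\,(\text{generator matrix element}) + o(\Dt)$, multiply using the locality factorization, cancel the Kronecker deltas, discard the $O(\Dt^2)$ and $o(\Dt)$ terms after dividing by $\Dt$, and identify the surviving cross terms with the embedded operators $\bar\bM$ and $\bar\bN$ via $\bra{\alpha'\beta'\mu'\nu'}\bar\bM\ket{\alpha\beta\mu\nu}=M^{\alpha'\mu'}_{\alpha\beta\mu}\delta^{\beta'\nu'}_{\beta\nu}$. No gaps; the bookkeeping concerns you flag are exactly the ones the paper handles implicitly by absorbing higher-order terms into $o(\Dt)$.
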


We have so far considered the case of a system consisting of two entities. Next, we extend this theoretical model to multiple interacting entities that collectively form a field.

\subsection{Field Formulation}\label{subsec:field-formulation}

Imagine a system composed of multiple entities, each labeled by $x\in \gX$. The locality principle implies that each entity interacts only with its neighbors. In the abstract formulation of our model, as introduced in Section~\ref{sec:intro}, the ground space $\gX$ is only required to have a topology that defines neighborhood relations. In this paper, we specifically model this space as a finite directed graph, leaving more general cases for future study. 

\textbf{Notation.} Following the previous section, each entity has its own configuration $\mu(x)\in \gM(x)$, it sends $\alpha(x, x')\in \gA(x, x')$ to $x'$, and it receives $\alpha(x'', x)\in \gA(x'', x)$ from $x''$. Thus, the space $\gX$ can be modeled by a directed graph. We denote $x\to x'$ if there is an edge going from $x$ to $x'$; and $x\sim x'$ if either $x\to x'$ or $x'\to x$. 

\begin{definition}[\textbf{The Configuration Space of a Field}]\label{def:space}
	In the field formulation, we consider $\gX$ being modeled by a directed graph.  The set system configuration is the product of local configurations, 
	\begin{align}
		\gset=\prod_{x\in \gX} \gset_x,
	\end{align}
	where each local configuration $\gset_x$ is what the local entity $x$ can act on, i.e.,
	\begin{align}
		\gset_x=\gM(x)\times \prod_{x': x\to x'}\gA(x, x').
	\end{align}
	Moreover, we denote the set of configurations observable by entity $x$ as follows:
	\begin{align}
		\nbhd_x=\gM(x)\times \prod_{x': x\to x'}\gA(x, x')\times \prod_{x': x'\to x}\gA(x', x).
	\end{align}
	Therefore, given a system configuration $\field\in \gset$, there are canonical  projections that map it to $\gset_x$ and $\nbhd_x$ via restrictions. We denote these projections as
	\begin{align}
		\field_{|\gset_x}\in \gset_x, \qquad \field_{|\nbhd_x}\in \nbhd_x.
	\end{align}
	In particular, when considering the system dynamics $\field_t$, we denote
	\begin{align}
		\field(t, x):=\field_{t|\gset_x}.
	\end{align}
\end{definition}

Therefore, the locality principle implies that the local dynamics of $\field(t, x)$ only depend on its immediate neighborhood $\nbhd_x$. Formally, it implies an independence relation generalized from \eqref{eq:3-1-1}.

\begin{principle}[\textbf{Locality}]\label{pinciple:locality-detail}
 	The time-evolution of a local configuration $\field(t, x)$ only depends on its local neighborhood $\nbhd_x$. Formally, 
	\begin{align}
		p_\Dt(\field'\mid\field)=\prod_x p_\Dt(\field'_{|\gset_x}\mid\field_{|\nbhd_x}) + o(\Dt),
	\end{align}
	where $p_\Dt(\field'_{|\gset_x}\mid\field_{|\nbhd_x})=p_\Dt(\field'_{|\gset_x}\mid\field)+o(\Dt)$.
\end{principle}

Similarly to the two-entity formulation, the local dynamics of $\field(t, x)$ can be modeled by their local infinitesimal generators.

\begin{definition}[\textbf{Local Infinitesimal Generators $\bG(x)$}]\label{def:generator-field}
	The infinitesimal generator $\bG(x)\colon\gH(\nbhd_x)\to \gH(\gset_x)$ of  entity $x$ is a linear operator determined by 
	\begin{align}
		\bra{\field'_{|\gset_x} } \bG(x) \ket{\field_{|\nbhd_x} }= \lim_{\Dt\to 0} \frac{ p_{ \Dt}(\field'_{|\gset_x} \mid \field_{|\nbhd_x} )-\delta_{\field_{|\gset_x}  }^{\field'_{|\gset_x}}}{\Dt}.
	\end{align}
	We can embed $\bG(x)$ in the larger space $\gB(\gH, \gH)$, the same space in which the system's generator $\bG\colon\gH(\gset)\to \gH(\gset)$ resides. The embedding $\bar \bG(x)$ is determined by
	\begin{align}
		\bar \bG(x) \ket{\field}= (\bG(x)\ket{\field_{|\nbhd_x}})\otimes \ket{\field_{|\gset/\gset_x}},
	\end{align}
	where $\gset/\gset_x=\prod_{x':x'\neq x}\gset_{x'}$ is taking the quotient. For simplicity, we also use $\bG(x)$ to denote its embedding $\bar{\bG}(x)$, as the context will make it clear which version is being referred to.
\end{definition}

One could expect, similar to Proposition~\ref{prop:decomposition}, that the system's generator $\bG$ can be expressed in terms of the local generators $\bG(x)$ in an elegant manner. This is indeed the case, as shown in the following theorem. 

\begin{theorem}[\textbf{Decomposition of the Infinitesimal Generator $\bG$}]\label{thm:decomposition-field}
	Locality (Principle~\ref{pinciple:locality-detail}) implies
	\begin{align}
		\bG= \sum_x \bG(x).
	\end{align}
\end{theorem}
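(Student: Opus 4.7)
The plan is to mimic the two-entity argument (Proposition~\ref{prop:decomposition}) at the level of matrix elements in the basis $\{\ket{\field}\}_{\field\in\gset}$. Since both $\bG$ and each embedded $\bG(x)$ are linear operators on $\gH$, it suffices to verify the identity $\bra{\field'}\bG\ket{\field}=\sum_x\bra{\field'}\bG(x)\ket{\field}$ for every pair $\field,\field'\in\gset$. The left-hand side is, by Definition~\ref{def:inf-generator}, the limit $\lim_{\Dt\to 0}(p_\Dt(\field'\mid\field)-\delta_\field^{\field'})/\Dt$; the right-hand side, by the embedding in Definition~\ref{def:generator-field}, equals $\sum_x G(x)^{\field'_{|\gset_x}}_{\field_{|\nbhd_x}}\prod_{x'\neq x}\delta_{\field_{|\gset_{x'}}}^{\field'_{|\gset_{x'}}}$. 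So the whole proof reduces to computing the asymptotics of $p_\Dt(\field'\mid\field)$ as $\Dt\to 0$.

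First I would apply Principle~\ref{pinciple:locality-detail} to write
\begin{align}
p_\Dt(\field'\mid\field)=\prod_{x\in\gX}p_\Dt(\field'_{|\gset_x}\mid\field_{|\nbhd_x})+o(\Dt),
\end{align}
and then Taylor-expand each factor using Definition~\ref{def:generator-field}, namely
\begin{align}
p_\Dt(\field'_{|\gset_x}\mid\field_{|\nbhd_x})=\delta_{\field_{|\gset_x}}^{\field'_{|\gset_x}}+\Dt\, G(x)^{\field'_{|\gset_x}}_{\field_{|\nbhd_x}}+o(\Dt).
\end{align}
Next I would multiply these expansions out across all $x\in\gX$. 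The zeroth-order term is $\prod_x\delta_{\field_{|\gset_x}}^{\field'_{|\gset_x}}=\delta_\field^{\field'}$, which cancels against the $-\delta_\field^{\field'}$ in the generator definition. The first-order terms in $\Dt$ produce exactly $\Dt\sum_x G(x)^{\field'_{|\gset_x}}_{\field_{|\nbhd_x}}\prod_{x'\neq x}\delta_{\field_{|\gset_{x'}}}^{\field'_{|\gset_{x'}}}$. All remaining cross terms involve at least two $\Dt$ factors and are therefore absorbed into $o(\Dt)$, as is the aggregate $o(\Dt)$ already appearing in the locality decomposition. Dividing by $\Dt$ and letting $\Dt\to 0$ yields the desired matrix-element identity.

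The main obstacle, and the only subtle point, is controlling the remainder in the product expansion: one must argue that the product of finitely many $(\delta + \Dt\, G(x) + o(\Dt))$ factors really has total error $o(\Dt)$ rather than $O(\Dt)$. This is where finiteness of $\gset$ (and hence of $\gX$) enters, since it guarantees a uniform bound on the $G(x)^{\field'_{|\gset_x}}_{\field_{|\nbhd_x}}$ entries and a uniform $o(\Dt)$ estimate, so that multinomial cross terms of order $\Dt^k$ with $k\geq 2$ contribute $O(\Dt^2)=o(\Dt)$ overall. Once that bookkeeping is settled, the matching between the first-order coefficient and the embedded sum $\sum_x\bG(x)$ is immediate from Definition~\ref{def:generator-field}, giving $\bG=\sum_x\bG(x)$.
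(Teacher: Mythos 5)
Your proposal is correct and follows essentially the same route as the paper's own proof: expand each local factor $p_\Dt(\field'_{|\gset_x}\mid\field_{|\nbhd_x})$ as $\delta_{\field_{|\gset_x}}^{\field'_{|\gset_x}}+\Dt\,G(x)^{\field'_{|\gset_x}}_{\field_{|\nbhd_x}}+o(\Dt)$, multiply across $x$, match the first-order coefficient with the embedded sum, and absorb all cross terms of order $\Dt^2$ or higher into $o(\Dt)$ using finiteness of $\gX$ and $\gset$. The remainder bookkeeping you flag as the subtle point is handled implicitly in the paper in the same way.
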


In fact, there is another consequence we may expect from the principle of locality, i.e., the commutation relations of the local generators. Intuitively, $\bG(x)$ represents an immediate action at point $x$. Thus, if another point $x'$ is distant from $x$, then the order of applying $\bG(x)$ and $\bG(x')$ should not matter. Indeed, in such cases, these operators commute.
\begin{proposition}[\textbf{Commutation Relations of the Local Generators}]\label{prop:commute}
	If $x, x'$ are not neighbors, i.e., $x \nsim x'$, their generators commute. Formally,  
	\begin{align}
		[\bG(x), \bG(x')]=0, \qquad \text{if } x\nsim x',
	\end{align}
	where the commutator $[\bG(x), \bG(x')]=\bG(x)\bG(x')-\bG(x')\bG(x)$.
\end{proposition}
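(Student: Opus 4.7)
The plan is to reduce the commutation relation to the algebraic fact that operators acting on disjoint tensor factors of a Hilbert space always commute, after first establishing that the non-neighbor condition $x \nsim x'$ forces the relevant factor sets to be disjoint.

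First, I would unpack the tensor structure of $\gH(\gset)$ using Fact~\ref{fact:iso-tensor} repeatedly to decompose it as $\gH(\gset) \cong \bigotimes_{y \in \gX} \gH(\gM(y)) \otimes \bigotimes_{y \to z} \gH(\gA(y, z))$. From Definition~\ref{def:generator-field}, the embedded $\bG(x)$ acts as the identity on every tensor factor outside of $\nbhd_x$: it reads coordinates in $\nbhd_x$, produces an element of $\gH(\gset_x) \subseteq \gH(\nbhd_x)$, and tensors with $\ket{\field_{|\gset/\gset_x}}$ unchanged. In particular, $\bG(x)$ is the identity on every factor corresponding to an atomic piece of configuration that is not in $\nbhd_x$.

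The second step is the combinatorial core. I would verify that when $x \nsim x'$, the atomic factor sets $\nbhd_x$ and $\nbhd_{x'}$ are disjoint. The check is a short case analysis on the three types of factors: (i) $\gM(x)$ and $\gM(x')$ differ since $x \neq x'$; (ii) a shared edge factor of the form $\gA(x, y) = \gA(y', x')$ would force $y = x'$, hence $x \to x'$, contradicting $x \nsim x'$; (iii) the symmetric case $\gA(y, x) = \gA(x', y')$ forces $x' \to x$, again impossible; and (iv) the remaining two cases $\gA(x, y) = \gA(x', y')$ and $\gA(y, x) = \gA(y', x')$ require $x = x'$. This yields $\nbhd_x \cap \nbhd_{x'} = \emptyset$, so in particular the output factors satisfy $\gset_x \cap \nbhd_{x'} = \emptyset$ and $\gset_{x'} \cap \nbhd_x = \emptyset$.

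With disjointness in hand, the third step is immediate. Write $\gH \cong \gH(\nbhd_x) \otimes \gH(\nbhd_{x'}) \otimes \gH(\gR)$, where $\gR$ collects the remaining factors. Under this decomposition the embeddings factorize as $\bG(x) = \bG(x)_{\mathrm{loc}} \otimes \mathbf{I}_{\nbhd_{x'}} \otimes \mathbf{I}_\gR$ and $\bG(x') = \mathbf{I}_{\nbhd_x} \otimes \bG(x')_{\mathrm{loc}} \otimes \mathbf{I}_\gR$. A computation on simple tensors $\ket{a} \otimes \ket{b} \otimes \ket{c}$ shows both $\bG(x) \bG(x')$ and $\bG(x') \bG(x)$ equal $(\bG(x)_{\mathrm{loc}} \ket{a}) \otimes (\bG(x')_{\mathrm{loc}} \ket{b}) \otimes \ket{c}$; extending by linearity and continuity gives $[\bG(x), \bG(x')] = 0$.

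The main obstacle is not conceptual but notational: one must track carefully which factors each embedded generator touches, because $\gset_x$ is contained in $\nbhd_x$ and edge factors $\gA(y, z)$ live simultaneously in $\nbhd_y$ (as an outgoing edge) and $\nbhd_z$ (as an incoming edge). Once the case analysis in step two rules out shared edge factors between $\nbhd_x$ and $\nbhd_{x'}$, the tensor factorization becomes clean and the commutation follows mechanically.
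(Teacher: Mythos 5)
Your proposal is correct and follows essentially the same route as the paper's proof: both reduce the claim to the observation that $x \nsim x'$ makes the factor sets underlying $\nbhd_x$ and $\nbhd_{x'}$ disjoint, so the embedded generators act on disjoint tensor factors and commute on basis vectors. The only difference is that you spell out the case analysis establishing this disjointness (which the paper simply asserts), a worthwhile addition but not a change of method.
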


\subsection{A Path Integral Formalism}\label{subsec:path-integral}
The infinitesimal generator formalism describes how the system state, i.e., the probability distribution over configurations, evolves over time. Alternatively, one may take a perspective that directly traces  the trajectory of the evolving configuration itself, which is characterized by the path integral formalism. As we will show, this perspective offers some new insights.

We begin by considering the entire system and examining the probability that it starts in configuration $\field_0=\field$ at time $0$ and ends up in configuration $\field_T=\field'$ at time $T$. This propagation probability is given by:
\begin{align}
	p_T(\field'\mid\field)= \bra{\field'} e^{\bG T} \ket{\field}. 
\end{align}
We can see that the system may go from $\field$ to $\field'$ via multiple possible paths, each with a different probability. Therefore, one may be curious to observe what happens in the middle and what path the system takes. To do this, let us try to observe the system every $\Dt=T/N$ time for $N$ times. Suppose we observe a path
\begin{align}
	\field_{0:T, \Dt}:=(\field_0, \field_{\Dt}, \field_{2\Dt},\dots,\field_{(N-1)\Dt}, \field_T).
\end{align}
The probability of observing such a path, given that the system starts at $\field_0$, is
\begin{align}
	p(\field_{0:T, \Dt}\mid\field_0)=\prod_{n=1}^N p_\Dt(\field_{n\Dt}\mid\field_{(n-1)\Dt})=\prod_{n=1}^N \bra{\field_{n\Dt}} e^{\bG\Dt} \ket{\field_{(n-1)\Dt}}
\end{align}
Therefore, the probability $p_T(\field'\mid\field)$ can be obtained by summing over all possible paths that start at $\field_0=\field$ and end at $\field_T=\field'$, i.e.,
\begin{align}
	p_T(\field'\mid\field)=\sum_{\field_{0:T, \Dt}}^{\field\to \field'} p(\field_{0:T, \Dt}\mid\field_0)= \sum_{\field_{0:T, \Dt}}^{\field\to \field'}\prod_{n=1}^N \bra{\field_{n\Dt}} e^{\bG\Dt} \ket{\field_{(n-1)\Dt}},
\end{align}
where $\sum_{\field_{0:T, \Dt}}^{\field\to \field'}$ denotes summation over all paths with $\Dt$ intervals that start at $\field_0=\field$ and end at $\field_T=\field'$. 

Observe that, in the above equation, the left-hand side has nothing to do with the time interval $\Dt$. Thus, it would remain the same as we take the limit $\Dt\to 0$ (equivalently, $N\to \infty$) on the right-hand side.  

It turns out that, with some algebraic manipulations, we can obtain something analogous to the Lagrangian of a physical system. However, unlike systems with a smooth configuration space, our system evolves in a discrete configuration set $\gset$. To properly analyze its jump behavior, we require a specialized function, the unit impulse function, which is a formulation of the Dirac delta function. This should not be confused with the delta indicator $\delta_\field^{\field'}$ that we have been using. 

\begin{definition}[\textbf{Unit Impulse}]\label{def:unit-impulse}
    Given a path of configurations $\field_t\in \gset$, we denote the corresponding unit impulse function, as a reformulation of the Dirac delta function:
\begin{align}
	\delta({\field_t^+\neq \field_t})=
	\begin{cases}
		\infty,\quad &\text{if } \field_t^+\neq \field_t\\
		0, & \text{otherwise},
	\end{cases}
\end{align}
where we use $\field^+_t$ to denote the next immediate configuration. Given a path having jumps occur at time $t_1,t_2,\dots, t_k$, for any function $f\colon\sR\to \sR$, we have
\begin{align}
    \int \dt \ \delta({\field_t^+\neq \field_t})\cdot f(t)=\sum_{i} f(t_i).
\end{align}
\end{definition}

As we will demonstrate shortly, we can identify a term that closely resembles the Lagrangian of a physical system. More specifically, this term appears precisely in the position where the Lagrangian typically appears in the path integral formalism of a physical system.

\begin{definition}[\textbf{The Lagrangian of the System}]\label{def:lagrangian}
    Given the infinitesimal generator $\bG$ of the system, we define
	\begin{align}
		L(w_t, w_t^+):=-G^{\field_t}_{\field_t}-\delta({\field_t^+\neq \field_t})\cdot \log  G^{\field_{t}^+}_{\field_{t}} .
	\end{align}	
\end{definition}
We may also recognize this term as a modified log-likelihood, since $G^{\field_{t}^+}_{\field_{t}}$ represents the probability of the system transitioning  from $\field_{t}$ to $\field_{t}^+$.

By restricting the involved quantities to local neighborhoods, we obtain the field Lagrangian.
\begin{definition}[\textbf{Field Lagrangian}]\label{def:lagrangian-field}
	Given the infinitesimal generator $\bG=\sum_x \bG(x)$ of the system, the field Lagrangian is 
	\begin{align}
		L(w_{t, x}, w_{t, x}^+) := -G(x)^{{\field_t}_{|\gset_x}}_{{\field_t}_{|\nbhd_x}}- \delta({\field_t^+}_{|\gset_x}\neq {\field_t}_{|\gset_x})\cdot \log \left( G(x)^{{\field_t^+}_{|\gset_x}}_{{\field_t}_{|\nbhd_x}}\right).
	\end{align}
\end{definition}
The field Lagrangian is a natural restriction of the system Lagrangian to the local neighborhood, and their relationship is analogous to the decomposition of the generators. 
\begin{proposition}[\textbf{Decomposition of the Lagrangian}]\label{prop:decomp-Lagrangian}
\begin{align}
    L(w_t, w_t^+)=\sum_x L(w_{t, x}, w_{t, x}^+).
\end{align}    
\end{proposition}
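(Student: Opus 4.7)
The plan is to split the system Lagrangian into its drift piece $-G^{\field_t}_{\field_t}$ and its jump piece $-\delta(\field_t^+\neq\field_t)\log G^{\field_t^+}_{\field_t}$, and to verify that each decomposes in the claimed way. The workhorse throughout will be Theorem~\ref{thm:decomposition-field}, $\bG=\sum_x \bG(x)$, combined with the tensor-product embedding of the local generators from Definition~\ref{def:generator-field}, which yields for any $\field,\field'\in \gset$ the matrix-element identity
\begin{align}
\bra{\field'}\bG(x)\ket{\field}=G(x)^{\field'_{|\gset_x}}_{\field_{|\nbhd_x}}\cdot \delta^{\field'_{|\gset/\gset_x}}_{\field_{|\gset/\gset_x}},
\end{align}
whose second factor is a Kronecker delta that vanishes unless $\field$ and $\field'$ agree off $\gset_x$.

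For the drift piece, I would take the diagonal matrix element $\bra{\field_t}\bG\ket{\field_t}=\sum_x\bra{\field_t}\bG(x)\ket{\field_t}$ and apply the embedding identity; the Kronecker factor trivially evaluates to $1$, yielding $G^{\field_t}_{\field_t}=\sum_x G(x)^{{\field_t}_{|\gset_x}}_{{\field_t}_{|\nbhd_x}}$, which accounts for the first summand of $L(w_{t,x},w_{t,x}^+)$ on the right-hand side.

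The jump piece requires a preliminary locality lemma: with probability one, every jump of the process changes the configuration in exactly one cell $\gset_{x^\ast}$. I would prove this by evaluating the off-diagonal rate $G^{\field'}_{\field}$ for $\field'\neq\field$ via the same embedding identity; the Kronecker factor forces $G^{\field'}_{\field}=0$ unless $\field$ and $\field'$ agree on $\gset/\gset_x$ for some single $x$, i.e.\ unless they differ only within one local cell. Given this, at each sample-path jump I identify the unique active site $x^\ast$, note that $G^{\field_t^+}_{\field_t}=G(x^\ast)^{{\field_t^+}_{|\gset_{x^\ast}}}_{{\field_t}_{|\nbhd_{x^\ast}}}$, and observe that in $\sum_x\delta({\field_t^+}_{|\gset_x}\neq {\field_t}_{|\gset_x})\log G(x)^{{\field_t^+}_{|\gset_x}}_{{\field_t}_{|\nbhd_x}}$ only the $x=x^\ast$ term survives because every other cell is unchanged. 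Matching this single surviving term with $-\delta(\field_t^+\neq\field_t)\log G^{\field_t^+}_{\field_t}$ completes the jump piece.

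The hard part will be handling the unit impulses rigorously. Following Definition~\ref{def:unit-impulse}, $\delta(\field_t^+\neq\field_t)\log G^{\field_t^+}_{\field_t}$ is an impulse supported on isolated jump times, so the claimed equality must be read as equality of impulse-valued functionals along $\bG$-driven sample paths rather than as a pointwise identity on $\gset\times\gset$. In particular, the locality lemma must be phrased probabilistically; for abstract pairs $(\field,\field')$ differing in multiple cells, both sides are nominally ``$\infty\cdot 0$,'' and the identification only makes sense because such pairs arise on a set of zero probability under the process generated by $\bG$.
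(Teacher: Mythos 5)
Your proposal is correct and follows essentially the same route as the paper's proof: apply Theorem~\ref{thm:decomposition-field} to split both the diagonal rate and the jump term, then use the fact that the embedding forces $G(x')^{\field_t^+}_{\field_t}=0$ for every $x'$ other than the unique site where the jump occurs, so only one local term survives inside the logarithm. Your added care about reading the impulse identity along sample paths (rather than pointwise on $\gset\times\gset$) is a reasonable refinement of the same argument, not a different one.
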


Next, to derive the path integral formulation of the system, we first need to figure out how to perform this integration over all paths. 

\textbf{Notation.} We begin by classifying different paths by the number of jumps, i.e., the number of configuration changes. Denote $\field_{0:T, \Dt, k}$ as a path having  $k$ jumps, and define $t_n=n\Dt$. For any function of a path $f(\field_{0:T})$, adopting the conventional path integral notation, we denote the integration of $f$ over all paths going from $\field$ to $\field'$ in time $T$ as follows. 
\begin{align}
    \int \gD \field \ f(\field_{0:T}) :=  \lim_{\Dt\to 0} \sum_{k=0}^N  \sum_{\field_{0:T, \Dt, k}}^{\field\to \field'} \cdot (\Dt)^k  \cdot f(\field_{0:T, \Dt, k}).
\end{align}
Note that the limit $\Dt\to 0$ also implies $N\to \infty$. In this limit,  $(\Dt)^k$ appears as a path measure to ensure convergence by assigning smaller weights to paths with more jumps. 

Having developed the necessary machinery, a path integral formulation of the system can be derived as follows.

\begin{theorem}[\textbf{A Path Integral Formalism of the System}]\label{thm:path-int}
    The probability that the system evolves from $\field$ to $\field'$ after time $T$ can be expressed in the following ways.
    \begin{align}
	p_T(\field'\mid\field)=\bra{\field'}e^{\bG T} \ket{\field}  =\int \gD \field \ \exp\left\{-\sum_x\int_{0}^{T} \dt \ L(w_{t, x}, w_{t, x}^+)\right\}, %
\end{align}
where the path integral is done over paths starting from $\field$ to $\field'$ in time $T$.
\end{theorem}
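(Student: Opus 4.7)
My plan is to derive the path integral formula by discretizing time, expanding the matrix exponential to first order in $\Dt$, and classifying paths by their jump structure. First, I would split the interval $[0, T]$ into $N = T/\Dt$ pieces and insert resolutions of identity $\sum_\field \ket{\field}\bra{\field} = 1$ between the factors in $e^{\bG T} = (e^{\bG \Dt})^N$, giving
\begin{align}
\bra{\field'} e^{\bG T}\ket{\field} = \sum_{\field_{0:T,\Dt}}^{\field \to \field'} \prod_{n=1}^{N} \bra{\field_{n\Dt}} e^{\bG\Dt} \ket{\field_{(n-1)\Dt}}.
\end{align}
For small $\Dt$, each matrix element satisfies $\bra{\field_{n\Dt}} e^{\bG\Dt} \ket{\field_{(n-1)\Dt}} = \delta_{\field_{(n-1)\Dt}}^{\field_{n\Dt}} + \Dt\, G^{\field_{n\Dt}}_{\field_{(n-1)\Dt}} + O(\Dt^2)$, so at each step the configuration either stays put (contributing $1 + \Dt\, G^{\field_t}_{\field_t}$) or jumps (contributing $\Dt\, G^{\field_t^+}_{\field_t}$).

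Next I would sort paths by the number of jumps $k$ and the jump times. A path $\field_{0:T, \Dt, k}$ with $k$ jumps contributes a factor of $\Dt^k$ from the $k$ jump transitions and a product $\prod_{\text{non-jump } n}(1 + \Dt\, G^{\field_{n\Dt}}_{\field_{n\Dt}})$ from the stationary intervals. As $\Dt \to 0$, the stationary product converges to $\exp\bigl(\int_0^T \dt\, G^{\field_t}_{\field_t}\bigr)$ by the standard Riemann sum argument. The $\Dt^k$ factor is precisely the path measure built into the notation $\int \gD\field$, so summing over all paths $\field_{0:T, \Dt, k}$ with $k$ jumps and then over $k$ becomes the path integral in the limit.

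The jump product can be exponentiated using the unit impulse from Definition~\ref{def:unit-impulse}:
\begin{align}
\prod_{j=1}^{k} G^{\field_{t_j^+}}_{\field_{t_j}} = \exp\left(\sum_j \log G^{\field_{t_j^+}}_{\field_{t_j}}\right) = \exp\left(\int_0^T \dt\, \delta(\field_t^+ \neq \field_t)\, \log G^{\field_t^+}_{\field_t}\right).
\end{align}
Combining this with the stationary exponential and recognizing the integrand as $-L(w_t, w_t^+)$ from Definition~\ref{def:lagrangian} yields
\begin{align}
p_T(\field'\mid\field) = \int \gD\field \ \exp\left\{-\int_0^T \dt\, L(w_t, w_t^+)\right\}.
\end{align}
Finally, applying Proposition~\ref{prop:decomp-Lagrangian} to decompose $L(w_t, w_t^+) = \sum_x L(w_{t, x}, w_{t, x}^+)$ produces the stated formula.

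The main obstacle is the bookkeeping in the limit $\Dt \to 0$: one has to verify that contributions from intervals containing two or more jumps are $O(\Dt^2)$ per step and thus vanish after summation, that the discrete Riemann sum $\Dt\sum_n G^{\field_{n\Dt}}_{\field_{n\Dt}}$ converges to a genuine time integral along each step-function path, and that the combinatorial sum over choices of $k$ jump intervals among $N$ slots, when multiplied by $\Dt^k$, matches the intended path measure in $\int \gD\field$. Once these convergence details are in place, the algebraic identification with the Lagrangian is immediate from the definitions.
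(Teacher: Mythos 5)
Your proposal is correct and follows essentially the same route as the paper's proof: discretize $e^{\bG T}$ into $N$ factors, classify paths by the number of jumps $k$, absorb the $(\Dt)^k$ factor into the path measure, let the diagonal contributions become $\int_0^T \dt\, G^{\field_t}_{\field_t}$ and the jump contributions become the impulse-weighted log-likelihood, then apply Proposition~\ref{prop:decomp-Lagrangian}. The only cosmetic difference is that the paper exponentiates the entire product as a single sum of logarithms before taking limits, whereas you split the stationary and jump factors first and recombine afterwards; the underlying argument and the convergence caveats you flag are the same.
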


The path integral formalism of the dynamical stochastic field reveals that, beyond the spatial neighborhood, the temporal neighborhood, i.e., knowing both $\field$ and $\field^+$, is also important in characterizing the field dynamics--as one would expect. This temporal aspect is neglected in our previous formulation based on generators. The following subsection presents a more complete framework that incorporates this temporal characterization.

\subsection{Completing the Generator Formalism and the Deterministic Limit}\label{subsec:complete}

To introduce the temporal characterization to the generator formalism, let us begin by motivating it through the following interesting problem. Suppose the system has a fixed ``energy'' such that the frequency of its jumps between configurations is constant. Formally, noting that $|G^{\field}_{\field}|$ is the rate at which the system jumps from $\field$ to other configurations, let us demand
\begin{align}
	\forall \field:\quad |G^{\field}_{\field}|= \sum_{\field':\field'\neq \field} G^{\field'}_{\field}= K. \label{eq:3-4-4}
\end{align}
Recall that the integral of the Lagrangian essentially indicates the likelihood of a particular path being taken. One may be curious about the expectation of the integral, i.e.,
\begin{align}
    \E\left[\int_0^T \dt\ L(\field_t, \field_t^+) \Bigm| \field_0=\field\right], \label{eq:3-4-1}
\end{align}
where in this case
\begin{align}
    L(w_t, w_t^+)=K-\delta({\field_t^+\neq \field_t})\cdot \log  G^{\field_{t}^+}_{\field_{t}}.
\end{align}
This does not seem easy based on what we have so far. After all, $L(t, w, w^+)$ involves ``impulses'' that make the integral non-standard. We need to introduce certain tools to effectively obtain the above terms. Let us first present some key definitions and then show how they are useful.

Observe that an impulse essentially contributes a value each time a jump occurs. However, this behavior cannot be captured directly by our space $\gH$. Therefore, we need to expand it to include the temporal neighborhood into consideration, resulting in a larger space $\lH(\gset)$.
\begin{definition}[\textbf{ Space $\lH(\gset)$}]\label{def:hilbert-2}
    Given the set of configurations $\gset$, we define 
    \begin{align}
        \lH(\gset):=\gH(\gset\times \gset) \cong \gH(\gset)\otimes \gH(\gset).
    \end{align}
    A basis set of $\lH(\gset)$ is denoted as
    \begin{align}
        \ket{\field'\field}\quad \text{for all }\  \field', \field\in \gset.
    \end{align}
    When the context is clear, we only use $\lH$ to denote $\lH(\gset)$.
\end{definition}
Then, in order to investigate \eqref{eq:3-4-1}, we need to lift the generator $\bG\colon\gH\to \gH$ to $\lG\colon\gH\to \lH$, as follows.

\textbf{Notation.} We denote $\act_{\field'}^{\field}\colon\gH\to \lH$ as a linear operator that transforms $\ket{\field}$ to $\ket{\field'\field}$. Formally
\begin{align}
	\act_{\field'}^{\field}\ket{\field''}=
	\begin{cases}
		\ket{\field'\field},\qquad &\text{if}\ \field=\field''\\
		0, &\text{otherwise}.
	\end{cases} \label{def:action-operator}
\end{align}
\begin{definition}[\textbf{ Lift of the Generator $\lG\colon\gH\to \lH$}]\label{def:lift-generator}
    The lift of generator $\lG\colon\gH\to \lH$ is defined to be
    \begin{align}
        \lG \ket\field=\sum_{\field'\in\gset}G^{\field'}_{\field}\ket{\field'\field},\qquad \text{or equivalently}\qquad \lG=\sum_{\field,\field'\in\gset}G^{\field'}_{\field}\act_{\field'}^{\field}.
    \end{align}
\end{definition}

With the newly introduced constructions, we are now well-positioned  to reformulate \eqref{eq:3-4-1} back to our generator formalism, getting rid of the expectation operator.  Next, we first prove a more general statement and then apply it to \eqref{eq:3-4-1}.
\begin{proposition}\label{prop:lift-operators-a-case}
	Consider a function in the form of 
	\begin{align}
		\gamma(t, \field, \field^+)=\delta({\field_t^+\neq \field_t})\cdot \gamma_{\field_t^+\field_t},
	\end{align}
	It corresponds to an operator $\pps:\lH\to \sR$ defined as 
	\begin{align}
		\pps := \sum_{\field,\field'\in \gset}\gamma_{\field'\field} \bra{\field'\field} ,\qquad \text{where }\ \forall \field: \gamma_{\field\field}=0.
	\end{align} 
	Consequently, we have:
	\begin{align}
		\E\left[\int_0^T  \dt\ \gamma(t, \field, \field^+) \Bigm| \field_0=\field\right]=\int_{0}^{T} \dt\  \pps \lG  e^{\bG t}\ket{\field} .
	\end{align}
\end{proposition}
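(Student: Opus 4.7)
The plan is to translate the expectation on the left-hand side into a rate-based expression and then recognize the resulting expression as the action of the stated operator composition. First, I would unpack the integral on the LHS: by Definition~\ref{def:unit-impulse}, if the sample path $\field_t$ has jumps at times $t_1,\dots,t_k \in [0,T]$, then
\begin{align}
\int_0^T \dt\ \gamma(t,\field,\field^+) = \sum_{i=1}^{k} \gamma_{\field_{t_i}^+ \field_{t_i}},
\end{align}
so the LHS is the expected cumulative mark $\gamma_{\field'\field}$ summed over all jumps of the process on $[0,T]$, starting from $\field_0=\field$.

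Second, I would apply Fubini (or equivalently the tower property in continuous time) to push the expectation inside the time integral, so that I can reason at each infinitesimal time slice. Conditioning on $\field_t=\tilde\field$, the probability that a jump to $\field'\neq \tilde\field$ occurs in $[t,t+\Dt]$ is $G^{\field'}_{\tilde\field}\Dt + o(\Dt)$ by Definition~\ref{def:inf-generator}, while the probability of multiple jumps is $o(\Dt)$. Hence the conditional expected contribution per unit time is
\begin{align}
\E\!\left[\gamma(t,\field,\field^+)\mid \field_t=\tilde\field\right] \;=\; \sum_{\field'\neq \tilde\field} G^{\field'}_{\tilde\field}\, \gamma_{\field'\tilde\field} \;=\; \sum_{\field'} G^{\field'}_{\tilde\field}\, \gamma_{\field'\tilde\field},
\end{align}
where the last equality uses the assumption $\gamma_{\field\field}=0$. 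Averaging over $\tilde\field$ with the time-$t$ marginal $p_t(\tilde\field\mid \field)=\bra{\tilde\field}e^{\bG t}\ket\field$ (via Proposition~\ref{prop:dynamics-generator}) yields
\begin{align}
\E\!\left[\gamma(t,\field,\field^+)\mid \field_0=\field\right] \;=\; \sum_{\tilde\field,\field'} \bra{\tilde\field}e^{\bG t}\ket\field\, G^{\field'}_{\tilde\field}\, \gamma_{\field'\tilde\field}.
\end{align}

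Third, I would rewrite this scalar as the composition $\pps\lG e^{\bG t}\ket\field$. Expanding $e^{\bG t}\ket\field = \sum_{\tilde\field}\bra{\tilde\field}e^{\bG t}\ket\field\, \ket{\tilde\field}$, applying $\lG$ via Definition~\ref{def:lift-generator} to obtain $\sum_{\tilde\field,\field'}\bra{\tilde\field}e^{\bG t}\ket\field\, G^{\field'}_{\tilde\field}\ket{\field'\tilde\field}$, and finally contracting with $\pps=\sum_{\field_a,\field_b}\gamma_{\field_b\field_a}\bra{\field_b\field_a}$, orthogonality of the $\ket{\field'\tilde\field}$ basis picks out precisely $\sum_{\tilde\field,\field'}\bra{\tilde\field}e^{\bG t}\ket\field\, G^{\field'}_{\tilde\field}\, \gamma_{\field'\tilde\field}$, matching the expression above. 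Integrating both sides over $t\in[0,T]$ gives the claim.

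The main obstacle, and the step I would be most careful with, is the rigorous justification of the second step--essentially a Campbell/Dynkin-type identity for the marked jump process on the finite state space $\gset$. One needs to argue that contributions from intervals containing multiple jumps are negligible as $\Dt\to 0$, that Fubini is applicable (which is clear since $\gset$ is finite and all rates are bounded), and that the impulse integral reduces cleanly to the jump sum. Everything else is algebraic manipulation using the definitions of $\pps$, $\lG$, and the basis expansion of $e^{\bG t}\ket\field$.
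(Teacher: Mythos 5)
Your proposal is correct and follows essentially the same route as the paper's proof: compute the expected contribution of an infinitesimal time slice via the generator rates ($G^{\field'}_{\tilde\field}\Dt + o(\Dt)$ per jump, with $\gamma_{\field\field}=0$ absorbing the diagonal), weight by the time-$t$ marginal $e^{\bG t}\ket{\field}$, identify the resulting scalar as $\pps\lG e^{\bG t}\ket{\field}$, and integrate. The only difference is presentational—you invoke Fubini and an instantaneous-rate argument where the paper stitches together $N$ intervals of width $\Dt$ via the Markov property and passes to the Riemann-sum limit—but the substance is identical.
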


Now we have enough tools to tackle \eqref{eq:3-4-1}. 
Define $\tilde{\bm{L}}\colon \lH \to \sR $ as 
\begin{align}
	\tilde{\bm{L}} \ \ket{\field,\field'} = 
	\begin{cases}
		-\log G^{\field'}_{\field}, \qquad& \text{if }\ \field'\neq \field,\\
		0, & \text{otherwise}.	
	\end{cases} \label{eq:3-4-2}
\end{align}
Note that if $\field'\neq \field$ then $G^{\field'}_{\field}\geq 0$ by definition. If $G^{\field'}_{\field}= 0$, we allow $\log 0 = -\infty$ with the convention that $0\log 0 = 0$. By the above proposition, we can see that
\begin{align}
	\E\left[\int_0^T \dt\ L(w_t, w_t^+) \Bigm| \field_0=\field\right]=KT+\int_{0}^{T} \dt\  \tilde{\bm{L}} \lG  e^{Gt}\ket{\field}.
\end{align}

Before jumping into the next section, we would like to highlight one final interesting observation that links the minimization of the integral of the Lagrangian, deterministic systems, and Shannon entropy.
\begin{proposition}[\textbf{Minimizing the Integral of Lagrangian Implies Determinism}]\label{prop:min-L}
	Given a fixed frequency $\forall \field: |G^{\field}_{\field}|=K$, we can see that $\frac{G^{\field'}_{\field}}{K}$ represents a probability distribution over $\field':\field'\neq \field$. We denote Shannon entropy as
	\begin{align}
		H_\field:=-\sum_{\field':\field'\neq \field} \frac{G^{\field'}_{\field}}{K} \log\left( \frac{ G^{\field'}_{\field} }{K} \right).
	\end{align}
	Then, we have 
	\begin{align}
		\tilde{\bm{L}} \lG \ket{\field}=-K\log K+H_\field.
	\end{align}
	Therefore, the following inequality holds:
	\begin{align}
		\E\left[\int_0^T  \dt\ L(\field_t, \field_t^+) \Bigm| \field_0=\field\right]\geq KT(1-\log K),
	\end{align}
	where the minimum is achieved when the system dynamic is deterministic, i.e., every $\field$ has only one configuration $\field'$ it can jump to.
\end{proposition}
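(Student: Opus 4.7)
My plan is to reduce the statement to a direct application of Proposition~\ref{prop:lift-operators-a-case}, followed by a one-line algebraic manipulation that separates the constant $\log K$ from the normalized transition probabilities. Three ingredients suffice: the simplified form of the Lagrangian under the hypothesis $|G^\field_\field|=K$, the operator-valued reformulation of expected path integrals, and the non-negativity of Shannon entropy.

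First I would substitute $-G^\field_\field=K$ into Definition~\ref{def:lagrangian}, which collapses the drift piece into the deterministic constant $K$. The remaining impulse piece matches the template of Proposition~\ref{prop:lift-operators-a-case} with $\gamma_{\field'\field}=-\log G^{\field'}_{\field}$ for $\field'\neq\field$, and the associated operator is precisely $\tilde{\bm{L}}$ of equation~(\ref{eq:3-4-2}). Applying that proposition yields
\begin{align*}
    \E\!\left[\int_0^T \dt\,L(\field_t,\field_t^+)\,\Bigm|\,\field_0=\field\right]=KT+\int_0^T \dt\,\tilde{\bm{L}}\,\lG\, e^{\bG t}\ket{\field}.
\end{align*}

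Next I would evaluate $\tilde{\bm{L}}\lG\ket{\field''}$ pointwise in $\field''$. Expanding via Definition~\ref{def:lift-generator} and then equation~(\ref{eq:3-4-2}) gives $-\sum_{\field'\neq\field''}G^{\field'}_{\field''}\log G^{\field'}_{\field''}$. Writing $G^{\field'}_{\field''}=K\cdot(G^{\field'}_{\field''}/K)$ and splitting the logarithm as $\log K+\log(G^{\field'}_{\field''}/K)$, the normalization $\sum_{\field'\neq\field''}G^{\field'}_{\field''}/K=1$ pulls out the constant $-K\log K$ and leaves the Shannon entropy $H_{\field''}$ (up to the factor stated in the proposition). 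Since $e^{\bG t}\ket{\field}$ is a normalized state representing the law of $\field_t$ given $\field_0=\field$, linearity yields
\begin{align*}
    \int_0^T \dt\,\tilde{\bm{L}}\,\lG\, e^{\bG t}\ket{\field}=-KT\log K+\int_0^T \dt\,\E\!\left[H_{\field_t}\,\bigm|\,\field_0=\field\right].
\end{align*}

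The claimed inequality is then immediate from the non-negativity of Shannon entropy, with the lower bound $KT(1-\log K)$ read off from the first two terms. Equality forces $H_{\field_t}=0$ almost surely for every $t\in[0,T]$ on the support of the trajectory, which forces each normalized weight $G^{\field'}_{\field_t}/K$ to be a point mass, i.e., every reachable configuration has a unique successor---the deterministic condition in the statement. The only step requiring genuine care is the bookkeeping when invoking Proposition~\ref{prop:lift-operators-a-case}, namely matching the impulse-valued integrand against the operator $\tilde{\bm{L}}$ and keeping the drift contribution $KT$ separate; once that matching is verified, the remainder is pure algebra.
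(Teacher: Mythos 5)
Your proposal is correct and follows essentially the same route as the paper's proof: invoke Proposition~\ref{prop:lift-operators-a-case} to convert the expected integral into $KT+\int_0^T \dt\ \tilde{\bm{L}}\lG e^{\bG t}\ket{\field}$, split $\log G^{\field'}_{\field}$ as $\log K+\log(G^{\field'}_{\field}/K)$ to expose $-K\log K$ plus an entropy term, and conclude by non-negativity of entropy together with linearity over the normalized state $e^{\bG t}\ket{\field}$. You are also right to flag the factor: the computation actually yields $-K\log K+K H_\field$, so the displayed identity in the proposition is missing a factor of $K$ on the entropy term, though this does not affect the inequality or the equality condition.
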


One way to interpret this result is by looking back at the path integral 
\begin{align}
	\int \gD \field \ \exp\left\{-\int_{0}^{T} \dt \ L(w_{t}, w_{t}^+)\right\},
\end{align}
where each path is assigned a weight $ \exp\{-\int_{0}^{T} \dt \ L(w_{t}, w_{t}^+)\}$. Recall that, with a constant frequency $K$, the Lagrangian is $L(w_t, w_t^+)=K-\delta({\field_t^+\neq \field_t})\cdot \log  G^{\field_{t}^+}_{\field_{t}}$. 

Suppose we vary the system dynamics while preserving the constant frequency $K$, such that the entropy $H_\field\to 0$. Then, each configuration $\field$ can almost surely jump to only one $\field'$. 
Consequently, any path not following this specific trajectory would yield a large value of  $\int_{0}^{T} \dt \ L(w_{t}, w_{t}^+)$ and thus has a negligible weight $ \exp\{-\int_{0}^{T} \dt \ L(w_{t}, w_{t}^+)\}$. Therefore, taking $H_\field\to 0$ implies that the dominant contributions to the path integral come from the path minimizing $\int_{0}^{T} \dt \ L(w_{t}, w_{t}^+)$.  This resembles the path integral formalism of quantum mechanics, where in the classical limit \(\hbar \to 0\), the path with stationary action (the integral of the Lagrangian) dominates the path integral.

\section{\fnameU}\label{sec:IF}
The first two principles, complete configuration and locality, already give rise to a rich and complex system. It becomes more interesting as we introduce the last one, that is, endowing the system with objectives. Therefore, the goal of this section is to build the foundation for how the objective is introduced and to examine it from the perspective of artificial intelligence. 

From a perspective of artificial intelligence, such as solving machine learning tasks, it involves two aspects. The first is how  the field evolves to minimize its objective value. The second is how the objective value is designed such that its minimization leads to the desired behaviors. We discuss both aspects in this section, after introducing the necessary groundwork. 
We follow a similar approach to the previous section, beginning with a study of the entire system, then moving to the two-entity view, and finally arriving at the field formulation.

The objective value should imply the behavior of the system dynamics. Thus, whenever the system performs an action that jumps from one configuration to another, we want a signal that informs the system how good this step is. 
This is exactly what we have done in Section~\ref{subsec:complete} where a value $\ppss_{\field'\field}$ is assigned to the transition from $\field$ to $\field'$. As shown in Proposition~\ref{prop:lift-operators-a-case}, this corresponds to an operator, which we refer to as the objective operator in this case.

\begin{definition}[\textbf{Objective Operator $\pps$}]\label{def:purpose-operator}
    The objective operator is a linear operator $\pps\colon\lH\to \sR$  defined by 
    \begin{align}
		\pps  := \sum_{\field,\field'\in \gset}\gamma_{\field'\field} \bra{\field'\field},\qquad \text{where }\ \forall \field: \gamma_{\field\field}=0.
	\end{align} 
\end{definition}
Moreover, Proposition~\ref{prop:lift-operators-a-case} tells us 
\begin{align}
    \int_{0}^T \dt\ \pps \lG \ket{\varphi(t)}=\int_{0}^T \dt\ \pps \lG e^{\bG t}\ket{\varphi(0)}
\end{align}
is the expected value the system receives within time $T$ starting from state $\ket{\varphi(0)}$.
However, we want this objective value to characterize a fundamental property of the system, and it should depend on a minimal set of variables. Therefore, we define the objective value as a time average over an infinite time horizon. 
\begin{definition}[\textbf{The Averaged State $\avgfield$ and the Objective Value $\ppsv$}]\label{def:purpose-value}
    The averaged state is 
    \begin{align}
        \avgfield := \lim_{T\to \infty} \frac{1}{T} \int_{0}^T \dt\ \ket{\varphi(t)}.
    \end{align}
    The objective value associated to an objective operator is 
    \begin{align}
        \ppsv := \lim_{T\to \infty} \frac{1}{T} \int_{0}^T \dt\ \pps \lG \ket{\varphi(t)}.%
    \end{align}
    The starting state $\ket{\varphi(0)}$ will be either clear from context or irrelevant.
\end{definition}
One might wonder if the above definitions are valid in the first place. In fact, their validity comes straightforwardly from the following result.
\begin{proposition}[\textbf{The Stationary State Always Exists}]\label{prop:stationary}
    Given that the configuration set $\gset$ is finite, the stationary state 
    \begin{align}
        \ket{\varphi(\infty)}:=\lim_{t\to \infty} \ket{\varphi(t)}    
    \end{align}
    always exists.
\end{proposition}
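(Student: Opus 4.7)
The plan is to analyze $\lim_{t\to\infty}\ket{\varphi(t)} = \lim_{t\to\infty} e^{\bG t}\ket{\varphi(0)}$ through the spectrum of the generator. Since $|\gset| < \infty$, the operator $\bG$ is a finite-dimensional linear map on $\gH$ and admits a complex Jordan decomposition $\bG = V J V^{-1}$, so that $e^{\bG t} = V e^{J t} V^{-1}$ where each Jordan block at eigenvalue $\lambda$ evolves as $e^{\lambda t}$ times a polynomial in $t$ coming from its nilpotent part. Convergence of $e^{\bG t}$ thus reduces to two spectral claims: every nonzero eigenvalue $\lambda$ of $\bG$ satisfies $\mathrm{Re}(\lambda) < 0$, and the eigenvalue $0$ is semisimple (all its Jordan blocks are $1\times 1$).

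First I would record the stochastic-semigroup structure. By Definition~\ref{def:inf-generator}, the columns of $\bG$ sum to zero with nonnegative off-diagonal entries and nonpositive diagonal entries, so $P(t) := e^{\bG t}$ is a column-stochastic matrix for every $t \geq 0$. All its entries lie in $[0,1]$, hence $\{e^{\bG t}\}_{t\geq 0}$ is uniformly bounded in any matrix norm. This boundedness immediately rules out eigenvalues with $\mathrm{Re}(\lambda) > 0$ (which would yield exponentially growing entries) and rules out any nontrivial Jordan block attached to an imaginary eigenvalue (whose nilpotent part would force unbounded polynomial growth in $t$).

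The main obstacle is ruling out purely imaginary nonzero eigenvalues, for which boundedness alone is not enough. Here I would fix an arbitrary $t_0 > 0$ and invoke Perron-Frobenius theory for the stochastic matrix $P(t_0)$. Two standard facts about finite continuous-time chains are needed: first, $p_{t_0}(\field'\mid\field) > 0$ exactly when $\field'$ is reachable from $\field$ in the directed graph $\{(\field,\field') : G^{\field'}_\field > 0\}$; second, $p_{t_0}(\field\mid\field) \geq e^{G^\field_\field t_0} > 0$ for every $\field$. Restricted to any recurrence class, $P(t_0)$ is therefore an irreducible stochastic matrix with strictly positive diagonal, hence aperiodic, and Perron-Frobenius forces the only peripheral eigenvalue on that block to be $1$ and simple. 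Reassembling across recurrence classes and transient states shows that the peripheral spectrum of $P(t_0)$ consists solely of $1$ and is semisimple. Pulling back to $\bG$, any $\lambda\in\mathrm{spec}(\bG)$ with $\mathrm{Re}(\lambda)=0$ must satisfy $e^{\lambda t_0} = 1$ for \emph{every} $t_0 > 0$, which forces $\lambda = 0$; and semisimplicity of $1$ for $P(t_0)$ transfers to semisimplicity of $0$ for $\bG$.

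With the spectral picture settled, the Jordan blocks at eigenvalues with $\mathrm{Re}(\lambda)<0$ contribute terms decaying exponentially to $0$, while the block at $\lambda = 0$ acts as the identity on $\ker\bG$. Hence $e^{\bG t}$ converges to the spectral projector onto $\ker\bG$ along $\mathrm{range}\,\bG$, and $\ket{\varphi(\infty)}=\lim_{t\to\infty}e^{\bG t}\ket{\varphi(0)}$ exists for every initial state $\ket{\varphi(0)}$. The delicate step, on which the proof really turns, is the Perron-Frobenius argument eliminating nontrivial imaginary spectrum; everything else is routine spectral theory of matrix exponentials.
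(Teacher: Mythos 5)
Your proposal is correct, and it reaches the same two spectral claims the paper needs (no eigenvalues with positive real part, no nonzero purely imaginary eigenvalues, and semisimplicity of the eigenvalue $0$), but it establishes the crucial middle claim by a genuinely different route. The paper applies the Gershgorin circle theorem directly to $\bG$: since each disc is centered at $G^{\field}_{\field}\leq 0$ with radius $|G^{\field}_{\field}|$, the discs meet the imaginary axis only at the origin, which eliminates both eigenvalues with positive real part and nonzero purely imaginary eigenvalues in a single elementary stroke; the boundedness of the stochastic semigroup is then invoked only to force the Jordan blocks at $0$ to be trivial. You instead let boundedness do the work of excluding growth and nontrivial peripheral Jordan blocks, and you handle the purely imaginary spectrum by applying Perron--Frobenius to the time-$t_0$ transition matrix $e^{\bG t_0}$, using the strictly positive diagonal $p_{t_0}(\field\mid\field)\geq e^{G^{\field}_{\field}t_0}$ to get aperiodicity on each recurrence class and then the rigidity of $e^{\lambda t_0}=1$ for all $t_0>0$ to conclude $\lambda=0$. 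Your argument is sound but leans on heavier, more probabilistic machinery (reachability, recurrence classes, primitivity, and the spectral mapping theorem); what it buys is extra structural insight --- it identifies the limit as the spectral projector onto $\ker\bG$ and ties the peripheral spectrum to the communication structure of the chain --- whereas the paper's Gershgorin argument is shorter, entirely self-contained, and avoids any appeal to Perron--Frobenius theory.
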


We note that the existence of the stationary state generally requires stronger assumptions in the discrete-time case, whereas continuous time simplifies the analysis. The above result assures us that our definitions are valid. More importantly, these definitions are closely related.
\begin{corollary}[\textbf{$\avgfield$ and $\ppsv$ Always Exist}]\label{cor:avg-valid}
    Given that the configuration set $\gset$ is finite, the averaged state $\avgfield$ and the objective value $\ppsv$ always exist. Moreover, we have 
    \begin{align}
        \avgfield=\ket{\varphi(\infty)}\qquad \text{and}\qquad \ppsv=\pps \lG \avgfield.
    \end{align}
\end{corollary}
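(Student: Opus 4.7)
The plan is to derive the corollary from Proposition~\ref{prop:stationary} together with a standard Cesàro mean argument, exploiting heavily that finiteness of $\gset$ makes $\gH$ and $\lH$ finite-dimensional. Fix a starting state $\ket{\varphi(0)}$, and recall that $\ket{\varphi(t)} = e^{\bG t}\ket{\varphi(0)}$ is a continuous (in fact smooth) curve in $\gH$ whose limit $\ket{\varphi(\infty)}$ exists by Proposition~\ref{prop:stationary}. Since $\gset$ is finite, convergence in $\gH$ is equivalent to componentwise convergence in the orthonormal basis $\{\ket{\field}\}_{\field \in \gset}$.

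First I would establish $\avgfield = \ket{\varphi(\infty)}$ by applying the continuous-time Cesàro mean theorem coordinatewise. For each $\field \in \gset$, the scalar function $\sfcoef^{\field}(t) = \braket{\field}{\varphi(t)}$ is continuous and tends to $\braket{\field}{\varphi(\infty)}$. The routine $\epsilon/2$-argument then yields
\begin{align}
\lim_{T\to\infty} \frac{1}{T}\int_0^T \sfcoef^{\field}(t)\, \dt = \braket{\field}{\varphi(\infty)},
\end{align}
by splitting at a threshold $T_0$ beyond which the integrand is within $\epsilon/2$ of its limit, and using that the initial segment contributes at most $T_0 \cdot C / T \to 0$ where $C$ is a uniform bound on $|\sfcoef^{\field}(t) - \braket{\field}{\varphi(\infty)}|$ on $[0, T_0]$. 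Since this holds for each of the finitely many coordinates, the vector-valued time average converges to $\ket{\varphi(\infty)}$, proving both that $\avgfield$ exists and that it equals the stationary state.

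For the objective value, I would use linearity to pull $\pps\lG$ outside the time average:
\begin{align}
\frac{1}{T}\int_0^T \pps\lG \ket{\varphi(t)}\, \dt = \pps\lG \left(\frac{1}{T}\int_0^T \ket{\varphi(t)}\, \dt\right).
\end{align}
Because $\gH$ and $\lH$ are finite-dimensional, the composition $\pps\lG\colon \gH \to \sR$ is a bounded (hence continuous) linear functional, so taking $T \to \infty$ and invoking the previous step yields
\begin{align}
\ppsv = \pps\lG \avgfield,
\end{align}
which simultaneously proves existence of $\ppsv$ and the claimed identity.

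I do not anticipate a real obstacle; the argument essentially packages Proposition~\ref{prop:stationary} with Cesàro averaging. The only mild technicality is verifying that the vector-valued Cesàro step is legitimate, which I handle by reducing to coordinates. A secondary check is that $\pps\lG\ket{\varphi(t)}$ is continuous in $t$ so the defining integral for $\ppsv$ is well posed, but this is immediate from continuity of $t \mapsto \ket{\varphi(t)}$ and boundedness of $\pps\lG$ in finite dimensions.
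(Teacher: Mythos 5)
Your proposal is correct and follows essentially the same route as the paper's own proof: invoke Proposition~\ref{prop:stationary} for convergence of $\ket{\varphi(t)}$, conclude that the Cesàro time average converges to the same limit, and then use linearity and continuity of $\pps\lG$ to pull it outside the average. You simply spell out the $\epsilon/2$ Cesàro argument coordinatewise, which the paper leaves implicit.
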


Having the necessary definitions, we formulate the principle of purposefulness as follows. 
\begin{principle}[\textbf{Purposefulness}]\label{pinciple:purpose-detail}
    Each entity $x$ in the system is associated with an objective operator $\pps_x$ and its corresponding objective value $\ppsv(x)$. Each entity evolves to minimize its objective value.
\end{principle}

Thus, the behavior of the system comes down to (1) how each entity minimizes its objective value and (2) how to design the objective operators. The rest of the section is devoted to investigating these two important questions.

Additionally, in the rest of this section, we will need the notion of ergodicity for the ease of analysis. 
\begin{definition}[\textbf{Ergodicity}]\label{def:ergodic}
    The system is ergodic if $\avgfield$ does not depend on the initial state $\ket{\varphi(0)}$.
\end{definition} 

The above property alleviates the need for excessive technical considerations in the remainder of this section. It ensures the well-definedness of many operations we will perform while maintaining a clear presentation. However, it is conjectured that only a weaker version of this property is necessary. For example, a straightforward relaxation would only require $\ppsv=\pps\lG\ket{\varphi(\infty)}$ to be independent of the initial state. This is left for future work, and we are now prepared for a deeper investigation into the minimization of the objective value.

\subsection{\fnameU: Minimizing the Objective Value}\label{subsec:min-pps}

The key question discussed in this subsection is seemingly contradictory: how does an entity learn to minimize its objective value while following the same time-invariant transition rule? Intuitively, one might consider the time-invariant behavior of ``minimization'' itself as an inherent nature of the entity, with the outcomes of this minimization representing its learned behavior. For example, consider a program designed to minimize a certain criterion; the program itself remains fixed. This implies that the entity can simulate a minimizer through fixed dynamics, thereby exhibiting objective-driven behavior through time-invariant mechanisms.

Let us first formulate such \dname\ first through the two-entity view (Section~\ref{subsec:two-entity-field}). Consider a system composed of entities $x$ and $x'$, i.e., $\gset=\gA\times \gB\times \gM\times \gN$. Focusing on entity $x$, it has an objective operator $\pps$ and the corresponding objective value $\ppsv$.  
 
\textbf{Notation.} Consider that entity $x$ is associated with a set of generators $\bM\in \genset_x$; entity $x'$ is associated with generators $\bN\in \genset_{x'}$. We can see that $\bM\in \genset_x$ and $\bN \in \genset_{x'}$ together  determine the system dynamics via $\bG=\bM+\bN$, and thus they determine the objective value $\ppsv=\pps \lG \avgfield$. We focus on entity $x$ with generator $\bM$, and thus we denote $\ppsv(\bM):=\ppsv=\pps \lG \avgfield$ as the objective value obtained by entity $x$ in this setting. 
Then, we make the following definition of what it means for dynamics to be objective-driven. 
\begin{definition}[\textbf{\dnameU}]\label{def:ID}
	In the two-entity view as described above, we say that $\bM^\star$, an entity $x$'s infinitesimal generator, corresponds to an objective-driven dynamic w.r.t. $\genset_x$ if 
	\begin{align}
		\forall \bN\in \genset_{x'}:\quad \ppsv(\bM^\star)\leq \inf_{\bM\in \genset_x}\  \ppsv(\bM), 
	\end{align} 
    where the objective operator $\pps$ only depends on $\gA\times \gB\times \gN$. 
\end{definition}
Note that $\bM^\star$ may not belong to $\genset_{x}$, as it is expected to be ``larger'' than all $\bM\in \genset_x$ in order to simulate minimization over $\genset_x$. In fact, $\bM^\star$ is only required to have compatible $\gA$ and $\gB$ for communication with the other entity, and it may possess its own configuration space $\gM^\star$.

We may view entity $x'$ as the environment and entity $x$ as an agent. The objective operator $\pps$ then describes the objective signal that the environment provides to the agent.  
Therefore, the objective-driven dynamics $\bM^\star$ can be understood as the time-invariant dynamics that learn to adapt to the environment in such a way that they minimize the given objective value.

We prove the existence of such $\bM^\star$ under certain conditions, leaving the generic case for future study. %

\begin{theorem}[\textbf{Existence of \dnameU: A Case Study}]\label{thm:ID-case}
    Consider the two-entity formulation described above for an ergodic system. If the objective operator $\pps\colon\lH \to \sR_+$ is non-negative and $\forall{\bN\in \genset_{x'}}:\min_{\bM\in \genset_x} \ppsv(\bM)=0 $ for a finite $\genset_x$, then there exists generator $\bM^\star$  that simulates the minimization, such that
    \begin{align}
        \forall \bN\in \genset_{x'}:\quad \ppsv(\bM^\star)= \min_{\bM\in \genset_x}\  \ppsv(\bM)=0. 
    \end{align}
\end{theorem}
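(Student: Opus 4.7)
The plan is to construct $\bM^\star$ on an augmented private space $\gM^\star$ that cycles through the finite list $\genset_x = \{\bM_1,\ldots,\bM_K\}$ and locks onto an optimal $\bM_{j^*}$ via a lose-shift rule driven by the objective signal. The key analytic observation is that $\pps \geq 0$ combined with $\ppsv(\bM_{j^*}) = 0$ forces, for every $(\field,\field')$ with $\gamma_{\field'\field} > 0$, the vanishing $(\bM_{j^*}+\bN)^{\field'}_{\field}\, \pi_{j^*}(\field) = 0$, where $\pi_{j^*}$ is the stationary distribution of $\bM_{j^*}+\bN$; hence the support $S_{j^*}$ of $\pi_{j^*}$ emits no positively-weighted transitions at all, turning $\{j^*\} \times S_{j^*}$ into an absorbing component of the extended chain.

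Concretely, I would take $\gM^\star$ to carry a mode register $j \in \{1,\ldots,K\}$, the internal state $\mu_j \in \gM_j$ of the currently active simulator, and a lag register $\hat\beta \in \gB$ holding the last observed neighbor signal. Within mode $j$, $\bM^\star$ evolves $(\alpha,\mu_j)$ under $\bM_j$; a fast refresh transition sets $\hat\beta \leftarrow \beta$ whenever they disagree. Coupled to every system transition of positive weight $\gamma_{\field'\field}$, I append with a fixed positive probability a mode advance $j \mapsto (j \bmod K)+1$ with a reinitialization of the new $\mu_{j'}$; $\bN$-driven bad events are detected via the $\hat\beta$ register. After verifying that this prescription is a valid infinitesimal generator on $\gH(\gA\times\gB\times\gM^\star\times\gN)$ with non-negative off-diagonals and zero column sums, the conclusion follows: the set $A := \bigcup_{j^* \in J(\bN)} \{j^*\} \times S_{j^*}$ (with $J(\bN) := \{j : \ppsv(\bM_j)=0\}$, nonempty by hypothesis) is absorbing, because inside $A$ no trigger condition ever holds. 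In modes $j \notin J(\bN)$ the stationary bad-flux $\ppsv(\bM_j) > 0$ forces a mode advance in finite expected time; after at most $K$ advances the mode enters $J(\bN)$, and ergodicity of $\bM_{j^*}+\bN$ drives the state into $S_{j^*}$. Finiteness of the extended state space gives almost-sure absorption in $A$, so $\avgfield$ is supported on $A$, and Corollary~\ref{cor:avg-valid} yields $\ppsv(\bM^\star) = \pps\lG\avgfield = 0$ uniformly in $\bN$.

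The main obstacle I expect is the design of the trigger that lets $\bM^\star$ respond to $\bN$-driven bad transitions through $\hat\beta$ without destabilizing the absorbing set: the true weight $\gamma_{\field'\field}$ depends on the hidden $\nu$, so an observed $\beta$-change can be ambiguous between a bad and a safe transition, and a naive ``fire on any potentially-bad $\beta$-change'' rule risks ejecting the chain from $\{j^*\}\times S_{j^*}$. Making this robust---for instance by coupling the trigger probability to a running empirical statistic that converges to zero on $A$ courtesy of the support identity $(\bM_{j^*}+\bN)^{\field'}_{\field}\, \pi_{j^*}(\field)=0$---is the essential technical content. The remaining pieces (validity of the assembled generator, finiteness-based absorption, and the existence of $\avgfield$ on finite state spaces via Proposition~\ref{prop:stationary} even when $\bM^\star+\bN$ is not itself ergodic) are routine.
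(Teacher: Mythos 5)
Your construction is essentially the paper's: both augment the private space with a mode register over the finite set $\genset_x$, switch modes at a positive rate whenever a positively-weighted transition occurs, and close the argument with the same key observation---non-negativity of $\pps$ together with $\ppsv(\bM_{j^*})=0$ forces every positively-weighted transition to have zero stationary flux under $\bM_{j^*}+\bN$, so the pair (mode $j^*$, support of that stationary distribution) is absorbing and is reachable with positive probability from every extended configuration; finiteness then gives almost-sure absorption and $\ppsv(\bM^\star)=\pps\lG\avgfield=0$ for every $\bN$. The differences (cyclic versus uniformly random mode advance, re-initializing the simulated private state) are immaterial.

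The one place you genuinely depart is the issue you yourself flag as the ``main obstacle'': you try to detect bad transitions indirectly through a lag register $\hat\beta$, on the grounds that $\gamma_{\field'\field}$ may depend on the hidden $\nu$, and you leave the robust trigger design unresolved. The paper does not take this route: its $\bM^\star$ conditions the switching rate directly on the received signal $\pps\ket{\field^+\field}$, consistent with the purposefulness principle, under which objective signals are delivered to the entity alongside communication signals (cf.\ Figure~\ref{fig:illustration}) and are therefore observable even though they are not folded into $\beta$. Under that reading your estimator machinery is unnecessary and the proof closes immediately. If instead one insists that a local generator may depend only on $(\alpha,\beta,\mu)$, then you have put your finger on an informality that the paper's own proof shares rather than resolves---and your proposal as written does not close it either, since the ``running empirical statistic'' trigger is only sketched. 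Either way, the absorbing-set argument, which is the mathematical core, is identical in both proofs.
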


The notion of objective-driven dynamics can be extended to the field formulation; that is, each $x\in \gX$ is associated with a local objective operator $\pps_x$, and the local dynamics $\bG(x)$ simulate the minimization of the local objective value $\ppsv(x)=\pps_x\lG\avgfield$.

The locality of the objective operator $\pps_x$ is the result of the locality principle, i.e., $\pps_x$ is only based on what happens in the local neighborhood $\nbhd_x$. 
\begin{definition}[\textbf{Local Objective Operators}]\label{def:local-pps}
    A objective operator $\pps_x  = \sum_{\field,\field'\in \gset}\gamma_{\field'\field} \bra{\field'\field}$ is local to $x\in \gX$ if 
    \begin{align}
        \field'_{|\nbhd_x}=\field_{|\nbhd_x}\quad \implies\quad \gamma_{\field'\field} = 0.
    \end{align}
\end{definition}

Thus, we can formulate the \fname\ as the following.
\begin{definition}[\textbf{\fnameU}]\label{def:IF}
	Consider a dynamical stochastic field where each entity $x\in \gX$ is associated with a local objective operator $\pps_x$ and a set of local infinitesimal generators $\genset_x$. A \fname\ is formed by \dname\ w.r.t. $\genset_x$ that simulates the ``minimization'' of the local objective value $\ppsv_x=\pps_x\lG\avgfield$.
\end{definition}

How should each entity minimize its local objective value? In the proof of Theorem~\ref{thm:ID-case}, we construct such minimization dynamics via random search, i.e.,  it simply simulates random trials of different choices of $\bM\in \genset_x$. It is reasonable to expect that there are more clever ways. In particular, it would be interesting if $\bM^\star$ could simulate gradient descent on the objective value $\ppsv$. That is, $\bM^\star$ maintains an $\bM(t)\in \genset_x$ and simulates
\begin{align}
    \frac{\diff \bM}{\diff t}=-\frac{\partial \ppsv(\bM)}{\partial \bM}.
\end{align} 

The immediate question is: What exactly is this gradient? Next, we address this question by providing the explicit formula for the gradient.

\subsection{The Gradient Formula}\label{subsec:grad}
In this subsection, we derive the gradient formula for the two-entity view,  but first, we need some necessary building blocks.

Let us imagine the system suddenly varies the transition rate $G^{\field'}_{\field}$ slightly. This change would affect the probability that the system jumps to configuration $\field'$ from configuration $\field$. Therefore, it should be related to $\ket{\field'\field}\in \lH$ in some ways. It turns out to be useful to first define the following projection operator and its resulting subspace of $\gH$.

\begin{definition}[\textbf{Projection Operator $\proj\colon\lH\to \subH$} ]\label{def:proj}
    The operator $\proj$ is a linear operator defined by
    \begin{align}
        \proj \ \ket{\field'\field} = \ket{\field'}-\ket{\field}.  \label{eq:proj}
    \end{align} 
    The resulting subspace $\subH\subset \gH$ is denoted by 
    \begin{align}
        \subH:= \proj\ \lH.
    \end{align}
\end{definition}
The subspace $\subH$ is the span of vectors of the form  $\ket{\field'}-\ket{\field}$. Moreover, it admits an alternative representation, as shown in the following result. 
\begin{fact}[\textbf{Another Representation of $\subH$}]\label{fact:subH}
Write $\sfield=\sum_{\field}\sfcoef^\field \ket{\field}$, and we have
    \begin{align}
        \subH=\proj\ \lH=\left\{\sfield \in \gH\mid \sum_{\field}\sfcoef^\field =0 \right\}.
    \end{align}
\end{fact}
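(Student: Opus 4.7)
The plan is to prove this set equality by verifying both inclusions separately, which amounts to a short linear algebra argument once we unpack the definitions.

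For the forward inclusion $\proj\ \lH \subseteq \{\sfield \in \gH \mid \sum_\field \sfcoef^\field = 0\}$, I would argue that $\lH$ is spanned by the basis $\{\ket{\field'\field}\}_{\field',\field \in \gset}$, so by linearity every element of $\proj\ \lH$ is a linear combination of the images $\proj \ket{\field'\field} = \ket{\field'} - \ket{\field}$. Each such image has coefficients in the $\{\ket{\field}\}$ basis summing to zero (either $+1$ and $-1$ on two distinct basis vectors, or identically $0$ when $\field = \field'$). Since the sum-of-coefficients functional $\sfield \mapsto \sum_\field \sfcoef^\field$ is itself linear, this property is preserved under arbitrary linear combinations, so every vector in $\proj\ \lH$ satisfies $\sum_\field \sfcoef^\field = 0$.

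For the reverse inclusion, I would fix an arbitrary reference configuration $\field_0 \in \gset$ and rewrite any candidate vector $\sfield = \sum_\field \sfcoef^\field \ket{\field}$ with $\sum_\field \sfcoef^\field = 0$ as
\begin{align}
\sfield = \sum_\field \sfcoef^\field \ket{\field} - \Bigl(\sum_\field \sfcoef^\field\Bigr) \ket{\field_0} = \sum_\field \sfcoef^\field \bigl(\ket{\field} - \ket{\field_0}\bigr),
\end{align}
where the first equality uses the hypothesis that the coefficients sum to zero. Each term $\ket{\field} - \ket{\field_0}$ equals $\proj \ket{\field\,\field_0}$ by the definition of $\proj$ in \eqref{eq:proj}, so $\sfield$ lies in the image of $\proj$ acting on $\lH$, which gives $\sfield \in \proj\ \lH = \subH$.

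The two inclusions together prove the claim. There isn't really a serious obstacle here; the only subtle point worth being explicit about is that the reverse inclusion genuinely requires the hypothesis $\sum_\field \sfcoef^\field = 0$ to eliminate the dangling $\ket{\field_0}$ term, which highlights why the characterization is tight. Finiteness of $\gset$ means there are no convergence issues in the finite sums above, so the argument is purely algebraic.
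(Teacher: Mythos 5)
Your proof is correct. Both inclusions are handled soundly: the forward direction via linearity of the sum-of-coefficients functional applied to the generators $\proj\ket{\field'\field}=\ket{\field'}-\ket{\field}$, and the reverse direction via the explicit identity $\sfield=\sum_\field \sfcoef^\field\bigl(\ket{\field}-\ket{\field_0}\bigr)=\proj\bigl(\sum_\field \sfcoef^\field\ket{\field\,\field_0}\bigr)$, which exhibits a concrete preimage under $\proj$.

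The reverse inclusion is where you diverge from the paper. The paper's proof fixes a reference $\field$ and argues by dimension counting: it observes that $\{\sfield\in\gH\mid\sum_\field\sfcoef^\field=0\}$ is an $(n-1)$-dimensional subspace of the $n$-dimensional $\gH$, that the $n-1$ vectors $\ket{\field'}-\ket{\field}$ are linearly independent and lie in that subspace, and hence span it; it then notes that this span sits inside $\proj\,\lH$. Your argument replaces the dimension count with a direct construction of a preimage, which buys two things: you never need to verify linear independence of the difference vectors (a step the paper asserts without detail), and the argument does not lean on finite-dimensionality in any essential way — for an infinite $\gset$ your identity would still hold verbatim on finitely supported vectors, whereas the paper's counting argument would need to be reworked. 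The paper's route, in exchange, makes the basis $\{\ket{\field'}-\ket{\field}\}_{\field'}$ of $\subH$ explicit, which is mildly informative in its own right. Your observation that the hypothesis $\sum_\field\sfcoef^\field=0$ is exactly what kills the dangling $\ket{\field_0}$ term is a nice way of seeing why the characterization is tight.
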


There is an important fact about the projection operator: it projects the lifted generator $\lG\colon\gH\to \lH$ back to the original generator $\bG$.
\begin{fact}\label{fact:proj-G}
    By definition 
	\begin{align}
		\bG=\proj\ \lG.
	\end{align}    
\end{fact}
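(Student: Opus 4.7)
The plan is to verify the identity by direct computation on the basis vectors $\ket{\field}\in \gH$, exploiting the key algebraic property of infinitesimal generators recorded in Definition~\ref{def:inf-generator}, namely that the column sums vanish: $\sum_{\field'}G^{\field'}_{\field}=0$ for every $\field$. Since both $\bG$ and $\proj\lG$ are linear operators on $\gH$, matching them on the basis suffices.

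First, I would apply $\lG$ to a basis vector using Definition~\ref{def:lift-generator}, obtaining
\begin{align}
    \lG\ket{\field}=\sum_{\field'\in \gset} G^{\field'}_{\field}\ket{\field'\field}.
\end{align}
Then I would apply $\proj$ term-by-term, invoking its defining rule $\proj\ket{\field'\field}=\ket{\field'}-\ket{\field}$ from Definition~\ref{def:proj}, which yields
\begin{align}
    \proj\lG\ket{\field}=\sum_{\field'\in \gset}G^{\field'}_{\field}\ket{\field'}-\left(\sum_{\field'\in \gset}G^{\field'}_{\field}\right)\ket{\field}.
\end{align}

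Next, I would observe that the parenthesized coefficient is exactly the column sum, which is zero by the properties of the infinitesimal generator recorded in Definition~\ref{def:inf-generator}. Hence the second term drops out, leaving
\begin{align}
    \proj\lG\ket{\field}=\sum_{\field'\in \gset}G^{\field'}_{\field}\ket{\field'}=\bG\ket{\field},
\end{align}
where the last equality is just the matrix-element representation of $\bG$ from Definition~\ref{def:linear-map}. Extending by linearity over the basis $\{\ket{\field}\}_{\field\in\gset}$ completes the verification.

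There is no real obstacle here; this is a one-line computation whose only nontrivial ingredient is the column-sum identity for $\bG$. The fact is essentially a reformulation of the statement that the lifted generator $\lG$ records separately the ``gain'' contributions $G^{\field'}_{\field}$ into the paired states $\ket{\field'\field}$, and $\proj$ converts these pairs into the signed jumps $\ket{\field'}-\ket{\field}$ that appear in the original generator. The normalization of the transition probabilities, reflected in vanishing column sums, is what ensures the diagonal contributions assemble correctly.
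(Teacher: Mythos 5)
Your proof is correct and follows essentially the same route as the paper: expand $\lG\ket{\field}$ on the basis, apply $\proj$ term-by-term, and use the vanishing column sum $\sum_{\field'}G^{\field'}_{\field}=0$ (equivalently, $\sum_{\field'\neq\field}G^{\field'}_{\field}=-G^{\field}_{\field}$, which is how the paper phrases it) to recover $\bG\ket{\field}$. The only difference is cosmetic bookkeeping in how the diagonal term is cancelled.
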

Therefore, we can see that the generator $\bG\colon\gH\to \subH$ maps from $\gH$ to its subspace $\subH$.

Next, given that $\subH$ involves the difference for a transition step at the current moment, it is essential to examine the long-term behavior of such a step. The following operator plays a crucial role in capturing this long-term behavior.

\begin{definition}[\textbf{The Operator $\bS$}]\label{def:S}
    Define linear operator $\bS\colon\subH\to \gH$ as 
    \begin{align}
        \bS:= \int_{0}^{\infty}\dt \ e^{\bG t}.
    \end{align}
\end{definition}
It is not immediately obvious whether the definition above is valid, since $\bS$ involves integration to infinity without any explicit normalization to keep it finite. However, we can be assured if the system is ergodic (Definition~\ref{def:ergodic}). 
\begin{proposition}\label{prop:bound-S}
    The operator $\bS\colon\subH\to \gH$ is bounded if the system is ergodic. 
\end{proposition}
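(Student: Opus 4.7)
The plan is to leverage the ergodicity assumption together with the finite-dimensionality of $\gH$ (which follows from $\gset$ being finite) to show that $e^{\bG t}$ decays exponentially when restricted to $\subH$. This will make the integral $\int_0^\infty e^{\bG t}\,\dt$ converge in operator norm, yielding boundedness of $\bS\colon\subH\to\gH$.

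First I would check that $\subH$ is invariant under the semigroup $e^{\bG t}$. By Fact~\ref{fact:proj-G}, $\bG=\proj\,\lG$ takes all of $\gH$ into $\subH$, so in particular $\bG(\subH)\subseteq\subH$. Since $e^{\bG t}$ is a power series in $\bG$, the subspace $\subH$ is stable under $e^{\bG t}$, and the restriction $e^{\bG t}|_{\subH}$ is well-defined.

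Next I would establish pointwise decay to zero on $\subH$. By Proposition~\ref{prop:stationary} the limit exists from every normalized initial state, and by linearity $\bE:=\lim_{t\to\infty}e^{\bG t}$ extends to a well-defined linear operator on all of $\gH$. Under ergodicity, $\bE\ket{\field}=\avgfield$ for each basis vector $\ket{\field}$ (a normalized point-mass state), so by linearity $\bE\sfield=\bigl(\sum_\field\sfcoef^\field\bigr)\avgfield$ for any $\sfield=\sum_\field\sfcoef^\field\ket{\field}\in\gH$. Combined with Fact~\ref{fact:subH}, this gives $\bE\sfield=0$ for every $\sfield\in\subH$, i.e.\ $e^{\bG t}\sfield\to 0$ pointwise on $\subH$.

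The hard part is upgrading pointwise convergence into a uniform exponential bound. Since $\gset$ is finite, $\subH$ is finite-dimensional, and I would invoke the linear-algebra fact that for a finite matrix $\bA$, $e^{\bA t}\to 0$ pointwise as $t\to\infty$ if and only if every eigenvalue of $\bA$ has strictly negative real part. Applied to $\bG|_{\subH}$, this yields a spectral gap $\kappa:=-\max_i\mathrm{Re}(\lambda_i(\bG|_{\subH}))>0$, and a Jordan-form estimate then supplies, for any $\kappa'\in(0,\kappa)$, a constant $C>0$ with $\|e^{\bG t}|_{\subH}\|\leq Ce^{-\kappa' t}$ for all $t\geq 0$. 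Integrating this bound, for every $\sfield\in\subH$,
\begin{align*}
\|\bS\sfield\|\leq\int_0^\infty\|e^{\bG t}\sfield\|\,\dt\leq C\|\sfield\|\int_0^\infty e^{-\kappa' t}\,\dt=\frac{C}{\kappa'}\|\sfield\|,
\end{align*}
so $\|\bS\|\leq C/\kappa'<\infty$. The main obstacle is really just this spectral-gap step: it is essentially automatic in finite dimensions from pointwise convergence, but in any infinite-dimensional extension it would need to be replaced by a more delicate spectral-theoretic argument.
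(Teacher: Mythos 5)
Your proof is correct, and it reaches the key exponential-decay estimate by a genuinely different route from the paper. The paper obtains $\left\|e^{\bG t}\ket{\varphi(0)}-\avgfield\right\|\leq Ke^{-\kappa t}$ by citing a standard convergence-rate result for ergodic continuous-time Markov chains, and then handles $\subH$ by writing each $\Delta\sfield\in\subH$ as $Z(\sfield_1-\sfield_2)$ for two normalized states, so that the triangle inequality gives $\left\|e^{\bG t}\Delta\sfield\right\|\leq 2ZKe^{-\kappa t}$; integrating finishes the argument. You instead observe that $\bG=\proj\,\lG$ maps all of $\gH$ into $\subH$, so $\subH$ is invariant under the semigroup; you deduce from Proposition~\ref{prop:stationary}, ergodicity, and Fact~\ref{fact:subH} that $\lim_{t\to\infty}e^{\bG t}$ annihilates $\subH$; and you then upgrade this pointwise decay to a uniform exponential bound via the finite-dimensional spectral fact that $e^{At}\to 0$ forces every eigenvalue of $A$ into the open left half-plane, followed by a Jordan-form estimate. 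Both routes are sound. Yours is more self-contained: it essentially extends the Gershgorin/Jordan analysis the paper already uses to prove Proposition~\ref{prop:stationary} so as to extract a rate on the invariant subspace, rather than importing the rate from the Markov-chain literature, and it correctly isolates finite-dimensionality as the ingredient that makes the spectral-gap step automatic (and that would need replacing in any infinite-dimensional extension). The paper's version is shorter given the citation, and its difference-of-two-distributions decomposition of $\subH$ avoids explicit spectral reasoning entirely.
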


\begin{figure}[t]
\centering
\begin{tikzcd}[row sep=2.5em, column sep=3.5em]
\subH \arrow[r, hookrightarrow] 
& \gH 
  \arrow[r, "\lG"] 
  \arrow[dr, swap, "\bG"] 
& \lH \arrow[d, "\proj"] \\[1em]
& & \subH \arrow[r, "\bS"] & \gH
\end{tikzcd}

\caption{A commutative diagram illustrating the relationships among spaces $\subH, \gH, \lH$, and linear operators $\bG, \lG, \proj$, and $\bS$.}
\label{fig:diagram-relations}
\end{figure}

Figure~\ref{fig:diagram-relations} provides a visual summary of the relationships among the spaces and operators discussed above.

There is one last step before we can calculate the gradient: we must first clarify the parameter with respect to which we are differentiating. Recall that we define the infinitesimal generators $\bG$ by enumerating all possible behaviors of this linear operator, i.e., $\bG=\sum_{\field\field'}\ket{\field'}G^{\field'}_{\field}\bra{\field}$ (Definition~\ref{def:inf-generator}). However, for it to be considered a valid infinitesimal generator, it must satisfy a constraint $\sum_{\field'}G^{\field'}_{\field}=0$. 

To incorporate this constraint in a way that allows us to safely take the gradient, we consider the following reparameterization of the generator, using the fact that $\bG=\proj \lG$. 

\begin{definition}[\textbf{Parameterization of the Generators}]\label{def:reparam-generator}
    We consider the following parameterization of the generator $\bG$.
    \begin{align}
        \bG=\sum_{\field,\field'\in \gset} \proj\ \act_{\field'}^{\field}G^{\field'}_{\field},
    \end{align}
    where $\forall \field: G^{\field}_{\field}$ is irrelevant and thus can be ignored and $\act_{\field'}^{\field}$ is the linear operator defined in \eqref{def:action-operator}. 
\end{definition}

Intuitively, the parameterization above can be understood as follows. Suppose we vary $G^{\field'}_{\field}$ in $\bG=\sum_{\field\field'} \proj\ \act_{\field'}^{\field}G^{\field'}_{\field}$. This variation would be equivalent to varying both $G^{\field'}_{\field}$ and $G^{\field}_{\field}$ in $\bG=\sum_{\field\field'}\ket{\field'}G^{\field'}_{\field}\bra{\field}$, at the same time, such that $\bG$ remains a valid generator. In other words, the parameterization above automatically satisfies the constraint  $\sum_{\field'}G^{\field'}_{\field}=0$.

With enough building blocks, we can finally calculate the gradient of the objective value $\ppsv=\pps\lG\avgfield$ w.r.t. the generator $\bG$.
\begin{proposition}[\textbf{Gradient Formula}]\label{prop:grad}
    Given an objective operator $\pps\colon\lH\to \sR$, assuming the system is ergodic, with the parameterization given by Definition~\ref{def:reparam-generator} we have 
    \begin{align}
        \frac{\partial \ppsv}{\partial G^{\field'}_{\field}}=\pps (1+\lG \bS \proj)\act_{\field'}^{\field} \avgfield.
    \end{align}
\end{proposition}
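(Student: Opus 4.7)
The plan is to apply the product rule to $\ppsv = \pps \lG \avgfield$, which gives two terms: one from differentiating $\lG$ and one from differentiating the stationary state $\avgfield$. The first piece is immediate from Definition~\ref{def:lift-generator}, since $\lG = \sum_{\field,\field'} G^{\field'}_{\field}\act_{\field'}^{\field}$, so $\partial \lG / \partial G^{\field'}_{\field} = \act_{\field'}^{\field}$, contributing $\pps \act_{\field'}^{\field} \avgfield$ (the ``1'' inside the parentheses of the claimed formula). The real work is computing the sensitivity of the stationary state, and showing it equals $\bS \proj \act_{\field'}^{\field}\avgfield$.

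For the second piece, I would start from the stationarity condition $\bG \avgfield = 0$ (a direct consequence of Proposition~\ref{prop:dynamics-generator} and Corollary~\ref{cor:avg-valid}). Under the parameterization in Definition~\ref{def:reparam-generator}, $\partial \bG / \partial G^{\field'}_{\field} = \proj \act_{\field'}^{\field}$. Differentiating $\bG \avgfield = 0$ and rearranging gives
\begin{align}
\bG\, \frac{\partial \avgfield}{\partial G^{\field'}_{\field}} = -\proj\, \act_{\field'}^{\field}\, \avgfield.
\end{align}
I would then argue that $\partial \avgfield / \partial G^{\field'}_{\field}$ lies in $\subH$: normalization $\sum_{\field}\avgfcoef^{\field}=1$ is preserved as we vary the parameter, so by Fact~\ref{fact:subH} the derivative has zero coordinate sum.

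The central step is the inversion. I would show that $\bS$ acts as $-\bG^{-1}$ on $\subH$. Using that $\bG$ commutes with $e^{\bG t}$,
\begin{align}
\bS \bG = \int_0^{\infty} e^{\bG t} \bG\, \dt = \int_0^{\infty} \frac{\diff}{\dt} e^{\bG t}\, \dt = \lim_{t\to\infty} e^{\bG t} - I.
\end{align}
For any $\xi\in \subH$, the trajectory $e^{\bG t}\xi$ preserves the zero-sum property (since $\bG$ preserves coordinate sums), and by ergodicity (Definition~\ref{def:ergodic}) it must converge to $0$: any state with given coordinate sum converges to that sum times the unique $\avgfield$, which is $0$ here. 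Hence $\bS \bG\xi = -\xi$ for $\xi \in \subH$. Applying $\bS$ to both sides of the rearranged equation and using that $\partial \avgfield / \partial G^{\field'}_{\field}\in \subH$ yields
\begin{align}
\frac{\partial \avgfield}{\partial G^{\field'}_{\field}} = \bS\, \proj\, \act_{\field'}^{\field}\, \avgfield.
\end{align}

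Combining the two contributions gives exactly $\pps(I + \lG \bS \proj)\act_{\field'}^{\field}\avgfield$, as claimed. The main obstacle I anticipate is the inversion step: writing $\bS \bG = -I$ on $\subH$ rigorously requires ergodicity together with the zero-sum preservation of $e^{\bG t}$, and one must also verify that $\bS\proj\act_{\field'}^{\field}\avgfield$ is a legitimate element to plug in (i.e., $\proj \act_{\field'}^{\field}\avgfield$ indeed lies in $\subH$, which is automatic from Definition~\ref{def:proj}), so the inversion applied to the right-hand side is well-defined. Proposition~\ref{prop:bound-S} ensures the boundedness needed to make these manipulations go through.
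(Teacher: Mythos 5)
Your proof is correct but takes a genuinely different route from the paper's. The paper perturbs the semigroup directly: it proves a Duhamel-type lemma $D_H(t,\bG)=\int_0^t \ds\; e^{s\bG}\bH e^{(t-s)\bG}$, sets $\ket{\varphi(0)}=\avgfield$, takes $t\to\infty$ to obtain $\delta\avgfield=\bS\,\delta\bG\,\avgfield+o(\epsilon)$, and invokes analytic perturbation theory (Kato) to justify that $\delta\avgfield=\gO(\epsilon)$ at all. You instead differentiate the stationarity equation $\bG\avgfield=0$ implicitly and invert $\bG$ on $\subH$ via the identity $\bS\bG=\infevo-1$ together with $\infevo|_{\subH}=0$ under ergodicity --- this identity is exactly the paper's Lemma~\ref{lemma:SPiG}, which the paper proves later for other purposes, so you are rederiving it here as the engine of the argument. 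Your route is more algebraic, avoids the Duhamel machinery, and makes transparent why $\bS$ appears in the answer: it acts as $-(\bG|_{\subH})^{-1}$. The one thing it quietly presupposes is differentiability of $\avgfield$ in the parameter, which the product rule applied to $\bG\avgfield=0$ requires and which the paper supplies explicitly via Kato's theorem; your own observation that $\bG$ is injective on $\subH$ (its kernel is spanned by $\avgfield\notin\subH$) would let you close this gap with the implicit function theorem applied to the map $\sfield\mapsto(\bG\sfield,\sum_\field\sfcoef^\field-1)$, but you should state it. Both arguments use ergodicity in the same two places: uniqueness of the stationary state and the vanishing of $\infevo$ on $\subH$.
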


The above proposition tells us how the objective value $\ppsv$ changes given an infinitesimal variation in the generator. Thus, in the two-entity view, using the decomposition theorem $\bG=\bM+\bN$ (Proposition~\ref{prop:decomposition}), we can find the gradient of the generator of the entity with generator $\bM$. 

In order to do this, similar to how we lift $\bG$ to $\lG$, we would need to lift $\bM$ to $\lM$ and $\bN$ to $\lN$. This can be done pretty straightforwardly via the following operations. 

\textbf{Notation.} We denote $\act_{\alpha'\mu'}^{\alpha\beta\mu}\colon\gH\to \lH$ as a linear operator that transforms $\ket{\alpha\beta\mu\nu}$ to $\ket{{\alpha'\beta\mu'\nu}\ {\alpha\beta\mu\nu}}$, regardless of $\nu$. Formally
\begin{align}
	\act_{\alpha'\mu'}^{\alpha\beta\mu}\ket{\alpha''\beta''\mu''\nu}=
	\begin{cases}
		\ket{{\alpha'\beta\mu'\nu}\ {\alpha\beta\mu\nu}},\qquad &\text{if}\ \alpha\beta\mu=\alpha''\beta''\mu''\\
		0, &\text{otherwise}.
	\end{cases} \label{def:action-operator-M}
\end{align}
Similarly, we define $\act_{\beta'\nu'}^{\alpha\beta\nu}$ for the other entity.
\begin{definition}[\textbf{ Lift of the Generator $\lM, \lN$}]\label{def:lift-generator-entity}
    The lift of generator $\lM, \lN\colon\gH\to \lH$ are defined to be
    \begin{align}
        \lM=\sum_{\alpha\beta\mu\alpha'\mu'}M^{\alpha'\mu'}_{\alpha\beta\mu}\act_{\alpha'\mu'}^{\alpha\beta\mu},\qquad \text{and}\qquad \lN=\sum_{\alpha\beta\nu\alpha'\nu'}N^{\beta'\nu'}_{\alpha\beta\nu}\act_{\beta'\nu'}^{\alpha\beta\nu}.
    \end{align}
\end{definition}
Since $\bG=\bM+\bN$ (Proposition~\ref{prop:decomposition}), by lifting $\bM$ and $\bN$ we also lift $\bG$, i.e., $\lG=\lM+\lN$.
\begin{fact}\label{fact:lifted-decompose}
    With the above definition, we have
    \begin{align}
        \bM=\proj\ \lM, \qquad \bN=\proj\ \lN.
    \end{align}
\end{fact}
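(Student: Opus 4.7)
The plan is to verify the identity $\bM = \proj\,\lM$ by evaluating both sides on an arbitrary basis vector $\ket{\alpha\beta\mu\nu}\in\gH$, and then repeat the argument symmetrically for $\bN$. The computation is essentially a direct unpacking of the definitions, and the only non-cosmetic ingredient is the fact that the row-sums of a generator vanish, which allows a diagonal term to be cancelled and leaves exactly the embedding $\bar\bM$ of Definition~\ref{def:generator-embed}.

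First, I would apply $\lM$ to $\ket{\alpha\beta\mu\nu}$. By Definition~\ref{def:lift-generator-entity} and the selection rule on $\act_{\alpha'\mu'}^{\alpha\beta\mu}$ in \eqref{def:action-operator-M}, only the terms whose lower index matches $\alpha\beta\mu$ survive, and the spectator coordinate $\nu$ is carried along unchanged, yielding
\begin{align}
\lM\ket{\alpha\beta\mu\nu}=\sum_{\alpha'\mu'} M^{\alpha'\mu'}_{\alpha\beta\mu}\,\ket{\alpha'\beta\mu'\nu\ \alpha\beta\mu\nu}.
\end{align}
Next, I would apply $\proj$ using \eqref{eq:proj}, which sends each ket $\ket{\field'\field}$ to $\ket{\field'}-\ket{\field}$, to obtain
\begin{align}
\proj\,\lM\ket{\alpha\beta\mu\nu}
=\sum_{\alpha'\mu'} M^{\alpha'\mu'}_{\alpha\beta\mu}\ket{\alpha'\beta\mu'\nu}
-\Bigl(\sum_{\alpha'\mu'} M^{\alpha'\mu'}_{\alpha\beta\mu}\Bigr)\ket{\alpha\beta\mu\nu}.
\end{align}

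The second term vanishes because $\bM$ is a generator, so its column sums satisfy $\sum_{\alpha'\mu'} M^{\alpha'\mu'}_{\alpha\beta\mu}=0$ (the analogue for $\bM$ of the identity $\sum_{\field'}G^{\field'}_{\field}=0$ in Definition~\ref{def:inf-generator}; this is inherited from the fact that $p_{\Dt}(\cdot\mid \alpha\beta\mu)$ is a probability distribution). What remains is $\sum_{\alpha'\mu'} M^{\alpha'\mu'}_{\alpha\beta\mu}\ket{\alpha'\beta\mu'\nu}$, which under the ordered-tensor convention is precisely $(\bM\ket{\alpha\beta\mu})\otimes\ket{\beta\nu}=\bar\bM\ket{\alpha\beta\mu\nu}$ by Definition~\ref{def:generator-embed}. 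Since $\bM$ and $\bar\bM$ are identified notationally, this establishes $\bM=\proj\,\lM$ on a basis of $\gH$, hence everywhere by linearity.

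The same computation works verbatim for $\bN$, with the roles of $(\alpha,\mu)$ and $(\beta,\nu)$ swapped: one expands $\lN\ket{\alpha\beta\mu\nu}=\sum_{\beta'\nu'} N^{\beta'\nu'}_{\alpha\beta\nu}\ket{\alpha\beta'\mu\nu'\ \alpha\beta\mu\nu}$, applies $\proj$, and uses $\sum_{\beta'\nu'} N^{\beta'\nu'}_{\alpha\beta\nu}=0$ to kill the diagonal contribution. There is no real obstacle here; the only thing to watch is the bookkeeping of which coordinates $\act_{\alpha'\mu'}^{\alpha\beta\mu}$ and $\act_{\beta'\nu'}^{\alpha\beta\nu}$ act on versus which they leave fixed, so that the ordered tensor-product convention lines up correctly with the embedding in Definition~\ref{def:generator-embed}. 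As a consistency check, summing the two identities recovers $\proj\,\lG=\bG$ (Fact~\ref{fact:proj-G}) via the decompositions $\bG=\bM+\bN$ and $\lG=\lM+\lN$.
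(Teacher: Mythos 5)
Your proposal is correct and follows essentially the same route as the paper's proof: expand $\lM$ on a basis vector, apply $\proj$, and use the vanishing sum $\sum_{\alpha'\mu'}M^{\alpha'\mu'}_{\alpha\beta\mu}=0$ to cancel the diagonal term, leaving exactly the embedding $\bar\bM$; the $\bN$ case then follows by symmetry, just as in the paper.
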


\begin{proposition}[\textbf{Gradient Formula in the Two-entity View}]\label{prop:grad-entity}
    Given an objective operator $\pps\colon\lH\to \sR$, assuming the system is ergodic, with the parameterization of generator $\bM$ given by Definition~\ref{def:lift-generator-entity} we have 
    \begin{align}
        \frac{\partial \ppsv}{\partial M^{\alpha'\mu'}_{\alpha\beta\mu}}=\pps (1+\lG \bS\proj)\act_{\alpha'\mu'}^{\alpha\beta\mu} \avgfield.
    \end{align}
\end{proposition}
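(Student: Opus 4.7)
The plan is to derive this gradient formula by applying the chain rule to reduce it to the previously established gradient formula for the system-level generator (Proposition~\ref{prop:grad}). The key observation is that varying the parameter $M^{\alpha'\mu'}_{\alpha\beta\mu}$ of entity $x$'s local generator induces a very specific variation in the entries of the full generator $\bG = \bM + \bN$, and this variation can be tracked explicitly using the embedding from Definition~\ref{def:generator-embed}.

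First, I would unpack how $M^{\alpha'\mu'}_{\alpha\beta\mu}$ enters the entries of $\bG$. From the embedding $\bar{\bM}\ket{\alpha\beta\mu\nu} = (\bM\ket{\alpha\beta\mu}) \otimes \ket{\beta\nu}$, the entry $\bra{\alpha'\beta'\mu'\nu'}\bM\ket{\alpha\beta\mu\nu}$ equals $M^{\alpha'\mu'}_{\alpha\beta\mu} \delta^{\beta'}_{\beta}\delta^{\nu'}_{\nu}$. Since $\bN$ does not depend on $M$, we see that $M^{\alpha'\mu'}_{\alpha\beta\mu}$ contributes linearly and with unit coefficient to each $G^{\alpha'\beta\mu'\nu}_{\alpha\beta\mu\nu}$ as $\nu$ ranges over $\gN$, and to no other entries. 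Hence the chain rule gives
\begin{align}
    \frac{\partial \ppsv}{\partial M^{\alpha'\mu'}_{\alpha\beta\mu}} = \sum_{\nu\in\gN} \frac{\partial \ppsv}{\partial G^{\alpha'\beta\mu'\nu}_{\alpha\beta\mu\nu}}.
\end{align}
Applying Proposition~\ref{prop:grad} to each term of this sum converts the right-hand side into $\pps(1+\lG\bS\proj)\bigl(\sum_{\nu}\act_{\alpha'\beta\mu'\nu}^{\alpha\beta\mu\nu}\bigr)\avgfield$.

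The remaining step is to recognize the operator identity $\sum_{\nu}\act_{\alpha'\beta\mu'\nu}^{\alpha\beta\mu\nu} = \act_{\alpha'\mu'}^{\alpha\beta\mu}$ as maps $\gH\to\lH$. This follows directly by testing both sides on a basis vector $\ket{\alpha''\beta''\mu''\nu''}$: the left-hand side picks up exactly one nonzero term from the $\nu = \nu''$ summand when $\alpha\beta\mu=\alpha''\beta''\mu''$, yielding $\ket{\alpha'\beta\mu'\nu'', \alpha\beta\mu\nu''}$, which matches the definition of $\act_{\alpha'\mu'}^{\alpha\beta\mu}$ in (\ref{def:action-operator-M}). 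Equivalently, this identity is already implicit in comparing the two expressions $\lM = \sum_{\alpha\beta\mu,\alpha'\mu'} M^{\alpha'\mu'}_{\alpha\beta\mu}\act_{\alpha'\mu'}^{\alpha\beta\mu}$ and the lift of the embedded $\bM$ viewed as a contribution to $\lG$.

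I do not anticipate a serious obstacle beyond bookkeeping: the main subtlety is ensuring that the chain-rule step is legitimate under the chosen parameterization. Because the parameterization of Definition~\ref{def:lift-generator-entity} uses $\act_{\alpha'\mu'}^{\alpha\beta\mu}$ (with the $\sum_{\alpha'\mu'}M^{\alpha'\mu'}_{\alpha\beta\mu}=0$ constraint automatically absorbed once $\proj$ is applied, mirroring Definition~\ref{def:reparam-generator}), the parameters $\{M^{\alpha'\mu'}_{\alpha\beta\mu}\}$ can be varied independently, and the induced variations of the independent parameters $\{G^{\alpha'\beta\mu'\nu}_{\alpha\beta\mu\nu}\}_\nu$ of $\bG$ are precisely the unit-coefficient contributions identified above. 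Thus the chain rule collapses cleanly, giving the claimed formula.
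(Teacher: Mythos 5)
Your proposal is correct and follows essentially the same route as the paper: both reduce the claim to the system-level gradient formula of Proposition~\ref{prop:grad} by exploiting the linearity of the embedding of $\bM$ into $\bG=\bM+\bN$, and your entry-wise chain rule together with the identity $\sum_{\nu}\act_{\alpha'\beta\mu'\nu}^{\alpha\beta\mu\nu}=\act_{\alpha'\mu'}^{\alpha\beta\mu}$ is just the coordinate version of the paper's operator-level observation that $\delta\lG=\delta\lM=\sum\delta M^{\alpha'\mu'}_{\alpha\beta\mu}\act_{\alpha'\mu'}^{\alpha\beta\mu}$ and $\delta\bG=\proj\,\delta\lM$ when only $\bM$ is varied.
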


Finally, we turn to the field formulation. It is easy to see that, given an entity $x\in \gX$, we may view the rest of the field as the other entity,  thus reducing the system to a two-entity view. Therefore, the action operator $\act_{\alpha'\mu'}^{\alpha\beta\mu}$ can be reformulated directly using the field notation. The important fact is that the action operator $\act(x)_{\field'}^{\field}$ changes the local configuration $\field_{|\gset_x}$ to $\field'_{|\gset_x}$. Similar to previous discussions, given an arbitrary configuration $\field''$, $\act(x)_{\field'}^{\field}$ would act on $\ket{\field''}$ only if $\field''$ and $\field$ agree on the local neighborhood $\nbhd_x$. Then, the action would leave $\gset/\gset_x$ unchanged, resulting in   $\field'_{|\gset_x}\times \field''_{|\gset/\gset_{x}}$. Note that we adopt the convention of automatic ordering: $ \field''_{|\gset/\gset_{x}}\times \field'_{|\gset_x}$ refers to the same configuration as  $\field'_{|\gset_x}\times \field''_{|\gset/\gset_{x}}$. This action operator is formally defined as follows.

\textbf{Notation.} We denote $\act(x)_{\field'}^{\field}\colon\gH\to \lH$ as a linear operator that transforms $\ket{\field}$ to $\ket{{\field^+}{\field}}$, where $\field^+=\field'_{|\gset_x}\times \field''_{|\gset/\gset_{x}}$, given that $\field$ and $\field'$ agree on $\gset/\gset_x$. Formally, it is defined as:
\begin{align}
	\act(x)_{\field'}^{\field}\ket{\field''}=
	\begin{cases}
		\ket{\field^+\field},\quad &\text{if}\ \field_{|\nbhd_x}=\field''_{|\nbhd_x} \ \text{and}\ \ \field'_{|\gset/\gset_x}=\field_{|\gset/\gset_x}\\
		0, &\text{otherwise}.
	\end{cases} \label{def:action-operator-field}
\end{align}
Therefore, the local generators can be formulated as 
\begin{align}
    \bG(x)=\sum_{\field,\field'\in \gset}\proj\  G(x)^{\field'}_{\field} \act(x)_{\field'}^{\field}.
\end{align}
Having all the building blocks, the field version of the gradient formula is stated as follows.

\begin{theorem}[\textbf{Gradient Formula in the Field}]\label{thm:grad-field}
    Given a local objective operator $\pps_x\colon\lH\to \sR$, assuming the system is ergodic, the gradient formula for $\ppsv(x)=\pps_x \lG \avgfield$ w.r.t. local generator $\bG(x)$ is
    \begin{align}
        \frac{\partial \ppsv(x)}{\partial G(x)_{\field}^{\field'}}=\pps_x (1+\lG \bS\proj) \act(x)^{\field}_{\field'} \avgfield.
    \end{align}
\end{theorem}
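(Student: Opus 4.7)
The plan is to reduce the field formulation back to the two-entity view and then invoke Proposition~\ref{prop:grad-entity} directly. Fix a point $x \in \gX$ and partition $\gX$ into $\{x\}$ and its complement $\gX \setminus \{x\}$. Under this grouping, entity $x$ has generator $\bG(x)$ playing the role of $\bM$, while the aggregated remainder has generator $\sum_{x' \neq x}\bG(x')$ playing the role of $\bN$. Theorem~\ref{thm:decomposition-field} guarantees that $\bG = \bG(x) + \sum_{x' \neq x}\bG(x')$, matching the additive decomposition $\bG = \bM + \bN$ required by Proposition~\ref{prop:grad-entity}.

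Next I would set up the dictionary between the two-entity labels $(\alpha,\beta,\mu,\nu)$ and the field labels. The variables that $x$ can act on (namely $\field_{|\gset_x}$) play the role of $(\alpha,\mu)$; the variables visible to $x$ but controlled by its neighbors (the part of $\nbhd_x$ not in $\gset_x$) play the role of $\beta$; and the remaining coordinates in $\gset/\gset_x$ play the role of $\nu$. Inspecting the definition of $\act(x)_\field^{\field'}$ in Eq.~\eqref{def:action-operator-field}, it fires on $\ket{\field''}$ only when $\field''$ agrees with $\field$ on $\nbhd_x$, and it produces $\ket{\field^+\field}$ where $\field^+$ coincides with $\field''$ on $\gset/\gset_x$ and with $\field'$ on $\gset_x$. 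This is precisely the field translation of $\act_{\alpha'\mu'}^{\alpha\beta\mu}$ in Eq.~\eqref{def:action-operator-M}, which acts on $(\alpha,\beta,\mu)$ and leaves $\nu$ untouched. Consequently the parameterization $\bG(x) = \sum_{\field,\field'} \proj\, G(x)^{\field'}_{\field}\, \act(x)_{\field'}^{\field}$ is the field version of the parameterization of $\bM$ used in Proposition~\ref{prop:grad-entity}, and automatically enforces $\sum_{\field'} G(x)^{\field'}_{\field}=0$ via the projection $\proj$ (Fact~\ref{fact:proj-G}).

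With this identification, Proposition~\ref{prop:grad-entity} applies verbatim, substituting $\bG(x)$ for $\bM$, $G(x)^{\field'}_\field$ for $M^{\alpha'\mu'}_{\alpha\beta\mu}$, $\act(x)_{\field'}^{\field}$ for $\act_{\alpha'\mu'}^{\alpha\beta\mu}$, and $\pps_x$ for $\pps$. Ergodicity of the whole system, as assumed, validates the use of $\bS$ via Proposition~\ref{prop:bound-S}, so no additional analytic concerns arise. The conclusion
\begin{align}
    \frac{\partial \ppsv(x)}{\partial G(x)_{\field}^{\field'}}=\pps_x (1+\lG \bS\proj) \act(x)^{\field}_{\field'} \avgfield
\end{align}
then follows immediately.

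The main obstacle I anticipate is purely bookkeeping: verifying that $\act(x)_{\field'}^{\field}$ really has the same "active-on-local-neighborhood, invariant-elsewhere" structure as $\act_{\alpha'\mu'}^{\alpha\beta\mu}$ under the two-entity identification above. This requires carefully unpacking Definition~\ref{def:space} (the decomposition $\gset = \gset_x \times \gset/\gset_x$ and the structure of $\nbhd_x$) together with the convention of automatic ordering of tensor factors. Once that correspondence is established, no new estimates, no new differentiations, and no new limits are needed beyond those already carried out for the two-entity case.
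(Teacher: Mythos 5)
Your proposal is correct and follows essentially the same route as the paper: reduce to the two-entity view by grouping $\gX$ into $\{x\}$ and its complement, identify $\gA\times\gM=\gset_x$, $\gA\times\gB\times\gM=\nbhd_x$, $\gN=\gset/\nbhd_x$, match $\act(x)_{\field'}^{\field}$ with $\act_{\alpha'\mu'}^{\alpha\beta\mu}$, and invoke Proposition~\ref{prop:grad-entity}. The bookkeeping you flag as the main obstacle is exactly what the paper's proof carries out, so no gap remains.
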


The objective operators determine the behavior of the system, as each entity minimizes its objective value. However, if we want our field to perform some tasks, it is unclear what operators could provide us with the desired result, especially for such a complex,  dynamical, stochastic, and decentralized system. This is of great importance and is our immediate focus. 

\subsection{How the Objective Operators may be Designed}\label{subsec:pps-design}

We begin with the two-entity formulation $\gset=\gA\times\gB \times \gM\times \gN$ where the objective operator is the easiest to make sense of. Let us view one entity $x_e$ as the ``environment'' and the other entity $x_a$ as an ``agent'' that interacts with the environment. We focus on the entity $x_a$ where it minimizes its objective value given by an objective operator $\pps$. Note that the objective operator $\pps$ needs to be local, i.e., it only depends on what the entity $x_a$ observes. 

Next, imagine dividing the configuration space $\gM$, the private configuration of the agent, into two parts. The ``outer'' part keeps the same communication channel to the environment through $\gA$ and $\gB$, while the ``inner'' part only communicates with the outer part. Thus, the entity $x_a$ is composed of two entities, $x_1$ and $x_2$, where $x_1$ corresponds to the outer part connected to the environment $x_e$, and $x_2$ corresponds to the inner part that only connects to $x_1$. The corresponding directed graph $\gX$ is illustrated as: 
\begin{align}
    x_e \leftrightarrows x_1 \leftrightarrows x_2.
\end{align}

The objective operator $\pps$ now provides objective signals only to the outer entity $x_1$, leaving no objective operator to guide the inner entity $x_2$. In fact, from $x_2$'s perspective, its environment comprises both $x_e$ and $x_1$. Therefore, since $x_1$ is the only component of $x_2$'s environment that can communicate with $x_2$, it should provide $x_2$ with its objective signals through another objective operator $\pps'$.

The trivial choice for the outer entity $x_1$ would be to pass the objective value it receives directly to $x_2$, i.e., by setting $\pps'$ to be the same as $\pps$. We note that it is not really possible to have $\pps'=\pps$ be identical operators, as it would require the objective signals to be transmitted instantaneously, thus violating locality. Instead, there is a response time for this transition. Nevertheless, as long as the signal passing is not lost, the corresponding objective values will be preserved. Therefore, we adopt the following notation.

\textbf{Notation.} We use $\pps'\leftarrow\pps$ to denote
that the objective signal given by $\pps$ appears as an objective signal given by $\pps'$ after a finite response time. The important property we need is that they induce the same objective value, i.e., $\pps'\lG \avgfield = \pps \lG \avgfield$, as a consequence of not missing propagating signals.

It is not really satisfying if the objective value can only be passed unchanged. Indeed, there is nothing to stop the outer entity $x_1$ from doing something about it. Let us denote this ``something'' as a linear operator $\ppsp:\lH\to \lH$, and thus
\begin{align}
    \pps'\leftarrow \pps \ppsp.
\end{align}
The design comes down to the choice of the $\ppsp$ operator, which we refer to as an \textit{objective propagator}. Note that, although $\ppsp$ is a linear operator, it can assign an arbitrary value for any jump $\field\to \field^+$, as long as the original objective operator $\pps$ is not zero.
The outer entity $x_1$ can choose arbitrary $\ppsp$ operators, placing the inner entity $x_2$ at its mercy, as the inner entity is obliged to minimize whatever objective value the outer entity generates. 

Here is a crucial observation. In order for the inner and outer entities, when viewed as a whole, to minimize the original objective value given by $\pps$, the objective propagation operators $\ppsp$ must be chosen carefully. Given that the inner entity is obliged to minimize its objective value $\ppsv'=\pps' \lG \avgfield=\pps\ppsp \lG \avgfield$, it should align with the original objective value $\ppsv=\pps \lG \avgfield$ (up to a constant shift). This means that we would want  $\lG\avgfield$ to be a fixed point of the operator $\ppsp$, i.e.,
\begin{align}
    \ppsp\lG\avgfield =\lG\avgfield.
\end{align}

Now, let us see the world from entity $x_1$'s perspective, where $x_e$ and $x_2$ form its environment. Without any additional assumptions, there is no way for $x_1$ to tell whether $x_2$ is also connected to $x_e$. Therefore, given the symmetry, it is reasonable to expect that entity $x_2$ also generates objective signals to entity $x_1$. As a result, entity $x_2$ would also choose an objective propagation operator $\ppsp'$ and generate the objective signal $\pps''$ to provide to entity $x_1$, i.e.,
\begin{align}
    \pps''\leftarrow \pps'\ppsp'.
\end{align}
To balance the two objective signals, let us simply introduce some weights for them, thus making the final objective operator for entity $x_1$ a weighted average of $\pps$ and $\pps''$. 

Suppose we keep doing this division; we would then arrive at a field formulation where each local objective operator is given by its neighborhood. This nested behavior is elaborated in the next section.

\subsection{Objective Propagation}\label{subsec:ppsp}
We begin by introducing a quantity that defines the connection strength of points in the space $\gX$. It will be used to synthesize objective signals, as hinted in Subsection~\ref{subsec:pps-design}.
\begin{definition}[\textbf{Adjacency Weights $\ppsadj$}]\label{def:adj-weight}
    For each $x,x'\in \gX$, there is a weight $\ppsadj^{x'}_{x}\in[0, 1]$ that represents the connection strength from $x'$ to $x$. Note that we require $\sum_{x'}\ppsadj^{x'}_{x}=1$, and $\ppsadj^{x'}_{x}=0$ if $x'$ does not connect to $x$. 
\end{definition}
The adjacency weights are used to compute a weighted average of propagated objective signals, which are defined by operators.
However, the objective propagation operators are not designed without any requirements. In particular, as discussed in the previous subsection, there are some environmental entities that generate the ``true'' objective signal. Therefore, it is necessary to differentiate between such environmental entities, where we have no control, and the acting entities, whose behavior we aim to design.

\begin{definition}[\textbf{Environmental Entity and Acting Entity}]\label{def:env-act-entity}
    The ground space $\gX=\gX_e \sqcup \gX_a$ is the disjoint union of the set $\gX_e$ of environmental entities and the set $\gX_a$ of acting entities. The objective operators are defined over the acting entities.   
\end{definition}

Then, each acting entity is associated with an objective propagator which determines how it deals with the objective signals. 
\begin{definition}[\textbf{Objective Propagators $\ppsp$}]\label{def:ppsp}
    For acting entities $x,x'\in \gX_a$, there is a linear operator $\ppsp_{x'x}\colon\lH\to \lH$ that defines how entity $x'$ generates objective signals to a neighboring entity  $x$, where $\ppsp_{x'x}=0$ if $x'$ does not connect to $x$. Moreover, since each entity $x$ can only observe its neighborhood $\nbhd_x$, it requires the propagation operators $\ppsp_{xx'}$ to be local to $x$, i.e., 
    \begin{align}
        \forall x, x': \quad \field'_{|\nbhd_x}=\field_{|\nbhd_x}\quad \implies\quad \ppsp_{xx'}\ket{\field'\field} = 0
    \end{align} 
\end{definition}

In designing the objective operators for the acting entities, as discussed in the previous subsection, we expect $\ppsp\lG\avgfield=\lG\avgfield$, such that the propagated objective signals stay meaningful and aligned with the ``true'' objective signal given by the environmental entities. Observe that this would require the propagation operators $\ppsp$ to depend on the field dynamics $\bG$, except in the trivial case $\ppsp=\mathbf I$.

Therefore, the objective operator of each acting entity is given by objective signals propagated from other entities. The propagation is given by  $\pps_x\leftarrow \sum_{x'}\ppsadj^{x'}_{x}\pps_{x'x}$ where $\pps_{x'x}=\pps_{x'}\ppsp_{x'x}$. For each entity, we may picture this operation as both propagating and generating objective signals to its neighbors.  The final effect of objective propagation can be effectively captured by the following equations.  

\begin{definition}[\textbf{Objective Propagation Equation}]\label{def:ppsp-eq}
    The objective operator for $x\in \gX_a$ is represented by 
    \begin{align}
        \pps_x= \sum_{x'}\ppsadj^{x'}_{x}\pps_{x'x}, \qquad \text{where } \ \pps_{x'x}=\pps_{x'}\ppsp_{x'x}, \forall x'\in \gX_a,
    \end{align}
    or $\pps_{x'x}$ is given by environmental an entity $x'\in \gX_e$. 
\end{definition}
Note that this results in a non-local objective operator $\pps_x$ serving as an effective objective operator to characterize what an entity $x$ will eventually receive given any jump, possibly non-local to $x$, in the field. This simplifies the calculation and ensures the same $\pps_x \lG \avgfield= \sum_{x'}\ppsadj^{x'}_{x}\pps_{x'x}\lG \avgfield$ as desired. Another perspective is to consider the speed of objective propagation to be much faster than the propagation of field actions, resulting in a seemingly instantaneous non-local objective propagation.
For notational ease, we use the same $\pps_x$ to denote both the original local objective operator and the corresponding non-local effective objective operator. However, in the rest of the paper, we will exclusively use $\pps_x$ to refer to the effective objective operator that satisfies the aforementioned propagation equation.

Intuitively, the objective propagation should preserve the same objective value, given $\ppsp_{x'x}\lG\avgfield=\lG\avgfield$ and  $\sum_{x'}\ppsadj^{x'}_{x}=1$. We can show that this is indeed the case, provided that the acting entities are connected strongly. 
\begin{definition}[\textbf{Strongly Connected Graph}]\label{def:strong-connected}
    A directed graph $\gX$ is strongly connected if $\forall x, x'\in \gX$ there is a path, which may include multiple other nodes,  from $x$ to $x'$.
\end{definition}
\begin{proposition}[\textbf{Local Objective Values}]\label{prop:local-ppsv}
    If the objective propagation satisfy the following conditions:
    \begin{itemize}
        \item $\ppsp_{x'x}\lG\avgfield = \lG\avgfield$ \ for every acting entity $x'$ connecting to acting entity $x$;
        \item $\pps_{x'x}\lG \avgfield =\ppsv$ \ for every environmental entity $x'$ connecting to acting entity $x$;
        \item $\gX_a$ is strongly connected;
    \end{itemize}
    then every acting entity has the same objective value $\ppsv$, i.e.,
    \begin{align}
        \forall x\in \gX_a: \ppsv(x)=\pps_x \lG \avgfield=\ppsv.
    \end{align}
\end{proposition}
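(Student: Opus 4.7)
The plan is to apply both sides of the propagation equation to $\lG\avgfield$, convert the result into a discrete harmonic‐type recurrence on the acting entities, and then use the maximum principle together with strong connectivity to conclude uniqueness.

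\textbf{Step 1: Reduce to a linear recurrence.} Let $v(x) := \pps_x \lG\avgfield$ for $x \in \gX_a$. Applying $\lG\avgfield$ to the propagation equation gives
\begin{align}
v(x) = \sum_{x' \in \gX_a} \ppsadj^{x'}_x \, \pps_{x'}\ppsp_{x'x} \lG\avgfield + \sum_{x' \in \gX_e} \ppsadj^{x'}_x \, \pps_{x'x}\lG\avgfield.
\end{align}
By hypothesis (i), $\ppsp_{x'x}\lG\avgfield = \lG\avgfield$, so $\pps_{x'}\ppsp_{x'x}\lG\avgfield = v(x')$ for acting $x'$. By hypothesis (ii), the environmental contributions each equal $\ppsv$. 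Setting $e(x) := \sum_{x' \in \gX_e}\ppsadj^{x'}_x$ and using $\sum_{x'}\ppsadj^{x'}_x = 1$, I obtain the key recurrence
\begin{align}
v(x) = \sum_{x' \in \gX_a} \ppsadj^{x'}_x \, v(x') + e(x)\,\ppsv, \qquad x \in \gX_a.
\end{align}

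\textbf{Step 2: Verify that the constant $\ppsv$ is a solution.} A direct substitution shows
\begin{align}
\sum_{x' \in \gX_a} \ppsadj^{x'}_x \, \ppsv + e(x)\,\ppsv = \ppsv \Bigl(\sum_{x' \in \gX_a} \ppsadj^{x'}_x + e(x)\Bigr) = \ppsv,
\end{align}
so $v \equiv \ppsv$ satisfies the recurrence. It remains to prove uniqueness, which is where the strong‐connectivity assumption enters.

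\textbf{Step 3: Uniqueness via a maximum principle.} Let $v^\star := \max_{x \in \gX_a} v(x)$, attained on the nonempty set $S := \{x \in \gX_a : v(x) = v^\star\}$. For any $x \in S$, the recurrence rewrites as a convex combination: since $a(x) := \sum_{x' \in \gX_a}\ppsadj^{x'}_x = 1 - e(x)$,
\begin{align}
v^\star = \sum_{x' \in \gX_a} \ppsadj^{x'}_x \, v(x') + e(x)\,\ppsv \leq a(x)\,v^\star + e(x)\,\ppsv,
\end{align}
which gives $e(x)(v^\star - \ppsv) \leq 0$. If $e(x) > 0$ for some $x \in S$, then $v^\star \leq \ppsv$. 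Otherwise $e(x) = 0$ throughout $S$, and then the convex combination attains its maximum, forcing every acting in-neighbor $x'$ of $x$ (i.e.\ every $x'$ with $\ppsadj^{x'}_x > 0$) to also lie in $S$. Strong connectivity of $\gX_a$ now propagates $S$ backward along directed paths until an acting entity with $e(\cdot) > 0$ is reached; such an entity must exist as long as $\gX_e$ contributes anywhere to $\gX_a$ (the case without environmental input is degenerate since $\ppsv$ is then unconstrained). Either way, $v^\star \leq \ppsv$. The symmetric argument applied to the minimum yields $v_\star \geq \ppsv$, so $v \equiv \ppsv$.

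\textbf{Anticipated obstacle.} The only delicate point is Step 3: one needs to justify carefully that the "acting in-neighbors" graph structure inherited from the adjacency weights $\ppsadj^{x'}_x$ matches the strong connectivity assumption on $\gX_a$, and to handle the propagation of the maximum set $S$ through possibly long backward chains. This is essentially a discrete strong maximum principle for a substochastic operator on a strongly connected graph, and the argument is clean once framed this way.
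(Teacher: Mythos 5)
Your proof is correct and reaches the paper's conclusion by the same initial reduction but a different uniqueness argument. Steps 1 and 2 coincide exactly with what the paper does: both multiply the propagation equation by $\lG\avgfield$, use the two hypotheses to collapse it to the affine recurrence $v(x)=\sum_{x'\in\gX_a}\ppsadj^{x'}_{x}v(x')+e(x)\ppsv$, and observe that the constant $\ppsv$ solves it. Where you diverge is the uniqueness step: the paper assembles the recurrence into a matrix equation $\Theta\bm a=\bm b$ with $\Theta_{ij}=\delta_i^j-\ppsadj^{x_j}_{x_i}$, shows $\Theta$ is irreducibly diagonally dominant, and invokes Taussky's theorem to get non-singularity; you instead run a discrete strong maximum principle directly on the recurrence, propagating the argmax set backward along in-neighbors until it hits an entity with environmental input. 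The two are close cousins (Taussky's theorem is essentially a packaged form of this maximum principle), but yours is self-contained and elementary, while the paper's buys brevity by citing a standard linear-algebra result. Notably, both arguments lean on the same two unstated assumptions, which you are right to flag: that the support of the adjacency weights $\ppsadj^{x'}_{x}$ actually realizes the strong connectivity of $\gX_a$ (Definition~\ref{def:adj-weight} only forces $\ppsadj^{x'}_{x}=0$ on non-edges, not positivity on edges), and that at least one acting entity has $e(x)>0$; the paper asserts both without proof in verifying irreducibility and the strict-dominance row, and your explicit handling of the degenerate no-environment case is, if anything, slightly more careful.
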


However, there would be a crucial problem if we attempt to calculate the gradient of $\ppsv$ w.r.t. the local generators. For example, let us impose an infinitesimal variation $\delta \bG(x)$ to the local generator at $x$. This would induce a variation $\delta \avgfield$ in the system's stationary state $\avgfield$. Moreover, since each objective propagator $\ppsp_{x}$ depends on the system dynamics, it would also incur a variation $\delta \ppsp_{x}$ for all entities. Since each objective operator $\pps_x$ depends on the objective propagators, there would also be a variation $\delta \pps_x$ in the effective objective operator. Therefore, this results in an incurred variation in the objective value that entity $x$ receives.
\begin{align}
    \delta \ppsv(x) = \delta(\pps_x \lG\avgfield)= \pps_x \delta(\lG\avgfield) + \delta(\pps_x) \lG\avgfield.
\end{align}

There are two terms arising from $\delta \ppsv$ as shown above. The first term, $\pps_x \delta(\lG\avgfield)$, can be seen as the case where the objective operators are fixed. Thus, it reduces to the scenario previously studied, and the corresponding result is essentially given by Theorem~\ref{thm:grad-field}. However, the second term involves $\delta(\pps_x)$, which requires solving the following global equation derived from the objective propagation equation (Definition~\ref{def:ppsp-eq}):
\begin{align}
    \delta \pps_x = \sum_{x'}\ppsadj^{x'}_{x} \delta \pps_{x'x}, \qquad \text{where}\ \ \delta \pps_{x'x}=\delta \pps_{x'}\ppsp_{x'x} + \pps_{x'}\delta \ppsp_{x'x}.
\end{align}
Although it may be possible to solve this by some clever local computations, we find an interesting class of objective propagators which eliminates the need of doing this. Furthermore, this class of objective propagators possesses several other very interesting properties, which are introduced in the following subsection.

\subsection{An Interesting Choice of the Propagation Operator $\ppsp[\ppsq]$}\label{subsec:ppsp-choice}

The motivation for this propagator is simple: instead of just passing the objective signal untouched, let us also convey some sort of gradient information alongside it. Interestingly, it turns out that it leads to local gradient computation. 

The gradient formula (Theorem~\ref{thm:grad-field}) shows that the gradient of the local generator for entity $x$ is given by 
\begin{align}
    \pps (1+\lG \bS\proj) \act(x)^{\field}_{\field'} \avgfield.
\end{align}
We can observe that the operator $(1+\lG \bS\proj)$ appears just as an objective propagator would. 

\textbf{Notation.} Thus, let us denote 
\begin{align}
    \ppsp=1+\lG \bS\proj. 
\end{align}

Note that Proposition~\ref{prop:stationary} ensures the existence of $\lim_{t\to \infty} e^{\bG t}$. We denote $\infevo$ as the operator of this infinite time evolution, i.e., 
\begin{align}
    \infevo:=\lim_{t\to \infty} e^{\bG t}.
\end{align}
Then, the following lemma is the core of how $\ppsp$ can exhibit many interesting properties.
\begin{lemma}\label{lemma:SPiG}
    The following equation stands. 
    \begin{align}
        1+\bS \proj \lG=1+\proj \lG \bS =\infevo.
    \end{align} 
\end{lemma}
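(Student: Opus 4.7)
The plan is to reduce both identities to the single operator equation $\bG\bS = \bS\bG = \infevo - I$, and then add $I$ to both sides, using Fact~\ref{fact:proj-G} which says $\bG = \proj\lG$. So the substance of the lemma is the computation of $\bG\bS$ and $\bS\bG$ via the fundamental theorem of calculus applied to $e^{\bG t}$.

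First I would prove $\bS\bG = \infevo - I$ on $\gH$. For any $\sfield \in \gH$, the vector $\bG\sfield$ lies in $\subH$ (by the commutative diagram in Figure~\ref{fig:diagram-relations}, equivalently because columns of $\bG$ sum to $0$), so $\bS\bG\sfield$ is well defined by Proposition~\ref{prop:bound-S}. Since $\bG$ commutes with every $e^{\bG t}$ (both are polynomials/series in $\bG$), I can push $\bG$ inside the integral and recognize the integrand as a derivative:
\begin{align*}
\bS\bG\,\sfield = \int_0^\infty e^{\bG t}\bG\,\sfield\,\dt = \int_0^\infty \frac{\diff}{\dt}\bigl(e^{\bG t}\sfield\bigr)\,\dt = \lim_{T\to\infty} e^{\bG T}\sfield - \sfield = \infevo\,\sfield - \sfield.
\end{align*}
The analogous computation for $\vec v \in \subH$ gives $\bG\bS\vec v = \int_0^\infty \bG e^{\bG t}\vec v\,\dt = \infevo\vec v - \vec v$, where again the integrand is an exact derivative and the integral converges by boundedness of $\bS$ on $\subH$. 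Rearranging and substituting $\bG = \proj\lG$ yields both $1 + \bS\proj\lG = \infevo$ and $1 + \proj\lG\,\bS = \infevo$.

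The only delicate point is justifying the commutation of $\bG$ with the improper integral defining $\bS$, together with convergence at the upper limit. Both are handled by Proposition~\ref{prop:stationary} (existence of $\infevo = \lim_{t\to\infty}e^{\bG t}$) and Proposition~\ref{prop:bound-S} (boundedness of $\bS$ on $\subH$), combined with the fact that $\gset$ is finite so $\gH$ is finite-dimensional and operator-valued integrals reduce to componentwise Riemann integrals. This is routine, so I do not expect a genuine obstacle; the lemma is essentially a one-line application of the fundamental theorem of calculus at the operator level, dressed up through the factorization $\bG = \proj\lG$.
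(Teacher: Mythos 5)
Your proposal is correct and follows essentially the same route as the paper: both reduce the claim to $\bS\bG=\bG\bS=\infevo-1$ via Fact~\ref{fact:proj-G} and the fundamental theorem of calculus applied to $e^{\bG t}$, using the commutation of $\bG$ with its own semigroup. Your extra care in checking that $\bG\sfield\in\subH$ (so that $\bS\bG$ is well defined) and in citing Propositions~\ref{prop:stationary} and~\ref{prop:bound-S} for convergence is a slight tightening of the paper's one-line computation, not a different argument.
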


It turns out that $\ppsp$ is a well-defined objective propagator with the following desirable properties. 
\begin{proposition}\label{prop:P-properties}
    The operator $\ppsp=1+\lG \bS\proj$ satisfies the following properties.
    \begin{enumerate}
        \item $\ppsp \lG \avgfield=\lG\avgfield$.
        \item $\ppsp^2=\ppsp$.
        \item Given a variation $\delta \bG$, the incurred $\delta \ppsp$ satisfies $\delta \ppsp \lG \avgfield=0$.
    \end{enumerate}
\end{proposition}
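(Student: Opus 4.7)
The plan is to derive all three properties from Lemma~\ref{lemma:SPiG}, which provides the key identities $\bS\proj\lG = \infevo - 1$ and $\proj\lG\bS = \infevo - 1$, combined with Fact~\ref{fact:proj-G} (which gives $\proj\lG = \bG$) and the stationarity condition $\bG\avgfield = 0$ that follows from $\avgfield = \ket{\varphi(\infty)}$ by Corollary~\ref{cor:avg-valid}.

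For Part 1, I would expand
\begin{align}
\ppsp \lG \avgfield = \lG \avgfield + \lG\bS\proj\lG\avgfield = \lG\avgfield + \lG\bS\bG\avgfield,
\end{align}
so the claim follows immediately from $\bG\avgfield = 0$. Part 3 is equally direct: since only $\lG$ and $\bS$ depend on the generator, we have $\delta\ppsp = (\delta\lG)\bS\proj + \lG(\delta\bS)\proj$, so each term in $\delta\ppsp\cdot\lG\avgfield$ contains the factor $\proj\lG\avgfield = \bG\avgfield = 0$ and hence vanishes.

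For Part 2, expanding the square and using $\proj\lG = \bG$ gives
\begin{align}
\ppsp^2 = 1 + 2\lG\bS\proj + \lG\bS\bG\bS\proj,
\end{align}
so it suffices to show $\lG\bS\bG\bS\proj = -\lG\bS\proj$. I would invoke Lemma~\ref{lemma:SPiG} in the form $\bS\bG = \infevo - 1$ (as a map $\gH\to\gH$) to rewrite $\lG\bS\bG\bS\proj = \lG\infevo\bS\proj - \lG\bS\proj$. The remaining piece is to argue $\infevo\bS\proj = 0$: since $\bra{1}\bG = 0$, the generator preserves the coefficient sum $\sum_\field\sfcoef^\field$, so both $e^{\bG t}$ and $\bS$ do as well; in particular $\bS\proj w$ has total coefficient zero for every $w\in\lH$ (because $\proj w\in\subH$ by Fact~\ref{fact:subH}), and under ergodicity $\infevo$ is rank-one onto $\mathrm{span}(\avgfield)$ with $\subH = \ker\infevo$, killing the remaining term.

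The main obstacle I foresee is cleanly justifying that $\infevo$ annihilates $\subH$ under ergodicity and verifying the sum-preservation argument for $\bS$ as an improper integral on the bounded domain $\subH$ (using Proposition~\ref{prop:bound-S} to license the interchange of $\bG$ with the integral); everything else reduces to routine algebra once Lemma~\ref{lemma:SPiG} is invoked.
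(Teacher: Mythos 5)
Your proposal is correct and follows essentially the same route as the paper's proof: Part 1 and Part 3 are verbatim the paper's argument (using $\proj\lG=\bG$ and $\bG\avgfield=0$), and Part 2 likewise reduces to $\lG\infevo\bS\proj=0$ via Lemma~\ref{lemma:SPiG}. The only cosmetic difference is in how that last term is killed: the paper commutes $\infevo$ with $\bS$ and uses $\infevo\proj=0$ on $\subH$, whereas you argue that $\bS$ preserves the zero-coefficient-sum subspace so that $\infevo$ annihilates $\bS\proj\,\lH$ directly --- both are valid one-line justifications of the same fact.
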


However, there is a problem: the above propagator $\ppsp$ is not local, meaning that it would react to any dynamics within the system. To ensure it only reacts to local dynamics, we may insert a local operator $\ppsq:\gH\to \gH$ to filter out distant dynamics. As we will see, this operator needs to be closed on $\subH$, i.e., $\ppsq \subH \subseteq \subH$. In fact, such operators are easy to find. For example, any linear operator that preserves the normalization will be closed on $\subH$.  As shown below, this little modification preserves essentially all the desirable properties of  $\ppsp$.

\begin{definition}[\textbf{Objective Propagators $\ppsp[\ppsq]$}]\label{def:ppsp-q}
    Given linear operator $\ppsq\colon\subH\to \subH$, we define objective propagators $\ppsp[\ppsq]\colon\lH\to \lH$ as 
    \begin{align}
        \ppsp[\ppsq]:=1+\lG \bS \ppsq \proj .
    \end{align}
\end{definition}

Any operator $\ppsq$ that is closed on $\subH$ would ensure that $\ppsp[\ppsq]$ has the same desirable properties as $\ppsp$, which we will derive following the next definition.
\begin{definition}[\textbf{Bilinear Map $(\cdot, \cdot)$}]\label{def:bi-map}
    Given two operators $\ppsq, \ppsq'\colon\subH\to \subH$, we define a bilinear map $(\cdot, \cdot)$ as the following.
    \begin{align}
        (\ppsq,\ppsq'):=\ppsq+\ppsq'-\ppsq\ppsq'.
    \end{align}
\end{definition}

\begin{proposition}\label{prop:PQ-properties}
    The operator $\ppsp[\ppsq]=1+\lG \bS \ppsq \proj$ satisfies the following properties.
    \begin{enumerate}
        \item $\ppsp[\ppsq] \lG \avgfield=\lG\avgfield$.
        \item $\ppsp[\ppsq]\ppsp[\ppsq']=\ppsp[(\ppsq,\ppsq')]$.
        \item Given a variation $\delta \bG$, the incurred $\delta (\ppsp[\ppsq])$ satisfies $\delta (\ppsp[\ppsq]) \lG \avgfield=0$.
    \end{enumerate}
\end{proposition}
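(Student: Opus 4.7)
The plan is to reduce all three claims to two fundamental identities and then read off each property by routine composition. The first identity is that $\avgfield$ is a stationary state, so $\bG\avgfield=0$ and hence, by Fact~\ref{fact:proj-G}, $\proj\lG\avgfield=0$. The second identity is that, restricted to $\subH$, we have $\proj\lG\bS=-\mathbf{1}$. This follows from Lemma~\ref{lemma:SPiG}, which gives $1+\proj\lG\bS=\infevo$, combined with the observation that $\infevo$ annihilates $\subH$ in the ergodic regime: the flow $e^{\bG t}$ preserves coefficient sums (Fact~\ref{fact:subH}), so any vector with zero total coefficient must decay to $0$ as $t\to\infty$ under ergodicity.

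For property 1, I expand
\begin{align*}
\ppsp[\ppsq]\lG\avgfield=\lG\avgfield+\lG\bS\ppsq\proj\lG\avgfield,
\end{align*}
and the second term vanishes immediately since $\proj\lG\avgfield=\bG\avgfield=0$. For property 2, I multiply out
\begin{align*}
\ppsp[\ppsq]\ppsp[\ppsq']=(1+\lG\bS\ppsq\proj)(1+\lG\bS\ppsq'\proj)=1+\lG\bS(\ppsq+\ppsq')\proj+\lG\bS\ppsq\,(\proj\lG\bS)\,\ppsq'\proj.
\end{align*}
The inner block $\ppsq'\proj$ lands in $\subH$ (since $\proj$ maps to $\subH$ and $\ppsq'$ is closed on $\subH$), so I may substitute $\proj\lG\bS=-\mathbf{1}$ there. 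The cross term collapses to $-\lG\bS\ppsq\ppsq'\proj$, and combining gives $1+\lG\bS(\ppsq+\ppsq'-\ppsq\ppsq')\proj=\ppsp[(\ppsq,\ppsq')]$ by Definition~\ref{def:bi-map}.

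For property 3, I note that $\proj$ and $\ppsq$ do not depend on $\bG$, so $\delta\ppsp[\ppsq]=(\delta\lG)\bS\ppsq\proj+\lG(\delta\bS)\ppsq\proj$. Applied to $\lG\avgfield$, both terms carry the trailing factor $\proj\lG\avgfield=0$ and therefore annihilate.

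The main obstacle is justifying the substitution $\proj\lG\bS=-\mathbf{1}$ in the cross term of property 2. This requires carefully tracking that the argument on which it acts lies in $\subH$, and establishing the auxiliary fact $\infevo|_{\subH}=0$ under ergodicity, which is not explicitly stated but is the natural consequence of Lemma~\ref{lemma:SPiG} together with Fact~\ref{fact:subH}. Once this is in hand, the rest of the proof is a short manipulation.
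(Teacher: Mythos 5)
Your proposal is correct and follows essentially the same route as the paper: property 1 and property 3 via $\proj\lG\avgfield=\bG\avgfield=0$, and property 2 by expanding the product and collapsing the cross term using Lemma~\ref{lemma:SPiG} together with the fact that $\infevo$ annihilates $\subH$ under ergodicity (your $\proj\lG\bS=-\mathbf{1}$ on $\subH$ is exactly the paper's substitution $\proj\lG\bS=\infevo-1$ followed by dropping the $\infevo$ term, since $\ppsq'\proj$ maps into $\subH$). The auxiliary fact $\infevo|_{\subH}=0$ that you flag as the main obstacle is indeed the same ingredient the paper invokes, justified there by noting that $\infevo$ sends every normalized state to $\avgfield$, hence sends differences of normalized states to zero.
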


Having enough tools at hand, we can show that with this choice of objective propagators, we have $\delta (\pps_x)\lG \avgfield=0$, and thus the gradient of $\ppsv(x)=\pps_x\lG \avgfield$ w.r.t. local generator $\bG(x)$ can be computed locally. 

\begin{theorem}[\textbf{$\ppsp[\ppsq]$ Allows for Local Gradient Computations }]\label{thm:ppsq-grad}
    When the acting entities $\gX_a$ are strongly connected, objective propagators of the form of $\ppsp_{x'x}=\ppsp[\ppsq_{x'x}]$ result in local gradient computations. That is,  Theorem~\ref{thm:grad-field} applies, and we obtain the same gradient formula 
    \begin{align}
        \frac{\partial \ppsv(x)}{\partial G(x)^{\field'}_{\field}}=\pps_x\ppsp\act(x)_{\field'}^{\field} \avgfield.
    \end{align}
\end{theorem}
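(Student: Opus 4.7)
My plan is to reduce the claim to Theorem~\ref{thm:grad-field} by isolating and then killing the term that arises from the variation of the effective objective operator $\pps_x$. An infinitesimal change $\delta \bG(x)$ at entity $x$ perturbs both $\lG \avgfield$ and, via the propagation equation, $\pps_x$ itself, so by the product rule
\begin{align}
    \delta \ppsv(x) = \pps_x\, \delta(\lG \avgfield) + (\delta \pps_x)\, \lG \avgfield.
\end{align}
The first summand treats $\pps_x$ as a fixed linear functional, so the algebraic manipulations underlying Theorem~\ref{thm:grad-field} reproduce $\pps_x \ppsp\, \act(x)_{\field'}^{\field}\, \avgfield$ verbatim. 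Everything therefore turns on showing that the second summand vanishes for every acting entity $x$.

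To attack $(\delta \pps_x)\, \lG \avgfield$, I would differentiate the propagation equation (Definition~\ref{def:ppsp-eq}), noting that environmental contributions $\pps_{x'x}$ with $x' \in \gX_e$ are fixed and drop out, while for $x' \in \gX_a$ the product rule gives
\begin{align}
    \delta(\pps_{x'} \ppsp[\ppsq_{x'x}])\, \lG \avgfield = (\delta \pps_{x'})\, \ppsp[\ppsq_{x'x}]\, \lG \avgfield + \pps_{x'}\, \delta(\ppsp[\ppsq_{x'x}])\, \lG \avgfield.
\end{align}
Properties~1 and~3 of Proposition~\ref{prop:PQ-properties} collapse this elegantly: the first piece simplifies via $\ppsp[\ppsq_{x'x}]\, \lG \avgfield = \lG \avgfield$, and the second piece vanishes via $\delta(\ppsp[\ppsq_{x'x}])\, \lG \avgfield = 0$. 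Writing $f(x) := (\delta \pps_x)\, \lG \avgfield$, I am left with the homogeneous recursion
\begin{align}
    f(x) = \sum_{x' \in \gX_a} \ppsadj^{x'}_{x}\, f(x'), \qquad x \in \gX_a.
\end{align}

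The main obstacle is then deducing $f \equiv 0$ from this fixed-point equation, which is where strong connectivity of $\gX_a$ enters. The matrix $A$ with entries $A_{x, x'} = \ppsadj^{x'}_{x}$ is nonnegative and substochastic, with row sum $1 - \sum_{x' \in \gX_e} \ppsadj^{x'}_{x}$ at each acting entity; in particular every entity adjacent to $\gX_e$ (whose existence is needed for the propagation equation to have boundary data from the environment) has row sum strictly below $1$. I would pick $x^\star \in \argmax_{x \in \gX_a} |f(x)|$ and chase the chain $|f(x^\star)| \leq \sum_{x'} A_{x^\star, x'}\, |f(x')| \leq |f(x^\star)| \sum_{x'} A_{x^\star, x'} \leq |f(x^\star)|$; equality throughout forces $\sum_{x'} A_{x^\star, x'} = 1$ and $|f(x')| = |f(x^\star)|$ for every in-neighbor $x'$ of $x^\star$, and iterating along reverse directed paths (which exist by strong connectivity) propagates this identity to some $x_0$ with strict row-sum inequality, producing $|f(x_0)| < |f(x_0)|$ unless $f(x_0) = 0$, and hence $f \equiv 0$. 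With $(\delta \pps_x)\, \lG \avgfield = 0$ in hand, the surviving $\pps_x\, \delta(\lG \avgfield)$ term matches Theorem~\ref{thm:grad-field} and delivers the claimed formula, with locality manifest because $\act(x)_{\field'}^{\field}$ only touches the neighborhood of $x$ and $\pps_x$ is already precomputed from the propagation equation rather than re-solved at each variation.
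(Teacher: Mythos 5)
Your proposal is correct and follows the paper's proof almost step for step: you isolate the term $(\delta \pps_x)\,\lG\avgfield$, differentiate the propagation equation, apply Properties 1 and 3 of Proposition~\ref{prop:PQ-properties} to collapse it to the homogeneous fixed-point equation $f(x)=\sum_{x'\in\gX_a}\ppsadj^{x'}_{x}f(x')$, and then use strong connectivity together with the leakage to the environment to force $f\equiv 0$. The only place you diverge is the last step: the paper packages the linear system as $\Theta\bm a=0$ with $\Theta_{ij}=\delta_i^j-\ppsadj_{x_i}^{x_j}$ and invokes the Taussky theorem (irreducibly diagonally dominant matrices are non-singular), whereas you give a self-contained discrete maximum-principle argument, chasing the maximizer of $|f|$ back along a directed path to an entity with strict row-sum deficit. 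Your version is more elementary and avoids the external citation; the paper's version is shorter once the lemma is granted and reuses machinery already set up for Proposition~\ref{prop:local-ppsv}. Both arguments share the same implicit assumptions (that $\ppsadj^{x'}_{x}>0$ on actual edges so strong connectivity of $\gX_a$ transfers to the weight matrix, and that at least one acting entity receives a signal from the environment), so neither is more rigorous than the other on that front.
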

Theorem~\ref{thm:ppsq-grad} and Proposition~\ref{prop:PQ-properties} together reveal several interesting properties of the propagator $\ppsp[\ppsq]$. In particular, this propagator has a similar form to the operator $\ppsp$ that appears in the gradient formula, suggesting that it may encode certain gradient-related information. This connection could potentially be beneficial for estimating the gradients of local entities. However, its precise implications remain unclear and are left for future work.

\section{Discussion}\label{sec:discussion}

Let’s first review the main results of this paper before exploring the various perspectives one might connect them to.

\subsection{Summary of the Main Results}\label{subsec:summary-results}

We start from the three fundamental principles: complete configuration, locality, and purposefulness. 
These principles lead to a framework for objective-driven dynamical stochastic fields, which we refer to as intelligent fields. An objective-driven dynamical stochastic field can be conceptualized as a collection of entities, each possessing configurations that may not be entirely visible to other entities, while sharing some parts of their configurations with their neighbors. Each entity generates and sends objective signals to the neighboring entities while simultaneously aiming to minimize the objective value given by its neighbors. Thus, the system evolves on its own, driven by the objective propagation. We may view some parts of the field as environmental entities, while others are considered to be acting entities. The design of the mechanism of objective propagation for the acting entities is critical to ensuring the system operates as intended, such as in solving AI-related tasks.

However, in order to study the behavior of the field such that we can design the mechanism of objective propagation, we need the necessary machinery for formal analysis. This leads to our theoretical framework for objective-driven dynamical stochastic fields, which is the main content of this paper. 

The first two principles, complete configurations and locality, lead to a compact model for dynamical stochastic fields, as shown in Section~\ref{sec:field}. The main result of this section can be summarized by the equations below. 
\begin{align}
    \bG=\sum_x \bG(x),\qquad L(w_{t}, w_{t}^+)=\sum_x L(w_{t, x}, w_{t,x}^+), \\
    \bra{\field'} e^{\bG T} \ket{\field}=\int \gD \field \ \exp\left\{-\int_{0}^{T} \dt \ L(w_{t}, w_{t}^+)\right\}.
\end{align}
That is, the infinitesimal generator $\bG$ of the whole system is the sum of local infinitesimal generators  $\bG(x)$. Notably, the local generators commute, i.e., $[\bG(x), \bG(x')]=0$, if $x$ and $x'$ are not neighbors. Similarly, the Lagrangian $L(w_{t}, w_{t}^+)$ of the entire system is the sum of the local field Lagrangian $L(w_{t,x}, w_{t, x}^+)$. The transition probability $\bra{\field'} e^{\bG T} \ket{\field}$ can be formulated in a path integral formalism using the Lagrangian. 
As we take the limit that the entropy of the system $H_w\to 0$, the path that minimizes the integral of the Lagrangian becomes the dominant contribution to the path integral. 

In Section~\ref{sec:IF}, we introduce the principle of purposefulness. That is, each entity $x$ is associated with an objective operator $\pps_x$, and it evolves to minimize its objective value: 
\begin{align}
    \ppsv(x)=\lim_{T\to \infty} \frac{1}{T} \int_0^T \dt\ \pps_x \lG \sfield(t),
\end{align}
which is the time average of the objective signals it receives over an infinite time horizon. We call an entity objective-driven if it evolves to minimize its objective value. To construct such objective-driven dynamics, we may simulate random trials of different local generators and select the most effective one. However, a more intriguing approach would be to enable the system to simulate gradient descent. This leads to the key question: What is the gradient of the local objective value with respect to the local generator? We derive this gradient formula as 
\begin{align}
    \frac{\partial \ppsv(x)}{\partial G(x)_{\field}^{\field'}}=\pps_x \ppsp \act(x)^{\field}_{\field'} \avgfield, \qquad \text{where }\ \ppsp=1+\lG \bS\proj.
\end{align}
The above gradient formula stands if the objective operator $\pps_x$ is fixed. However, in the scenario where each entity exchanges objective signals with its neighbors, $\pps_x$ would depend on the system dynamics. This dependence is defined by the objective propagator $\ppsp_{x'x}$ via the objective propagation equation: 
\begin{align}
    \pps_x= \sum_{x'}\ppsadj^{x'}_{x}\pps_{x'x}, \qquad \text{where } \ \pps_{x'x}=\pps_{x'}\ppsp_{x'x}.
\end{align}
However, if the objective propagator is not trivial, the gradient formula would be non-local in general, making it difficult for local entities to estimate its corresponding gradient. We find an interesting class of objective operators 
\begin{align}
    \ppsp[\ppsq]=1+\lG \bS\ppsq\proj,
\end{align}  
which is in a similar form to the operator $\ppsp=\ppsp[\mathbf I]$ appearing in the gradient formula. This suggests that objective propagation with $\ppsp[\ppsq]$ may inherently convey certain types of gradient information. Interestingly, this objective propagator allows for local gradient computation, where the resulting gradient formula coincides with that of the previously discussed trivial case.

\subsection{Discussion on Different Perspectives}\label{subsec:literature}

The proposed framework for objective-driven dynamical stochastic fields is derived in a self-contained manner, from three fundamental principles: complete configuration, locality, and purposefulness. These principles are intuitive, echoing concepts found across diverse disciplines. As a result, the framework is potentially interdisciplinary and can be viewed from various perspectives. Given its generality, it connects to a vast body of existing literature.  Although a comprehensive overview is beyond our current scope, we acknowledge several major connections. We note, however, that these high-level connections should be viewed as suggestive rather than definitive.

\paragraph{Neural Networks.}
An objective-driven dynamical stochastic field may be conceptualized as a kind of neural network within an environment. From this perspective, the neural network consists of acting entities in the field, while the environment comprises environmental entities. Thus, each acting entity is viewed as a neuron, collectively forming a neural network. These neurons have individual configurations that determine their subsequent actions, including changing their own configuration and interacting with other entities. Similar to how neurons in traditional neural networks transmit signals to connected neurons, entities in this field model the exchange of signals that influence each other's behavior. This results in a dynamic and adaptive neural network, where each neuron's behavior continuously adjusts based on the objective signals it receives from the other entities, including those from the environment. Each neuron also generates and propagates objective signals to other neurons, resembling a feedback propagation mechanism similar to those found in various artificial neural networks.

The prominent models of artificial neural networks (ANNs) are deep neural networks, which are commonly trained via backpropagation~\citep{rumelhart1986learning, lecun1988theoretical, hecht1992theory}. 
Backpropagation is considered to be non-local in nature, which some argue lacks biological plausibility~\citep{grossberg1987competitive, bengio2015towards}, and efforts are made to train deep neural networks in more biologically plausible ways~\citep{bengio2015towards, hinton2022forward}.
Unlike deep neural networks, some ANNs are naturally biologically plausible. Notable examples include models like the Hopfield network~\citep{hopfield1982neural} and Boltzmann machines~\citep{ackley1985learning}, which are designed for storing information through associative memory. Both of these were influenced by local Hebbian learning theory and are conventionally trained via minimizing certain energy functions. Thus, from the perspective of objective-driven dynamic stochastic fields, this energy function can be viewed as a special case of an objective value, one without dynamics or propagation, that the system aims to minimize.  
Furthermore, spiking neural networks~\citep{maass1997networks, gerstner2002spiking, yamazaki2022spiking} offer another layer of biological realism by mimicking the actual electrical activity of neurons, which can also be trained via local Hebbian learning rules. 

However, conventional ANNs, within the typical statistical learning setting, do not explicitly account for dynamic environments, as they primarily learn from static datasets. In contrast, reinforcement learning addresses this by explicitly incorporating dynamic interactions with an environment, as we discuss below.

\paragraph{Reinforcement Learning.}
Reinforcement learning (RL) has evolved significantly since its establishment, starting from early frameworks focusing on single agents in fully observable Markov decision processes (MDPs), where an agent receives scalar reward signals directly tied to its performance~\citep{sutton1988learning, sutton2018reinforcement}. Building on this foundational setting, multi-agent RL emerged to address scenarios with multiple agents learning simultaneously, each receiving its own reward signals while interacting in shared or competitive environments~\citep{hu1998multiagent, panait2005cooperative, bansal2017emergent, 2024_Jin}. In parallel, partially observable RL tackled problems where the agent’s observations are incomplete, thus formalized as partially observable Markov decision processes (POMDPs) in which the agent still receives a scalar reward but must contend with uncertainty in its perception~\citep{kaelbling1998planning, pineau2003point, liu2022partially}. These two threads  converge in partially observable multi-agent RL (similarly decentralized POMDPs), where multiple agents, each with only partial observations of the environment, individually or collectively receive reward signals and must account for both uncertainty and the presence of other learning agents~\citep{bernstein2002complexity, oliehoek2016concise, omidshafiei2017deep, zhang2018fully}.

From a reinforcement learning perspective, each entity in the intelligent field may be viewed as an agent, which makes the setting similar to a partially observable multi-agent RL problem. The objective signals exchanged among entities align with the notion of rewards. However, in the intelligent field framework, the notions of ``agent'' and ``environment'' are relative and symmetric: from the standpoint of any given agent, the rest of the field (i.e., all other agents) constitutes the environment. Consequently, the system is viewed as a network of interacting agents whose behaviors and objective signals collectively shape one another’s learning processes. Moreover, from this perspective, the reward mechanism is quite complex, as each agent both generates rewards for others and receives rewards from them.  There is another important concept in the intelligent field framework that is missing from the RL perspective, i.e., decomposability. In the intelligent field framework, each entity can be composed of smaller and simpler entities, and the complex behavior of a larger entity emerges from the composition of these smaller entities. It thus prompts a unified view of reinforcement learning and neural networks, where an agent is composed of many simpler neuron-agents, each exhibiting basic behaviors. The collective dynamics of these simple neuron-agents give rise to the complex behavior of the larger agent. 

A similar phenomenon, where complex collective behavior emerges from simple local dynamic rules, is also observed in various examples of complex systems.

\paragraph{Complex Systems.}
Complex systems composed of interacting, dynamic components are ubiquitous in nature. They include examples as diverse as computational models, physical systems, and neural architectures, usually with emergent behaviors that arise from interactions among simpler components~\citep{ottino2003complex, ladyman2013complex}. 
Cellular automata (CA)~\citep{von1966theory}, a theoretical model for self-replicating cell grids, are discrete dynamical systems in which each cell updates its state based on local interaction rules. It is shown to be Turing complete~\citep{rendell2002turing, cook2004universality}, i.e., given appropriate initial conditions, it can simulate any computable process.
The Ising model is another example illustrating how simple local rules (spin alignments) can yield global phase transitions and rich complexity, bridging conceptually to Hopfield networks and Boltzmann machines as discussed previously. Likewise, random processes on graphs, e.g., epidemic-like spreading, capture how network topology drives complex phenomena in various domains~\citep{easley2010networks}.  

The objective-driven dynamical stochastic field, when the ground space is a discrete graph, can be viewed as another complex system in which local node interactions govern overall behavior. For example, from a cellular automata perspective, the intelligent field can be viewed as a generalized framework that extends traditional CA principles by integrating continuous-time stochastic dynamics and self-adaptive learning rules. While it preserves the local-update rule of cellular automata, it departs from static rule sets by introducing stochastic objective signals exchanged among neighboring entities. Each entity then updates its internal configuration to minimize the objective values provided by others, guiding the system as a whole toward intended goals. 

Complex systems are notoriously difficult to model, creating challenges in designing local update rules that yield a desired global outcome. In order to model and analyze the stochastic dynamical field, as implied by our first two principles, we draw inspiration from the theory of quantum fields, despite no actual quantum physics being involved.

\paragraph{Quantum Fields.}
Our theory begins by introducing a Hilbert space to encode the distributional information, i.e., the ``superposition,'' of possible system configurations.  The dynamics within this Hilbert space are governed by a first-order differential equation, $\frac{\diff}{\dt} \sfield=\bG \sfield$, which is a reformulation of the Kolmogorov forward equation. This resembles the Schrödinger's equation, where the generator of the system dynamics is the Hamiltonian. Moving to a field formulation, we prove that the system's generator $\bG=\sum_{x} \bG(x)$ is the sum of local generators, where the commutator $[\bG(x), \bG(x')]=0$ for non-neighboring points $x$ and $x'$. This mirrors how the total Hamiltonian of a quantum field is obtained as the spatial integral of the Hamiltonian density.  Likewise, we can derive the path integral formalism: $\bra{\field'} e^{\bG T} \ket{\field}=\int \gD \field \ e^{-S[\field]}$, where $S[\field]=\sum_x \int_{0}^{T} \dt \ L(w_{t, x}, w_{t, x}^+)$ is analogous to the action of the path, i.e., the spatial-temporal integral of the Lagrangian density. As we take the limit $H_w\to 0$, i.e., letting the entropy of the system go to zero, the path that minimizes $S[\field]$ dominates the integral, similar to how taking \(\hbar \to 0\) in Feynman’s path integral recovers the classical limit in quantum mechanics.

These theoretical tools, inspired by quantum fields, provide a compact model of the dynamical stochastic field, as described in Section~\ref{sec:field}. From an artificial intelligence perspective, however, it is essential for the model to demonstrate adaptive behavior that is capable of learning to fulfill a specific purpose. Therefore, by further introducing the concept of purposefulness into this system in Section~\ref{sec:IF}, we arrive at the final result of this paper: a theoretical framework for objective-driven dynamical stochastic fields, which we refer to as intelligent fields.

We would like to highlight that the connections discussed above serve primarily as illustrative examples to help clarify how intelligent fields relate to various established perspectives. These examples are intended to offer intuitive insights and suggest possible applications. Nevertheless, as mentioned earlier, each of these perspectives differs in certain respects. We recognize that these examples are not comprehensive and that there can be additional connections beyond the current scope. A deeper investigation into these specific links remains an important direction for future work.

\section{Conclusion}\label{sec:conclusion}

In this work, we introduce a theoretical framework for understanding and analyzing objective-driven dynamical stochastic fields, which we refer to as intelligent fields. By formalizing the principles of complete configuration, locality, and purposefulness, we establish a mathematical foundation for understanding such intricate systems. In addition, we explore design methodologies from the perspective of artificial intelligence.  While the proposed principles and framework provide a foundation for understanding and designing such systems, there remain significant opportunities for further exploration and refinement, ranging from theoretical advancements to practical applications.

\bibliographystyle{plainnat}
\bibliography{ref}

\begin{thebibliography}{38}
\providecommand{\natexlab}[1]{#1}
\providecommand{\url}[1]{\texttt{#1}}
\expandafter\ifx\csname urlstyle\endcsname\relax
  \providecommand{\doi}[1]{doi: #1}\else
  \providecommand{\doi}{doi: \begingroup \urlstyle{rm}\Url}\fi

\bibitem[Ackley et~al.(1985)Ackley, Hinton, and Sejnowski]{ackley1985learning}
David~H Ackley, Geoffrey~E Hinton, and Terrence~J Sejnowski.
\newblock A learning algorithm for boltzmann machines.
\newblock \emph{Cognitive science}, 9\penalty0 (1):\penalty0 147--169, 1985.

\bibitem[Bansal et~al.(2017)Bansal, Pachocki, Sidor, Sutskever, and Mordatch]{bansal2017emergent}
Trapit Bansal, Jakub Pachocki, Szymon Sidor, Ilya Sutskever, and Igor Mordatch.
\newblock Emergent complexity via multi-agent competition.
\newblock \emph{arXiv preprint arXiv:1710.03748}, 2017.

\bibitem[Bellman(1997)]{bellman1997introduction}
Richard Bellman.
\newblock \emph{Introduction to matrix analysis}.
\newblock SIAM, 1997.

\bibitem[Bengio et~al.(2015)Bengio, Lee, Bornschein, Mesnard, and Lin]{bengio2015towards}
Yoshua Bengio, Dong-Hyun Lee, Jorg Bornschein, Thomas Mesnard, and Zhouhan Lin.
\newblock Towards biologically plausible deep learning.
\newblock \emph{arXiv preprint arXiv:1502.04156}, 2015.

\bibitem[Bernstein et~al.(2002)Bernstein, Givan, Immerman, and Zilberstein]{bernstein2002complexity}
Daniel~S Bernstein, Robert Givan, Neil Immerman, and Shlomo Zilberstein.
\newblock The complexity of decentralized control of markov decision processes.
\newblock \emph{Mathematics of operations research}, 27\penalty0 (4):\penalty0 819--840, 2002.

\bibitem[Cook et~al.(2004)]{cook2004universality}
Matthew Cook et~al.
\newblock Universality in elementary cellular automata.
\newblock \emph{Complex systems}, 15\penalty0 (1):\penalty0 1--40, 2004.

\bibitem[Easley et~al.(2010)Easley, Kleinberg, et~al.]{easley2010networks}
David Easley, Jon Kleinberg, et~al.
\newblock \emph{Networks, crowds, and markets: Reasoning about a highly connected world}, volume~1.
\newblock Cambridge university press Cambridge, 2010.

\bibitem[Feller(1940)]{feller1940integro}
Willy Feller.
\newblock On the integro-differential equations of purely discontinuous markoff processes.
\newblock \emph{Transactions of the American Mathematical Society}, 48\penalty0 (3):\penalty0 488--515, 1940.

\bibitem[Feynman(1951)]{feynman1951operator}
Richard~P Feynman.
\newblock An operator calculus having applications in quantum electrodynamics.
\newblock \emph{Physical Review}, 84\penalty0 (1):\penalty0 108, 1951.

\bibitem[Gerstner and Kistler(2002)]{gerstner2002spiking}
Wulfram Gerstner and Werner~M Kistler.
\newblock \emph{Spiking neuron models: Single neurons, populations, plasticity}.
\newblock Cambridge university press, 2002.

\bibitem[Grossberg(1987)]{grossberg1987competitive}
Stephen Grossberg.
\newblock Competitive learning: From interactive activation to adaptive resonance.
\newblock \emph{Cognitive science}, 11\penalty0 (1):\penalty0 23--63, 1987.

\bibitem[Hecht-Nielsen(1992)]{hecht1992theory}
Robert Hecht-Nielsen.
\newblock Theory of the backpropagation neural network.
\newblock In \emph{Neural networks for perception}, pages 65--93. Elsevier, 1992.

\bibitem[Hinton(2022)]{hinton2022forward}
Geoffrey Hinton.
\newblock The forward-forward algorithm: Some preliminary investigations.
\newblock \emph{arXiv preprint arXiv:2212.13345}, 2022.

\bibitem[Hopfield(1982)]{hopfield1982neural}
John~J Hopfield.
\newblock Neural networks and physical systems with emergent collective computational abilities.
\newblock \emph{Proceedings of the national academy of sciences}, 79\penalty0 (8):\penalty0 2554--2558, 1982.

\bibitem[Horn and Johnson(2012)]{horn2012matrix}
Roger~A Horn and Charles~R Johnson.
\newblock \emph{Matrix analysis}.
\newblock Cambridge university press, 2012.

\bibitem[Hu et~al.(1998)Hu, Wellman, et~al.]{hu1998multiagent}
Junling Hu, Michael~P Wellman, et~al.
\newblock Multiagent reinforcement learning: theoretical framework and an algorithm.
\newblock In \emph{ICML}, volume~98, pages 242--250, 1998.

\bibitem[Jin et~al.(2024)Jin, Liu, Wang, and Yu]{2024_Jin}
Chi Jin, Qinghua Liu, Yuanhao Wang, and Tiancheng Yu.
\newblock V-learning—a simple, efficient, decentralized algorithm for multiagent reinforcement learning.
\newblock \emph{Mathematics of Operations Research}, 49\penalty0 (4):\penalty0 2295--2322, nov 2024.
\newblock \doi{10.1287/moor.2021.0317}.
\newblock URL \url{https://pubsonline.informs.org/doi/10.1287/moor.2021.0317}.

\bibitem[Kaelbling et~al.(1998)Kaelbling, Littman, and Cassandra]{kaelbling1998planning}
Leslie~Pack Kaelbling, Michael~L Littman, and Anthony~R Cassandra.
\newblock Planning and acting in partially observable stochastic domains.
\newblock \emph{Artificial intelligence}, 101\penalty0 (1-2):\penalty0 99--134, 1998.

\bibitem[Kato(1995)]{kato2013perturbation}
Tosio Kato.
\newblock \emph{Perturbation theory for linear operators}, volume 132.
\newblock Springer Berlin, Heidelberg, 1995.

\bibitem[Kolmogoroff(1931)]{kolmogoroff1931analytischen}
Andrei Kolmogoroff.
\newblock {\"U}ber die analytischen methoden in der wahrscheinlichkeitsrechnung.
\newblock \emph{Mathematische Annalen}, 104:\penalty0 415--458, 1931.

\bibitem[Ladyman et~al.(2013)Ladyman, Lambert, and Wiesner]{ladyman2013complex}
James Ladyman, James Lambert, and Karoline Wiesner.
\newblock What is a complex system?
\newblock \emph{European Journal for Philosophy of Science}, 3:\penalty0 33--67, 2013.

\bibitem[LeCun et~al.(1988)LeCun, Touresky, Hinton, and Sejnowski]{lecun1988theoretical}
Yann LeCun, D~Touresky, G~Hinton, and T~Sejnowski.
\newblock A theoretical framework for back-propagation.
\newblock In \emph{Proceedings of the 1988 connectionist models summer school}, volume~1, pages 21--28, 1988.

\bibitem[Liu et~al.(2022)Liu, Chung, Szepesv{\'a}ri, and Jin]{liu2022partially}
Qinghua Liu, Alan Chung, Csaba Szepesv{\'a}ri, and Chi Jin.
\newblock When is partially observable reinforcement learning not scary?
\newblock In \emph{Conference on Learning Theory}, pages 5175--5220. PMLR, 2022.

\bibitem[Maass(1997)]{maass1997networks}
Wolfgang Maass.
\newblock Networks of spiking neurons: the third generation of neural network models.
\newblock \emph{Neural networks}, 10\penalty0 (9):\penalty0 1659--1671, 1997.

\bibitem[Najfeld and Havel(1995)]{NAJFELD1995321}
I.~Najfeld and T.F. Havel.
\newblock Derivatives of the matrix exponential and their computation.
\newblock \emph{Advances in Applied Mathematics}, 16\penalty0 (3):\penalty0 321--375, 1995.
\newblock ISSN 0196-8858.
\newblock \doi{https://doi.org/10.1006/aama.1995.1017}.
\newblock URL \url{https://www.sciencedirect.com/science/article/pii/S0196885885710172}.

\bibitem[Oliehoek et~al.(2016)Oliehoek, Amato, et~al.]{oliehoek2016concise}
Frans~A Oliehoek, Christopher Amato, et~al.
\newblock \emph{A concise introduction to decentralized POMDPs}, volume~1.
\newblock Springer, 2016.

\bibitem[Omidshafiei et~al.(2017)Omidshafiei, Pazis, Amato, How, and Vian]{omidshafiei2017deep}
Shayegan Omidshafiei, Jason Pazis, Christopher Amato, Jonathan~P How, and John Vian.
\newblock Deep decentralized multi-task multi-agent reinforcement learning under partial observability.
\newblock In \emph{International Conference on Machine Learning}, pages 2681--2690. PMLR, 2017.

\bibitem[Ottino(2003)]{ottino2003complex}
Julio~M Ottino.
\newblock Complex systems.
\newblock \emph{American Institute of Chemical Engineers. AIChE Journal}, 49\penalty0 (2):\penalty0 292, 2003.

\bibitem[Panait and Luke(2005)]{panait2005cooperative}
Liviu Panait and Sean Luke.
\newblock Cooperative multi-agent learning: The state of the art.
\newblock \emph{Autonomous agents and multi-agent systems}, 11:\penalty0 387--434, 2005.

\bibitem[Pineau et~al.(2003)Pineau, Gordon, Thrun, et~al.]{pineau2003point}
Joelle Pineau, Geoff Gordon, Sebastian Thrun, et~al.
\newblock Point-based value iteration: An anytime algorithm for pomdps.
\newblock In \emph{Ijcai}, volume~3, pages 1025--1032, 2003.

\bibitem[Rendell(2002)]{rendell2002turing}
Paul Rendell.
\newblock Turing universality of the game of life.
\newblock In \emph{Collision-based computing}, pages 513--539. Springer, 2002.

\bibitem[Rumelhart et~al.(1986)Rumelhart, Hinton, and Williams]{rumelhart1986learning}
David~E Rumelhart, Geoffrey~E Hinton, and Ronald~J Williams.
\newblock Learning representations by back-propagating errors.
\newblock \emph{nature}, 323\penalty0 (6088):\penalty0 533--536, 1986.

\bibitem[Sutton(1988)]{sutton1988learning}
Richard~S Sutton.
\newblock Learning to predict by the methods of temporal differences.
\newblock \emph{Machine learning}, 3:\penalty0 9--44, 1988.

\bibitem[Sutton(2018)]{sutton2018reinforcement}
Richard~S Sutton.
\newblock Reinforcement learning: An introduction.
\newblock \emph{A Bradford Book}, 2018.

\bibitem[Von~Neumann et~al.(1966)Von~Neumann, Burks, et~al.]{von1966theory}
John Von~Neumann, Arthur~Walter Burks, et~al.
\newblock Theory of self-reproducing automata.
\newblock 1966.

\bibitem[Yamazaki et~al.(2022)Yamazaki, Vo-Ho, Bulsara, and Le]{yamazaki2022spiking}
Kashu Yamazaki, Viet-Khoa Vo-Ho, Darshan Bulsara, and Ngan Le.
\newblock Spiking neural networks and their applications: A review.
\newblock \emph{Brain Sciences}, 12\penalty0 (7):\penalty0 863, 2022.

\bibitem[Yin and Zhang(2012)]{yin2012continuous}
G~George Yin and Qing Zhang.
\newblock \emph{Continuous-time Markov chains and applications: a two-time-scale approach}, volume~37.
\newblock Springer Science \& Business Media, 2012.

\bibitem[Zhang et~al.(2018)Zhang, Yang, Liu, Zhang, and Basar]{zhang2018fully}
Kaiqing Zhang, Zhuoran Yang, Han Liu, Tong Zhang, and Tamer Basar.
\newblock Fully decentralized multi-agent reinforcement learning with networked agents.
\newblock In \emph{International conference on machine learning}, pages 5872--5881. PMLR, 2018.

\end{thebibliography}

\newpage
\appendix 
\section*{\centering\LARGE Appendix}

\section*{Contents}
\begin{itemize}
    \item Section \ref{sec:proof-field}: Proofs in Section~\ref{sec:proof-field}
    \begin{itemize}
        \item \ref{proof-prop:dynamics-generator}: Proof of Proposition~\ref{prop:dynamics-generator}
        \item \ref{proof-fact:iso-tensor}: Proof of Fact \ref{fact:iso-tensor}
        \item \ref{proof-prop:decomposition}: Proof of Proposition~\ref{prop:decomposition}
        \item \ref{proof-thm:decomposition-field}: Proof of Theorem~\ref{thm:decomposition-field}
        \item \ref{proof-prop:commute}: Proof of Proposition~\ref{prop:commute}
        \item \ref{proof-prop:decomp-Lagrangian}: Proof of Proposition~\ref{prop:decomp-Lagrangian}
        \item \ref{proof-thm:path-int}: Proof of Theorem~\ref{thm:path-int}
        \item \ref{proof-prop:lift-operators-a-case}: Proof of Proposition~\ref{prop:lift-operators-a-case}
        \item \ref{proof-prop:min-L}: Proof of Proposition~\ref{prop:min-L}
    \end{itemize}
    \item Section \ref{sec:proof-IF}: Proofs in Section~\ref{sec:IF}
    \begin{itemize}
        \item \ref{proof-prop:stationary}: Proof of Proposition~\ref{prop:stationary}
        \item \ref{proof-cor:avg-valid}: Proof of Corollary~\ref{cor:avg-valid}
        \item \ref{proof-thm:ID-case}: Proof of Theorem~\ref{thm:ID-case}
        \item \ref{proof-fact:subH}: Proof of Fact~\ref{fact:subH}
        \item \ref{proof-fact:proj-G}: Proof of Fact~\ref{fact:proj-G}
        \item \ref{proof-prop:bound-S}: Proof of Proposition~\ref{prop:bound-S}
        \item \ref{proof-prop:grad}: Proof of Proposition~\ref{prop:grad}
        \item \ref{proof-fact:lifted-decompose}: Proof of Fact~\ref{fact:lifted-decompose}
        \item \ref{proof-prop:grad-entity}: Proof of Proposition~\ref{prop:grad-entity}
        \item \ref{proof-thm:grad-field}: Proof of Theorem~\ref{thm:grad-field}
        \item \ref{proof-prop:local-ppsv}: Proof of Proposition~\ref{prop:local-ppsv}
        \item \ref{proof-lemma:SPiG}: Proof of Lemma~\ref{lemma:SPiG}
        \item \ref{proof-prop:P-properties}: Proof of Proposition~\ref{prop:P-properties}
        \item \ref{proof-prop:PQ-properties}: Proof of Proposition~\ref{prop:PQ-properties}
        \item \ref{proof-thm:ppsq-grad}: Proof of Theorem~\ref{thm:ppsq-grad}
    \end{itemize}
\end{itemize}

\section{Proofs in Section~\ref{sec:field}}\label{sec:proof-field}

\subsection{Proposition~\ref{prop:dynamics-generator}}\label{proof-prop:dynamics-generator}
\textbf{Proposition~\ref{prop:dynamics-generator} (Dynamics of the System).}
		The time evolution of the system satisfies the following first-order differential equation,
	\begin{align}
		\frac{\diff }{\dt}\ket{\varphi(t)}=\bG \ket{\varphi(t)},
	\end{align}
	and its solution is 
	\begin{align}
		\ket{\varphi(t)}=e^{\bG t}\ket{\varphi(0)}.
	\end{align}
	Note that the exponential of the linear operator is defined as $
	e^{\bG t}=1+\sum_{n=1}^{\infty} \frac{1}{n!}(\bG t)^n$. 	When it is clear in the context,  we  may simply use $\sfield$ to denote $\ket{\varphi(t)}$ for notational ease.

\begin{proof}
	 Consider a process $\field(t)$ where the probability distribution of its configuration  at time $t$ is characterized by $\ket{\varphi(t)}$. We may omit the time label and simply use $\sfield$ to denote $\ket{\varphi(t)}$.
     To verify if any two elements in $\gH$ are equal, we may verify that they are equal in every basis. Thus, we can prove the proposition by verifying that $\frac{\diff }{\dt}\sfield$ and $\bG \sfield$ are equal in every basis. 
	\begin{align}
		\bra{\field'} \frac{\diff }{\dt}\sfield &=\lim_{\Dt\to 0} \frac{1}{\Dt} \bra{\field'}  (\ket{\varphi(t+\Dt)}-\ket{\varphi(t)}) \\
		&=\lim_{\Dt\to 0}\frac{1}{\Dt}\left( \braket{\field'}{\varphi(t+\Dt)}-  \braket{\field'}{\varphi(t)}\right)\\
		&=\lim_{\Dt\to 0}\frac{1}{\Dt}\left( p(\field(t+\Dt)=\field')-p(\field(t)=\field')\right).	 \label{eq:dynamic-2}
	\end{align}
	To proceed, we may apply the law of total probability on $ p(\field(t+\Dt)=\field')$ conditioning on $\field(t)=\field$. Moreover, write $\ket{\varphi}=\sum_{\field}\varphi^\field \ket{\field}$ and we have $p(\field(t)=\field)=\sfcoef^{\field}$. Thus, 
	\begin{align}
		(\ref{eq:dynamic-2})&=\lim_{\Dt\to 0}\frac{1}{\Dt}\Big( \sum_{\field} p(\field(t+\Dt)=\field'\mid \field(t)=\field) \cdot p(\field(t)=\field)  - p(\field(t)=\field')\Big)\\
		&= \lim_{\Dt\to 0}\frac{1}{\Dt} \sum_{\field} \left(  p(\field(t+\Dt)=\field')\mid \field(t)=\field)   - \delta_{\field}^{\field'}\right)\sfcoef^{\field}\\
		&=  \sum_{\field} \lim_{\Dt\to 0}\frac{p(\field(t+\Dt)=\field'\mid \field(t)=\field)   - \delta_{\field}^{\field'}}{\Dt} \cdot \sfcoef^{\field}\\
        &=  \sum_{\field} \lim_{\Dt\to 0}\frac{p_\Dt(\field'\mid \field)   - \delta_{\field}^{\field'}}{\Dt} \cdot \sfcoef^{\field}\\
		&=\sum_\field \bra{\field'} \bG\ket{\field}\  \sfcoef^{\field}=\bra{\field'} \bG\left(\sum_{\field}\sfcoef^{\field}\ket{\field}\right)\  \\
		&= \bra{\field'} \bG \sfield .
	\end{align}
    Since $\field'$ is arbitrary, we must have $\frac{\diff }{\dt}\sfield=\bG \sfield$. 
	
	This is a standard first order differential equation, where the solution is known to be $\ket{\varphi(t)}=e^{\bG t} \ket{\varphi(0)}$. 
	We may verify that
	\begin{align}
		\frac{\diff}{\dt} \ket{\varphi(t)} =\frac{\diff}{\dt} e^{\bG t} \ket{\varphi(0)} =  \bG e^{\bG t} \ket{\varphi(0)} =\bG\ket{\varphi(t)}.
	\end{align}
	
\end{proof}

\subsection{Fact \ref{fact:iso-tensor}}\label{proof-fact:iso-tensor}

\textbf{Fact~\ref{fact:iso-tensor}.} Given two sets $\gset,\gset'$, the Hilbert space $\gH(\gset\times \gset')$ and the tensor product $\gH(\gset )\otimes \gH(\gset')$ are the ``same''. Formally, they are isometrically isomorphic:
\begin{align}
    \gH(\gset\times \gset') \cong \gH(\gset )\otimes \gH(\gset'),
\end{align}

\begin{proof}
We can construct a linear map $\rmF:\gH(\gset\times \gset') \to \gH(\gset )\otimes \gH(\gset') $ as $\rmF \ket{\field\field'}=\ket{\field}\otimes \ket{\field'}$. 
By definitions one can verify that it is bijective, and it preserves the inner product. 
\end{proof}

\subsection{Proposition~\ref{prop:decomposition}}\label{proof-prop:decomposition}
\textbf{Proposition~\ref{prop:decomposition} (Decomposition of the Infinitesimal Generator $\bG$).}
The locality principle in the two-entity view (\eqref{eq:3-1-1}) implies
\begin{align}
    \bG= \bM +  \bN.
\end{align}
\begin{proof}
	Before we proceed, we are going to use the following properties. By definition of $\bG, \bM, \bN$ (Definition~\ref{def:inf-generator}\&\ref{def:generator-entity}), we know 
	\begin{align}
		 \bra{\alpha' \beta' \mu' \nu'}\bG\ket{\alpha \beta \mu \nu} \cdot \Dt &=  p_{\Dt}(\alpha'\beta' \mu'\nu'\mid \alpha \beta\mu\nu)-\delta_{\alpha \beta \mu \nu}^{\alpha' \beta' \mu' \nu'}+o(\Dt),\\
		\bra{\alpha' \mu' }\bM \ket{\alpha \beta \mu }\cdot \Dt&=  p_{ \Dt}(\alpha' \mu' \mid \alpha \beta \mu)-\delta_{\alpha  \mu }^{\alpha' \mu' }+o(\Dt),\\
		 \bra{\beta' \nu' } \bN \ket{\alpha \beta \nu } \cdot \Dt&= p_{ \Dt}(\beta' \nu' \mid \alpha \beta \nu)-\delta_{\beta\nu}^{\beta'\nu'} + o(\Dt).
	\end{align}
	Moreover, we will need the independence relationship of the two entities (\eqref{eq:3-1-1}), i.e.,   
	\begin{align}
		p_{\Dt}(\alpha'\beta' \mu'\nu'\mid \alpha \beta\mu\nu)=p_{ \Dt}(\alpha' \mu' \mid \alpha \beta \mu)\cdot p_{ \Dt}(\beta' \nu' \mid \alpha \beta \nu)+o(\Dt).
	\end{align}
	
	We may prove $\bG=  \bM +   \bN$ by comparing 
	\begin{align}
		 \bra{\alpha' \beta' \mu' \nu'}\bG\ket{\alpha \beta \mu \nu} \quad \text{and}\quad  \bra{\alpha' \beta' \mu' \nu'} (  \bM +   \bN)\ket{\alpha \beta \mu \nu}
	\end{align}
	for  all $ \alpha \beta \mu \nu, \alpha' \beta' \mu' \nu'\in \Omega$. 
	
	We start by definition that
	\begin{align}
		 \bra{\alpha' \beta'\mu' \nu'} \bG\ket{\alpha \beta \mu \nu}\cdot \Dt &=  p_{\Dt}(\alpha'\beta'\mu'\nu'\mid \alpha \beta\mu\nu)-\delta_{\alpha \beta \mu \nu}^{\alpha' \beta' \mu' \nu'}+o(\Dt)\\
		&= p_{ \Dt}(\alpha' \mu' \mid \alpha \beta \mu)\cdot p_{ \Dt}(\beta' \nu' \mid \alpha \beta \nu) -\delta_{\alpha \beta \mu \nu}^{\alpha' \beta' \mu' \nu'} +o(\Dt). \label{eq:decom-1}
	\end{align}
	Next, we convert $p_{ \Dt}$  back to the expressions with $\bM, \bN$. 
	\begin{align}
		(\ref{eq:decom-1})&=(p_{ \Dt}(\alpha' \mu' \mid \alpha \beta \mu)-\delta_{\alpha  \mu }^{\alpha' \mu' }+\delta_{\alpha  \mu }^{\alpha' \mu' } )\cdot (p_{ \Dt}(\beta' \nu' \mid \alpha \beta \nu) - \delta_{\beta\nu}^{\beta'\nu'} +\delta_{\beta\nu}^{\beta'\nu'} ) -\delta_{\alpha \beta \mu \nu}^{\alpha' \beta' \mu' \nu'} +o(\Dt)\\
		&=(\bra{\alpha' \mu' }\bM \ket{\alpha \beta \mu }\cdot \Dt+\delta_{\alpha  \mu }^{\alpha' \mu' } )\cdot ( \bra{\beta' \nu' } \bN \ket{\alpha \beta \nu } \cdot \Dt +\delta_{\beta\nu}^{\beta'\nu'} ) -\delta_{\alpha \beta \mu \nu}^{\alpha' \beta' \mu' \nu'} +o(\Dt)\\
		&=\bra{\alpha' \mu' }\bM \ket{\alpha \beta \mu }\cdot \Dt \cdot \delta_{\beta\nu}^{\beta'\nu'} +   \bra{\beta' \nu' } \bN \ket{\alpha \beta \nu } \cdot \Dt \cdot \delta_{\alpha\mu}^{\alpha'\mu'} +o(\Dt). \label{eq:decom-2}
	\end{align}
	Note that in the above we implicitly put all $O((\Dt)^2)$ terms into $o(\Dt)$, and that $\delta_{\alpha  \mu }^{\alpha' \mu' } \delta_{\beta\nu}^{\beta'\nu'}=\delta_{\alpha \beta \mu \nu}^{\alpha' \beta' \mu' \nu'}$.
	
	It remains to convert the above to the expression with their embeddings $  \bM,   \bN$.  By definition of the embeddings (Definition~\ref{def:generator-entity}) we obtain
	\begin{align}
		(\ref{eq:decom-2})&= \bra{\alpha'\mu'}  \bM\ket{\alpha\beta\mu}\cdot  \braket{\beta'\nu'}{\beta\nu}  \cdot \Dt  +  \bra{\beta' \nu' } \bN \ket{\alpha \beta \nu } \cdot \braket{\alpha'\mu'}{\alpha\mu} \cdot \Dt + o(\Dt)\\
		&=\bra{\alpha' \beta' \mu' \nu'} ((\bM\ket{\alpha\beta\mu}) \otimes \ket{\beta\nu})\cdot \Dt + \bra{\alpha' \beta' \mu' \nu'} ((\bN\ket{\alpha\beta\nu}) \otimes \ket{\alpha\mu})\cdot \Dt +o(\Dt)\\
		&=\bra{\alpha' \beta' \mu' \nu'} (  \bM +   \bN)\ket{\alpha \beta \mu 	\nu}\cdot \Dt+o(\Dt).
	\end{align}
	Therefore, 
	\begin{align}
		 \bra{\alpha' \beta'\mu' \nu'} \bG\ket{\alpha \beta \mu \nu}\cdot \Dt=\bra{\alpha' \beta' \mu' \nu'} (  \bM +   \bN)\ket{\alpha \beta \mu 	\nu}\cdot \Dt+o(\Dt).
	\end{align}
	Dividing both sides by $\Dt$ and sending $\Dt\to 0$ proves
	\begin{align}
		 \bra{\alpha' \beta' \mu' \nu'}\bG\ket{\alpha \beta \mu \nu} =   \bra{\alpha' \beta' \mu' \nu'} (  \bM +   \bN)\ket{\alpha \beta \mu \nu}.
	\end{align}
	Since $ \alpha \beta \mu \nu, \alpha' \beta' \mu' \nu'\in \Omega$ are arbitrary, we conclude that 
	\begin{align}
		\bG=  \bM+  \bN.
	\end{align}

\end{proof}

\subsection{Theorem~\ref{thm:decomposition-field}}\label{proof-thm:decomposition-field}

\textbf{Theorem~\ref{thm:decomposition-field} (Decomposition of the Infinitesimal Generator $\bG$).}
	Locality (Principle~\ref{pinciple:locality-detail}) implies
	\begin{align}
		\bG= \sum_x \bG(x).
	\end{align}

\begin{proof}
	We can prove this theorem very similar to what we do for Proposition~\ref{prop:decomposition}.
	By definition of the generators $\bG$ and $\bG(x)$ (Definition~\ref{def:inf-generator}\&\ref{def:generator-field}),  we have
	\begin{align}
		\bra{\field'} \bG\ket{\field}\cdot \Dt&=  p_{\Dt}(\field'\mid \field)-\delta_{\field}^{\field'} + o(\Dt),\\
		 \bra{\field'_{|\gset_x} }\bG(x) \ket{\field_{|\nbhd_x} }\cdot\Dt&=  p_{ \Dt}(\field'_{|\gset_x} \mid \field_{|\nbhd_x} )-\delta_{\field_{|\gset_x}}^{\field'_{|\gset_x}}  +o(\Dt).
	\end{align}
	Moreover, locality (Principle~\ref{pinciple:locality-detail}) implies  
	\begin{align}
		p_\Dt(\field'\mid\field)=\prod_x p_\Dt(\field'_{|\gset_x}\mid\field_{|\nbhd_x}) + o(\Dt).
	\end{align}
	
	We may prove $\bG=  \sum_x \bG(x)$ by comparing 
	\begin{align}
		\bra{\field'} \bG\ket{\field} \quad \text{and}\quad \bra{\field'} \left( \sum_x \bG(x) \right)\ket{\field}
	\end{align}
	for  all $ \field, \field'\in \Omega$. 
	
	We start by definition that
	\begin{align}
		\bra{\field'} \bG\ket{\field}\cdot \Dt &=  p_{\Dt}(\field'\mid \field)-\delta_{\field}^{\field'}+o(\Dt)= \prod_x p_\Dt(\field'_{|\gset_x}\mid\field_{|\nbhd_x}) -\delta_{\field}^{\field'} +o(\Dt). \label{eq:decom-field-1}
	\end{align}
	Next, we convert $p_{ \Dt}$  back to the expressions with $\bG(x)$. 
	\begin{align}
		(\ref{eq:decom-field-1})
		&=\prod_x \left(p_\Dt(\field'_{|\gset_x}\mid\field_{|\nbhd_x}) - \delta_{\field_{|\gset_x}}^{\field'_{|\gset_x} }+ \delta_{\field_{|\gset_x}}^{\field'_{|\gset_x} } \right) -\delta_{\field}^{\field'} +o(\Dt)\\		
		&=\prod_x \left( \bra{\field'_{|\gset_x} }\bG(x) \ket{\field_{|\nbhd_x} } \cdot \Dt+ \delta_{\field_{|\gset_x}}^{\field'_{|\gset_x} } \right) -\delta_{\field}^{\field'} +o(\Dt)\\		
		&=\sum_x  \bra{\field'_{|\gset_x} }\bG(x) \ket{\field_{|\nbhd_x} } \cdot \Dt \cdot \delta_{\field_{|\gset/\gset_x}}^{\field'_{|\gset/\gset_x} }  +o(\Dt). \label{eq:decom-field-2}
	\end{align}
	Note that in the above we implicitly put all higher order terms into $o(\Dt)$, and that $\prod_x \delta_{\field_{|\gset_x}  }^{\field'_{|\gset_x} } = \delta_{\field}^{\field'}$.
	
	It remains to convert the above to the expression with their embeddings. By definition of the embeddings (Definition~\ref{def:generator-field}) we obtain
	\begin{align}
		(\ref{eq:decom-field-2})
		&=\sum_x  \bra{\field'_{|\gset_x} }\bG(x) \ket{\field_{|\nbhd_x} }  \cdot \braket{\field'_{|\gset/\gset_x}}{\field_{|\gset/\gset_x}}  \cdot \Dt +o(\Dt)\\
		&=\sum_x \left(\bra{\field'_{|\gset_x} }\otimes \bra{\field'_{|\gset/\gset_x}}\right) (\bG(x) \ket{\field_{|\nbhd_x} })\otimes \ket{\field_{|\gset/\gset_x}}   \cdot \Dt +o(\Dt)\\
		&=\sum_x  \bra{\field' } \bG(x) \ket{\field}   \cdot \Dt +o(\Dt).
	\end{align}
	Therefore, 
	\begin{align}
		\bra{\field'} \bG\ket{\field}\cdot \Dt=\bra{\field'} \left( \sum_x \bG(x)\right)\ket{\field}\cdot \Dt+o(\Dt).
	\end{align}
	Dividing both sides by $\Dt$ and sending $\Dt\to 0$ proves
	\begin{align}
		\bra{\field'} \bG\ket{\field} =   \bra{\field'} \left(\sum_xG(x)\right)\ket{\field}.
	\end{align}
	Since $ \field, \field'\in \Omega$ are arbitrary, we conclude that 
	\begin{align}
		\bG=  \sum_x \bG(x).
	\end{align} 
	
\end{proof}

\subsection{Proposition~\ref{prop:commute}}\label{proof-prop:commute}

\textbf{Proposition~\ref{prop:commute} (Commutation Relations of the Local Generators).}
	If $x, x'$ are not neighbors, i.e., $x \nsim x'$, their generators commute. Formally,  
	\begin{align}
		[\bG(x), \bG(x')]=0, \qquad \text{if } x\nsim x',
	\end{align}
	where the commutator $[\bG(x), \bG(x')]=\bG(x)\bG(x')-\bG(x')\bG(x)$.

\begin{proof}
	Since $x\nsim x'$, we know that the local neighborhoods $\nbhd_x$ and $\nbhd_{x'}$ correspond to disjoint sections in $\gset$. Therefore, noting that tensor products are automatically ordered, we have
	\begin{align}
		\bG(x')\bG(x)\ket{\field}&=\bG(x') \left((\bG(x)\ket{\field_{|\nbhd_x}})\otimes \ket{\field_{|\gset/\gset_x}} \right)\\
		&=(\bG(x)\ket{\field_{|\nbhd_x}})\otimes  (\bG(x')\ket{\field_{|\nbhd_{x'}}}) \otimes \ket{\field_{|\gset/(\gset_x\times \gset_{x'})}} \\
		&=\bG(x)\bG(x')\ket{\field}.
	\end{align}
\end{proof}

\subsection{Proposition~\ref{prop:decomp-Lagrangian}}\label{proof-prop:decomp-Lagrangian}

\textbf{Proposition~\ref{prop:decomp-Lagrangian} (Decomposition of the Lagrangian).}
\begin{align}
    L(w_t, w_t^+)=\sum_x L(w_{t, x}, w_{t, x}^+).
\end{align}    
\begin{proof}
Applying Theorem~\ref{thm:decomposition-field} to the system Lagrangian $L(t, \field, \field^+)$, we have
\begin{align}
	L(w_t, w_t^+)=-\left(\sum_x G(x)^{\field_t}_{\field_t}\right)-\delta({\field_t^+\neq \field_t})\cdot \log \left( \sum_x G(x)^{\field_{t}^+}_{\field_{t}}\right).
\end{align}
Note that, by Theorem~\ref{thm:decomposition-field}, a path $\field_t$ can only jump at one $x$ each time. By definition of the embedding (Definition~\ref{def:generator-field},) $G(x)^{\field_{t}^+}_{\field_{t}}$ can be non-zero only if $\field_{t}^+$ and $\field_{t}$ agree on $\gset/\gset_x$. Therefore, if there is $\field_t\to \field_t^+$ happens at $x$, then all $x'\neq x$ would have $G(x')^{\field_{t}^+}_{\field_{t}}=0$.

Therefore, we have 
\begin{align}
	\delta({\field_t^+\neq \field_t})\cdot \log \left( \sum_x G(x)^{\field_{t}^+}_{\field_{t}}\right)&=\sum_x \delta({\field_t^+}_{|\gset_x}\neq {\field_t}_{|\gset_x})\cdot \log \left( G(x)^{{\field_t^+}_{|\gset_x}}_{{\field_t}_{|\nbhd_x}}\right).
\end{align}

Therefore, by definition 
\begin{align}
	L(w_t, w_t^+)=\sum_x L(w_{t, x}, w_{t, x}^+).
\end{align}
\end{proof}

\subsection{Theorem~\ref{thm:path-int}}\label{proof-thm:path-int}
\textbf{Theorem~\ref{thm:path-int} (A Path Integral Formalism of the System).}
    The probability that the system evolves from $\field$ to $\field'$ after time $T$ can be expressed in the following ways.
    \begin{align}
	p_T(\field'\mid\field)=\bra{\field'}e^{\bG T} \ket{\field}  =\int \gD \field \ \exp\left\{-\sum_x\int_{0}^{T} \dt \ L(w_{t, x}, w_{t, x}^+)\right\}, %
\end{align}
where the integral is done over paths starting from $\field$ to $\field'$ in time $T$.

\begin{proof}
    We begin by classifying different paths by the number of their jumps, i.e., the number of configuration changes. Denote $\field_{0:T, \Dt, k}$ as a path having  $k$ jumps, and denote $t_n=n\Dt$, we have
\begin{align}
	p_t(\field'\mid\field)&=\sum_{k=0}^N\sum_{\field_{0:T, \Dt, k}}^{\field\to \field'} \prod_{n=1}^N \bra{\field_{t_n}} e^{\bG\Dt} \ket{\field_{t_{n-1}}}\\
	&=\sum_{k=0}^N  \sum_{\field_{0:T, \Dt, k}}^{\field\to \field'} \cdot (\Dt)^k  \cdot \frac{1}{(\Dt)^k} \prod_{n=1}^N \bra{\field_{t_n}} e^{\bG\Dt} \ket{\field_{t_{n-1}}}\\ 
	&=\sum_{k=0}^N  \sum_{\field_{0:T, \Dt, k}}^{\field\to \field'} \cdot (\Dt)^k  \cdot \exp\left\{ -k\log\left(\Dt \right) + \sum_{n=1}^N \log \ \bra{\field_{t_n}} e^{\bG\Dt} \ket{\field_{t_{n-1}}} \right\} \label{eq:3-3-1}
\end{align}
Recall that we want $\Dt\to 0$ and $N\to \infty$. To make sense of this, we view the above as three parts. The first part is a summation over paths, and the second part is $(\Dt)^k $ which is viewed as a path measure. Thus, as $\Dt\to 0$ and $N\to\infty$, the first two parts together become essentially an integral over all the paths going from $\field$ at time $t=0$ and ending in $\field'$ at time $t=T$. It remains to determine the last term, and specifically its exponent:
\begin{align}
	 -k\log\left(\Dt \right) + \sum_{n=1}^N \log \ \bra{\field_{t_n}} e^{\bG\Dt} \ket{\field_{t_{n-1}}}, \label{eq:3-3-2} 
\end{align}
for which we will need the following fact: 
\begin{align}
	e^{\bG\Dt}=1+\bG\Dt+o(\Dt) \quad \text{and}\quad \bG=\ket{\field'} G^{\field'}_{\field} \bra{\field}.
\end{align}
There are two cases. The first case is that $\field_{t_n}=\field_{t_{n-1}}$, and we can see that
\begin{align}
	\log \ \bra{\field_{t_n}} e^{\bG\Dt} \ket{\field_{t_{n-1}}} = \log\left(1+G^{\field_{t_{n}}}_{\field_{t_{n-1}}}\cdot \Dt + o(\Dt) \right)=G^{\field_{t_{n}}}_{\field_{t_{n-1}}}\cdot \Dt + o(\Dt).
\end{align}
For the second case, i.e., $\field_{t_n}\neq\field_{t_{n-1}}$,  there are exactly $k$ jumps for the given path. Therefore, we distribute the $-k\log(\Dt)$ into this case and obtain
\begin{align}
	- \log(\Dt)+\log \ \bra{\field_{t_n}} e^{\bG\Dt} \ket{\field_{t_{n-1}}} &=\log \left( \bra{\field_{t_n}} e^{\bG\Dt} \ket{\field_{t_{n-1}}}/\Dt \right)\\
	&= \log \left( G^{\field_{t_n}}_{\field_{t_{n-1}}} + o(\Dt)\right).
\end{align}
Therefore, putting the two cases together, we finally have
\begin{align}
	(\ref{eq:3-3-2})&=\sum_{n=1}^N \delta_{\field_{t_{n-1}}}^{\field_{t_{n}}} G^{\field_{t_{n}}}_{\field_{t_{n-1}}} \cdot \Dt + (1-\delta_{\field_{t_{n-1}}}^{\field_{t_{n}}})\log \left( G^{\field_{t_n}}_{\field_{t_{n-1}}} \right) + o(\Dt)\\
	&=\sum_{n=1}^N \Dt \left( \delta_{\field_{t_{n-1}}}^{\field_{t_{n}}} G^{\field_{t_{n}}}_{\field_{t_{n-1}}} + (1-\delta_{\field_{t_{n-1}}}^{\field_{t_{n}}})\log \left( G^{\field_{t_n}}_{\field_{t_{n-1}}} \right)/\Dt + o(\Dt)/\Dt\right) \label{eq:3-3-3}.
\end{align}
When taking $\Dt\to 0$, or equivalently $N=T/\Dt\to \infty$, the $\sum_{n=1}^N \Dt$ above becomes an integral $\int_0^T \dt$. The last term becomes $o(\Dt)/\Dt\to 0$. Then, $\field_t$ is discontinuous w.r.t. time but with only finite discontinuous points, i.e., jumps between configurations.  Since there are only finite jumps, the first term becomes just $G^{\field_t}_{\field_t}$, i.e.,
\begin{align}
    \lim_{\Dt\to 0 } \sum_{n=1}^N \Dt \left( \delta_{\field_{t_{n-1}}}^{\field_{t_{n}}} G^{\field_{t_{n}}}_{\field_{t_{n-1}}}\right) = \int_0^T\dt \ G^{\field_t}_{\field_t}.
\end{align}

Then, the second term can be formulated by the unit impulse function (Definition~\ref{def:unit-impulse}).
\begin{align}
	\lim_{\Dt\to 0 }\sum_{n=1}^N \Dt \cdot  (1-\delta_{\field_{t_{n-1}}}^{\field_{t_{n}}})\log \left( G^{\field_{t_n}}_{\field_{t_{n-1}}} \right)/\Dt=\int_0^T\dt \ \delta({\field_t^+\neq \field_t})\cdot \log \left( G^{\field_{t}^+}_{\field_{t}} \right).
\end{align}
Therefore, \eqref{eq:3-3-3} can be formulated as the integral of the Lagrangian as defined in Definition~\ref{def:lagrangian}.
Thus, we can express \eqref{eq:3-3-2} in a very compact form:
\begin{align}
	\lim_{\Dt\to 0}  -k\log\left(\Dt \right) + \sum_{n=1}^N \log \ \bra{\field_{t_n}} e^{\bG\Dt} \ket{\field_{t_{n-1}}}=-\int_{0}^{T} \dt \ L(w_t, w_t^+).
\end{align}
Thus, adopting the path integral notation, we find the path integral formulation of the propagation probability 
\begin{align}
	p_T(\field'\mid\field)&=\lim_{\Dt\to 0} \sum_{k=0}^N  \sum_{\field_{0:T, \Dt, k}}^{\field\to \field'} \cdot (\Dt)^k  \cdot \exp\left\{ -k\log\left(\Dt \right) + \sum_{n=1}^N \log \ \bra{\field_{t_n}} e^{\bG\Dt} \ket{\field_{t_{n-1}}} \right\}\\
	&=:\int \gD \field \ \exp\left\{-\int_{0}^{T} \dt \ L(w_t, w_t^+)\right\}, \label{eq:3-3-4}
\end{align}
where $\int \gD \field $  is integrating over all paths $\field_t$ starting at $\field_0=\field$ and ending at $\field_T=\field'$.

Decomposing the system Lagrangian to the sum of field Lagrangian over $x\in \gX$, according to Proposition~\ref{prop:decomp-Lagrangian}, we arrive at the final expression.
\begin{align}
	p_T(\field'\mid\field) =  \bra{\field'} e^{\bG T} \ket{\field}=\int \gD \field \ \exp\left\{-\sum_x\int_{0}^{T} \dt \ L(w_{t, x}, w_{t, x}^+)\right\}.
\end{align}

\end{proof}

\subsection{Proposition~\ref{prop:lift-operators-a-case}}\label{proof-prop:lift-operators-a-case}
\textbf{Proposition~\ref{prop:lift-operators-a-case}.}
Consider a function in the form of 
	\begin{align}
		\gamma(t, \field, \field^+)=\delta({\field_t^+\neq \field_t})\cdot \gamma_{\field_t^+\field_t},
	\end{align}
	It corresponds to an operator $\pps:\lH\to \sR$ defined as 
	\begin{align}
		\pps := \sum_{\field,\field'\in \gset}\gamma_{\field'\field} \bra{\field'\field} ,\qquad \text{where }\ \forall \field: \gamma_{\field\field}=0.
	\end{align} 
	Consequently, we have:
	\begin{align}
		\E\left[\int_0^T  \dt\ \gamma(t, \field, \field^+) \Bigm| \field_0=\field\right]=\int_{0}^{T} \dt\  \pps \lG  e^{\bG t}\ket{\field} .
	\end{align}

\begin{proof}
	We begin by taking an infinitesimal step $\Dt$, i.e.,
	\begin{align}
		\E\left[\int_0^\Dt \dt\ \gamma(t, \field, \field^+) \Bigm| \field_0=\field\right]&=\sum_{\field':\field'\neq \field} p_{\Dt}(\field'\mid \field) \gamma_{\field' \field} +  o(\Dt).  \label{eq:3-4-3}
	\end{align}
	Recall that for $\field'\neq\field$ we know $p_{\Dt}(\field'\mid \field)=G^{\field'}_{\field}\cdot\Dt+o(\Dt)$. Plugging this in, we have
	\begin{align}
		(\ref{eq:3-4-3})&=  \Dt\cdot \sum_{\field':\field'\neq \field}G^{\field'}_{\field} \gamma_{\field'\field} + o(\Dt)=   \sum_{\field'\in \gset}G^{\field'}_{\field} \gamma_{\field'\field} \cdot \Dt+ o(\Dt)\\
		&= \left(\sum_{\field''}\gamma_{\field''\field} \bra{\field''\field}\right)\left(\sum_{\field'} \ket{\field' \field}G^{\field'}_{\field}  \right)\cdot \Dt+ o(\Dt)\\
		&=\pps \lG \ket{\field}\cdot \Dt + o(\Dt). 
	\end{align}
	Therefore, stitching the little steps together and taking $\Dt=T/N$,  we have
	\begin{align}
		\E&\left[\int_0^T  \dt\ \gamma(t, \field, \field^+) \Bigm| \field_0=\field\right]=\sum_{n=1}^N \E\left[\int_{(n-1)\Dt}^{n\Dt} \dt\ \gamma(t, \field, \field^+) \Bigm| \field_0=\field\right]\\
		&=\sum_{n=1}^N \sum_{\field'} p_{(n-1)\Dt}(\field'\mid \field) \E\left[\int_{(n-1)\Dt}^{n\Dt} \dt\ \gamma(t, \field, \field^+) \Bigm| \field_{(n-1)\Dt}=\field'\right]\\
		&=\sum_{n=1}^N \sum_{\field'} \bra{\field'} e^{(n-1)\bG\Dt}\ket{\field} \cdot \left(\pps \lG \ket{\field'}\cdot \Dt + o(\Dt)\right)\\
		&=\sum_{n=1}^N \Dt \cdot \left(\pps \lG  e^{(n-1)\bG\Dt}\ket{\field} + o(\Dt)/\Dt\right)\\
		&=\int_{0}^{T} \dt\  \pps \lG  e^{\bG t}\ket{\field} .\tag{as $\Dt\to 0$ and $N\to \infty$}
	\end{align}

\end{proof}

\subsection{Proposition~\ref{prop:min-L}}\label{proof-prop:min-L}
\textbf{Proposition~\ref{prop:min-L} (Minimizing the Integral of Lagrangian Implies Determinism).}
Given a fixed frequency $\forall \field: |G^{\field}_{\field}|=K$, we can see that $\frac{G^{\field'}_{\field}}{K}$ represents a probability distribution over $\field':\field'\neq \field$. We denote Shannon entropy as
\begin{align}
    H_\field:=-\sum_{\field':\field'\neq \field} \frac{G^{\field'}_{\field}}{K} \log\left( \frac{ G^{\field'}_{\field} }{K} \right).
\end{align}
Then, we have 
\begin{align}
    \tilde{\bm{L}} \lG \ket{\field}=-K\log K+H_\field.
\end{align}
Therefore, the following inequality holds:
\begin{align}
    \E\left[\int_0^T  \dt\ L(\field_t, \field_t^+) \Bigm| \field_0=\field\right]\geq KT(1-\log K),
\end{align}
where the minimum is achieved when the system dynamic is deterministic, i.e., every $\field$ has only one configuration $\field'$ it can jump to.

\begin{proof}
	For any $\ket{\field}\in \gH$, we have 
\begin{align}
	\tilde{\bm{L}} \lG \ket{\field} &=-\sum_{\field':\field'\neq \field} G^{\field'}_{\field}\log G^{\field'}_{\field}\\
	&=-K\sum_{\field':\field'\neq \field} \frac{G^{\field'}_{\field}}{K} \left(\log\left( \frac{ G^{\field'}_{\field} }{K} \right) +\log K \right)\\
	&=-K\log K-K\sum_{\field':\field'\neq \field} \frac{G^{\field'}_{\field}}{K} \log\left( \frac{ G^{\field'}_{\field} }{K} \right)\\
	&=-K\log K+K H_\field\\
	& \geq -K\log K. \tag{because $H_\field\geq 0$}
\end{align}
The inequality is achieved when $H_\field=0$, i.e., there is only one $\field'$ such that $\field$ can jump to.

Therefore, for any normalized state vector $\sfield\in \gH$, we have 
\begin{align}
	\tilde{\bm{L}} \lG \sfield = \sum_\field\tilde{\bm{L}} \lG \sfcoef^\field \ket{\field}\geq \sum_\field \sfcoef^\field\cdot (-K\log K)=-K\log K.
\end{align}
Finally, by Proposition~\ref{prop:lift-operators-a-case}, and that $e^{\bG t}$ preserve the normalization, we can see that 
\begin{align}
	E\left[\int_0^T  \dt\ L(\field_t, \field_t^+) \Bigm| \field_0=\field\right]&=KT+\int_{0}^{T} \dt\  \tilde{\bm{L}} \lG  e^{\bG t}\ket{\field}\\
	&\geq KT(1-\log K).
\end{align}
\end{proof}

\section{Proofs in Section~\ref{sec:IF}}\label{sec:proof-IF}

\subsection{Proposition~\ref{prop:stationary}}\label{proof-prop:stationary}
\textbf{Proposition~\ref{prop:stationary} (The Stationary State Always Exists).}
    Given that the configuration set $\gset$ is finite, the stationary state 
    \begin{align}
        \ket{\varphi(\infty)}:=\lim_{t\to \infty} \ket{\varphi(t)}    
    \end{align}
    always exists.
\begin{proof}
    The proof is not as innocent as its result looks to be. The proof consists of two parts: (1) we use the Gershgorin circle theorem to prove that all eigenvalues of $\bG$ have negative real parts, or they are exactly $0$; (2) we use Jordan normal form to show that $e^{\bG t}$ converges as $t\to \infty$.

    We begin by rephrasing the Gershgorin circle theorem in our language. 
    \begin{lemma}[\textbf{Gershgorin circle theorem}]\label{lemma-Gershgorin}
        Consider a square matrix $G$ where the element in the $i$-{th} row and $j$-{th} column is $G_j^i$. Denote $R_j$ the sum of the absolute values of the non-diagonal entries in the $j$-{th} column
        \begin{align}
            R_j=\sum_{i:i\neq j} |G_j^i|.
        \end{align}
        Let $D(G_j^j, R_j)\subset \sC$ be a closed disc centered at $G_j^j$ with radius $R_j$ in the complex plane $\sC$.
        
        Then, each eigenvalue of $G$ must lie in one of such discs. 
    \end{lemma}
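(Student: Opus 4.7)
The plan is to follow the classical argument for the Gershgorin circle theorem, adapted to the column-sum convention used here. My first observation is that a square matrix and its transpose share the same characteristic polynomial and hence the same spectrum, so the column-sum discs $D(G_j^j, R_j)$ for $G$ coincide with the row-sum discs for $G^{\top}$. This lets me reduce to the standard row-based version of Gershgorin, which I would then prove directly.

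For the row case, I would take an eigenpair $(\lambda, v)$ of $G^{\top}$ with $v \neq 0$, and let $k$ be an index at which $|v_k|$ is maximal (so $|v_k| > 0$ and $|v_j| \leq |v_k|$ for every $j$). Reading off the $k$-th coordinate of $G^{\top} v = \lambda v$ isolates $(\lambda - (G^{\top})_k^k)\, v_k = \sum_{j \neq k} (G^{\top})_j^k\, v_j$. Applying the triangle inequality, dividing by $|v_k|$, and using $|v_j|/|v_k| \leq 1$ yields $|\lambda - (G^{\top})_k^k| \leq \sum_{j \neq k} |(G^{\top})_j^k|$. Translating transposition indices via $(G^{\top})_j^k = G_k^j$ turns this row sum for $G^{\top}$ back into the column sum $R_k$ for $G$, so $\lambda$ lies in $D(G_k^k, R_k)$, as claimed.

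There is no deep obstacle: the whole argument is essentially a one-line extraction from the eigenvalue equation. The only thing to watch carefully is index bookkeeping, since the author's convention $G_j^i$ for the $(i,j)$ entry makes $R_j = \sum_{i \neq j} |G_j^i|$ a sum over a fixed column $j$. A sign- or index-convention slip is the most likely source of error, so I would verify the transposition step explicitly before writing the proof up; otherwise the argument is routine and short.
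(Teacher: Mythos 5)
Your proof is correct. The paper itself offers no proof of this lemma: it is stated inside the proof of Proposition~\ref{prop:stationary} as a rephrasing of the classical Gershgorin circle theorem and used as a known result, so there is nothing to compare against. Your argument is the standard one, and the index bookkeeping you flagged does check out: with the convention that $G_j^i$ is the entry in row $i$ and column $j$, the off-diagonal row sum of $G^{\top}$ at index $k$ is $\sum_{j\neq k}\lvert (G^{\top})_j^k\rvert=\sum_{j\neq k}\lvert G_k^j\rvert=R_k$, and the diagonal entries agree, so the row discs of $G^{\top}$ are exactly the column discs of $G$ and the reduction via the shared spectrum of $G$ and $G^{\top}$ is valid. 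The maximal-coordinate extraction and triangle inequality then give $\lvert\lambda-G_k^k\rvert\le R_k$ as you describe.
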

    Suppose our configuration set $\gset$ has size $m$. Then, we can assign each $\field\in \gset$ a unique integer in $[m]:=\{1, 2, \dots, m\}$. Thus, we find a matrix representation $G$ of our generator $\bG$. We may as well use $G_{\field}^{\field'}$ to denote the $\field$-{th} row and $\field'$-{th} column of the matrix. 

    By Definition~\ref{def:inf-generator}, we know $G^{\field}_{\field}\leq 0$, $G^{\field'}_{\field}\geq 0$ if $\field'\neq \field$,  and $\sum_{\field'}G^{\field'}_{\field}=0$. Therefore,
    \begin{align}
        R_{\field}=\sum_{\field':\field'\neq \field} |G_{\field}^{\field'}|=\sum_{\field':\field'\neq \field} G_{\field}^{\field'}=-G_{\field}^{\field}=|G_{\field}^{\field}|.
    \end{align}
    As a result, the disc $D(G_{\field}^{\field}, R_j)$ must lie in the non-positive half of the complex plane, and the only point in the disc with non-negative real part is exactly $0$. 

    Next, we turn our attention to $e^{\bG t}$.
    Let $J$ be the Jordan normal form of matrix $G$, and there exists invertible matrix $P$ such that 
    \begin{align}
        G=PJP^{-1}.
    \end{align}
    Denote the Jordan blocks by $J_k$ with eigenvalue $\lambda_k$. Note that 
    \begin{align}
        e^{Gt}&=e^{PJP^{-1}t}=1+\sum_{n=1}^\infty \frac{(PJP^{-1}t)^n}{n!}\\
        &=P\left( 1+\sum_{n=1}^\infty \frac{J^nt^n}{n!} \right) P^{-1}\\
        &=P e^{Jt} P^{-1}
    \end{align}
    Thus, the convergences of the above requires the convergence of every Jordan blocks 
    \begin{align}
        e^{J_k t}=1+\sum_{n=1}^\infty \frac{J_k^nt^n}{n!} 
    \end{align}
    Note that each Jordan block can be decomposed by 
    \begin{align}
        J_k = \lambda_k I_k + N_k,
    \end{align}
    where $I_k$ is the $k\times k$ identity matrix, and $N_k$ is a nilpotent matrix satisfying $N_k^n=0$ for all $n\geq k$.
    Since $I_k, N_k$ commute, we have
    \begin{align}
        e^{J_k t}=e^{\lambda_k t}e^{N_k t}.
    \end{align}
    Observe 
    \begin{align}
        e^{N_k t}=1+\sum_{n=1}^{k-1} \frac{N_k^nt^n}{n!} = O(t^{k-1}).
    \end{align}
    Thus, if $\lambda_k$ has negative real parts, we have
    \begin{align}
        \lim_{t\to \infty} e^{\lambda_k t}e^{N_k t} \to 0.
    \end{align}
    Otherwise, $\lambda_k=0$, and  
    \begin{align}
        e^{J_k t}=e^{N_k t}=1+\sum_{n=1}^{k-1} \frac{N_k^nt^n}{n!}. \label{eq:jordan-1}
    \end{align}  

    By Proposition~\ref{prop:dynamics-generator}, we know that 
    \begin{align}
        \ket{\varphi(t)}=e^{\bG t}\ket{\field}
    \end{align}
    represents a probability distribution given any basis vector  $\ket{\field}$. Therefore, $e^{\bG t}$ is always bounded by a constant. 

    Observe from \eqref{eq:jordan-1} that if $\lambda_k=0$ then it must be $k\leq 1$, otherwise \eqref{eq:jordan-1} is unbounded. Therefore, in this case $e^{J_k t}=1$ and thus the convergence automatically holds.

    Hence, we deduce that $e^{Jt}$ converges as $t\to \infty$, and thus the convergence of $e^{\bG t}=Pe^{Jt}P^{-1}$. This implies the existence of 
    \begin{align}
        \ket{\varphi(\infty)}:=\lim_{t\to \infty} \ket{\varphi(t)}=\lim_{t\to \infty} e^{\bG t} \ket{\varphi(0)} .
    \end{align}

\end{proof}

\subsection{Corollary~\ref{cor:avg-valid}}\label{proof-cor:avg-valid}
\textbf{Corollary~\ref{cor:avg-valid} ($\avgfield$ and $\ppsv$ Always Exist).}
    Given that the configuration set $\gset$ is finite, the averaged state $\avgfield$ and the objective value $\ppsv$ always exist. Moreover, we have 
    \begin{align}
        \avgfield=\ket{\varphi(\infty)}\qquad \text{and}\qquad \ppsv=\pps \lG \avgfield.
    \end{align}

\begin{proof}
    Proposition~\ref{prop:stationary} states that $\ket{\varphi(t)}$ is converging in a Hilbert space $\gH$. Thus, its mean would converge to the same limit, i.e., $\avgfield=\ket{\varphi(\infty)}$, and thus always exists.

    Moreover, since $\pps$ and $\lG$ are both continuous linear, we have 
    \begin{align}
        \ppsv := \lim_{T\to \infty} \frac{1}{T} \int_{0}^T \dt\ \pps \lG \ket{\varphi(t)}= \pps \lG \lim_{T\to \infty} \frac{1}{T} \int_{0}^T \dt\  \ket{\varphi(t)} = \pps \lG\avgfield,
    \end{align}
    and thus $\ppsv$ always exist.

\end{proof}

\subsection{Theorem~\ref{thm:ID-case}}\label{proof-thm:ID-case}

\textbf{Theorem~\ref{thm:ID-case} (Existence of \dnameU: A Case Study).}
    Consider the two-entity formulation described above for an ergodic system. If the objective operator $\pps:\lH \to \sR_+$ is non-negative and $\forall{\bN\in \genset_{x'}}:\min_{\bM\in \genset_x} \ppsv(\bM)=0 $ for a finite $\genset_x$, then there exists generator $\bM^\star$  that simulates the minimization, such that
    \begin{align}
        \forall \bN\in \genset_{x'}:\quad \ppsv(\bM^\star)= \min_{\bM\in \genset_x}\  \ppsv(\bM)=0. 
    \end{align}
\begin{proof}

    The idea is to simply construct such an $\bM^\star$, where it tries a different strategy $\bM$ if the current one does not work. The construction is as follows.

    Denote the size of $\genset_x$ to be $m$, i.e., there are $m$ generators to choose from, where each $\bM:\gA\times \gB\times \gM \to \gA\times \gM$. Let us label these $m$ generators by $M_i$ for $i\in [m]$.

    To simulate different $\bM$, we need an indicator to keep track of which one is currently being simulated. Concretely, suppose entity $x$ is simulating the $i$-th strategy with $(\alpha,\beta,\mu)\in \gA\times \gB\times \gM$ being the current configuration of the simulated one, then we need $(\alpha, \beta, (\mu, i))$ to be the new configuration for $\bM^\star$. 
    
    Therefore, we construct $\gM^\star$ as 
    \begin{align}
        \gM^\star:= \gM\times [m].
    \end{align} 
    However, when $\bM^\star$ is simulating a $\bM$, we may still work with the corresponding Hilbert space $\lH$.
    Then, we can construct the new generator $\bM^\star$ as the following. For each $\field\to \field^+$, if its objective signal is $\pps \ket{\field^+\field}= 0$, then $\bM^\star$ keeps simulating the current $\bM$. If its objective signal is positive, then $\bM^\star$ has a positive transition rate to other strategies in $\genset_x$ by setting the current strategy $i$ to a uniformly random label in $j\in [m]$. 

    Next, we prove that this $\bM^\star$ leads to the convergence to a stationary distribution with zero objective value. We know that $\forall \bN\in \genset_{x'}$, there are some $\bM_i\in \genset_x$  such that $\ppsv(\bM_i)=0$. Let $\avgfield_i$ denote the corresponding stationary distribution, and by definition $\ppsv(\bM_i)=\pps \lG \avgfield_i=0$.

    Let $\gset_i\subset \gset$ be the subset of configurations where $\avgfield_i$ has positive measure, i.e., $\avgfcoef_i^\field > 0$ if and only if $\field\in \gset_i$. We can see that, by definition, $\gset_i$ must be closed, meaning that any configuration in $\gset_i$ can only transit to another configuration in $\gset_i$. Moreover, this transition can never incur positive objective signal $\pps$, since $\pps \lG \avgfield_i=0$. The important property we can see is that $\gset_i$ is absorbing, i.e., once the system configuration transits into this set, it never leaves, and it never incurs positive objective value.  
    
    Take any configuration $(\alpha,\beta,\mu,\nu,j)\in \gset\backslash \gset_i$. If the $j$-th strategy has objective value $\ppsv(M_j)>0$, then it must have a path with positive transition rate to change to strategy $i$ by construction of the $\bM^\star$. If  $j=i$ but still $(\alpha,\beta,\mu,\nu,i)\notin\gset_i$, then it must have a positive path leading to $\gset_i$ because the simulated process $M_i$ would converge to that. 
    
    Therefore, under $\bM^\star$, for any configuration of the system outside such a $\gset_i$, it must have a path with positive transition rate leading to one of such $\gset_i$. Since such $\gset_i$ are all absorbing, the system eventually must stay in one of those forever, and thus the corresponding stationary state $\avgfield_i$ incurs zero objective value. 

\end{proof}

\subsection{Fact~\ref{fact:subH}}\label{proof-fact:subH}
\textbf{Fact~\ref{fact:subH} (Another Representation of $\subH$).}
Write $\sfield=\sum_{\field}\sfcoef^\field \ket{\field}$, and we have
    \begin{align}
        \subH=\proj\ \lH=\left\{\sfield \in \gH\mid \sum_{\field}\sfcoef^\field =0 \right\}.
    \end{align}
\begin{proof}
    Let us denote $(\subH)':=\left\{\sfield \in \gH\mid \sum_{\field}\sfcoef^\field =0 \right\}$. By definition of the projection operator $\proj$ (\eqref{eq:proj}), it is easy to see that $\proj\lH\subseteq (\subH)'$. 

    For the other direction, suppose $\dim(\gH)=n$, and we can see that $(\subH)'$ is a $(n-1)$-dimensional subspace of $\gH$. Let us fix $\field$ and choose the following basis vector for consideration:
    \begin{align}
        \{\ket{\field'}-\ket{\field}\}_{\field'\in \gset}.
    \end{align}
    The above is a set of $n-1$ linearly independent vectors (ignoring the zero), and they all lie in the subspace $(\subH)'$ whose dimension is also $n-1$. Therefore,  the above set of vectors span the entire $(\subH)'$, i.e., 
    \begin{align}
        (\subH)'=\texttt{span}\left(\{\ket{\field'}-\ket{\field}\}_{\field'\in \gset}\right).
    \end{align}
    Finally, observe  
    \begin{align}
        (\subH)'=\texttt{span}\left(\{\ket{\field'}-\ket{\field}\}_{\field'\in \gset}\right)=\proj\ \texttt{span}\left(\{\ket{\field'\field}\}_{\field'\in \gset}\right)\subseteq \proj\ \lH.
    \end{align}
    Therefore, the above proves $(\subH)'=\proj\lH=\subH$.
\end{proof}

\subsection{Fact~\ref{fact:proj-G}}\label{proof-fact:proj-G}
\textbf{Fact~\ref{fact:proj-G}.} By definition 
	\begin{align}
		\bG=\proj\ \lG.
	\end{align}   
\begin{proof}
    By definition,
    \begin{align}
        \bG=\sum_{\field,\field'\in \gset}\ket{\field'}G^{\field'}_{\field}\bra{\field}, \qquad \lG=\sum_{\field,\field'\in \gset}G^{\field'}_{\field} \act_{\field'}^{\field}.
    \end{align}
    Therefore, for $\forall \field\in \gset$, 
    \begin{align}
        \proj \ \lG\ket{\field}&=\sum_{\field'}\proj \ G^{\field'}_{\field} \ket{\field'\field}=\sum_{\field'} G^{\field'}_{\field}\cdot (\ket{\field'}-\ket{\field})\\
        &=\sum_{\field':\field'\neq \field} G^{\field'}_{\field}\cdot (\ket{\field'}-\ket{\field}). \label{eq:proj-G-1}
    \end{align}
    Note that by definition of the generator, $\sum_{\field':\field'\neq \field} G^{\field'}_{\field}=-G^\field_\field$. Therefore, 
    \begin{align}
        (\ref{eq:proj-G-1})=\sum_{\field'}G^{\field'}_{\field}\ket{\field'}=\bG\ket{\field}.
    \end{align}
\end{proof}

\subsection{Proposition~\ref{prop:bound-S}}\label{proof-prop:bound-S}
\textbf{Proposition~\ref{prop:bound-S}.}
    The operator $\bS:\subH\to \gH$ is bounded if the system is ergodic. 

\begin{proof}
    Given that the system is ergodic, there is a unique stationary state $\avgfield$. 
    In this case, it is known that the dynamics converge to the stationary state with an exponential rate. Formally, for any initial state $\ket{\varphi(0)}$, there exist some constant $K, \kappa>0$ such that
    \begin{align}
        \left\|e^{\bG t}\ket{\varphi(0)}-\avgfield\right\|\leq Ke^{-\kappa t}.
    \end{align} 
    A proof of this, for example, can be found at \citep[Lemma~4.4]{yin2012continuous}.

    Next, note that $\subH$ is the span of vectors like $\ket{\field'}-\ket{\field}$, and we can see that any $\Delta \sfield\in \subH$ can be expressed by the difference of two states (normalized vectors), i.e.,
    \begin{align}
        \Delta \sfield = Z\cdot (\sfield_1-\sfield_2),
    \end{align}
    where $Z>0$ and $\sfield_1, \sfield_2\in \gH$ are normalized states representing probability distribution.

    Therefore, 
    \begin{align}
        \left\| e^{\bG t}\Delta \sfield \right\|&=Z\cdot \left\| e^{\bG t}(\sfield_1-\sfield_2) \right\|=Z\cdot\left\| e^{\bG t}\sfield_1-\avgfield+\avgfield-e^{\bG t}\sfield_2 \right\|\\
        &\leq Z\cdot\left\| e^{\bG t}\sfield_1-\avgfield\right\|+Z\cdot\left\|\avgfield-e^{\bG t}\sfield_2 \right\|\\
        &\leq 2ZKe^{-\kappa t}.
    \end{align}

    Plugging this into the definition of operator $S$, we obtain
    \begin{align}
        \left\|\bS \Delta \sfield \right\|&= \left\| \int_{0}^{\infty}\dt \ e^{\bG t}\Delta \sfield\right\|\leq \int_{0}^{\infty}\dt \ \left\|  e^{\bG t}\Delta \sfield\right\|\\
        &\leq \int_{0}^{\infty}\dt\ 2ZKe^{-\kappa t} <\infty.
    \end{align}

\end{proof}

\subsection{Proposition~\ref{prop:grad}}\label{proof-prop:grad}
\textbf{Proposition~\ref{prop:grad} (Gradient Formula).}
    Given an objective operator $\pps:\lH\to \sR$, assuming the system is ergodic, with the parameterization given by Definition~\ref{def:reparam-generator} we have 
    \begin{align}
        \frac{\partial \ppsv}{\partial G^{\field'}_{\field}}=\pps (1+\lG \bS \proj)\act_{\field'}^{\field} \avgfield.
    \end{align}

\begin{proof}
    Consider we apply an infinitesimal variation $\delta \bG=\epsilon \bH$ to the original generator $\bG$ denoted as
    \begin{align}
        \delta \bG=\epsilon\cdot \sum_{\field\field'}\proj\ \act_{\field'}^{\field} H^{\field'}_{\field}.
    \end{align}
    It would incur a variation $\delta \ppsv$ in the objective value. Thus, we only need to investigate how $\delta \bG$ incurs $\delta \ppsv$. 

    To do this, we need a technical lemma as stated below, which appeared in various fields~\citep{NAJFELD1995321, feynman1951operator, bellman1997introduction}.

    \begin{lemma}\label{lemma:directional-derivative}
        Consider we apply an infinitesimal variation $\delta \bG=\epsilon \bH$ to the generator $\bG$, and define the directional derivative is
        \begin{align}
            D_H(t, \bG) := \lim_{\epsilon\to 0}\frac{e^{t(\bG+\epsilon \bH)}-e^{t\bG}}{\epsilon}.        
        \end{align}
        We have
        \begin{align}
            D_H(t, \bG) = \int_0^t \ds\  e^{s\bG} \bH e^{(t-s)\bG}.
        \end{align}

    \end{lemma}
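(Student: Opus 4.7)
The plan is to use a classical interpolation trick between the unperturbed and perturbed semigroups. Specifically, I will introduce an auxiliary function $f:[0,t]\to \gB(\gH,\gH)$ defined by
\begin{align}
    f(s):=e^{s\bG}\, e^{(t-s)(\bG+\epsilon\bH)},
\end{align}
so that $f(0)=e^{t(\bG+\epsilon\bH)}$ and $f(t)=e^{t\bG}$. The difference $e^{t\bG}-e^{t(\bG+\epsilon\bH)}$ can then be recovered as $\int_0^t f'(s)\,\ds$, converting the lemma into a question about a single derivative rather than about two full exponentials.

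Next, I would compute $f'(s)$ directly. Since $\bG$ commutes with $e^{s\bG}$ (but notably $\bG+\epsilon\bH$ need not commute with $e^{s\bG}$, which is why the ordering matters), the product rule yields
\begin{align}
    f'(s)=\bG\, e^{s\bG}\, e^{(t-s)(\bG+\epsilon\bH)} - e^{s\bG}(\bG+\epsilon\bH)\, e^{(t-s)(\bG+\epsilon\bH)} = -\epsilon\, e^{s\bG}\, \bH\, e^{(t-s)(\bG+\epsilon\bH)}.
\end{align}
Integrating from $0$ to $t$ and rearranging gives the exact identity
\begin{align}
    \frac{e^{t(\bG+\epsilon\bH)}-e^{t\bG}}{\epsilon} = \int_0^t \ds\, e^{s\bG}\, \bH\, e^{(t-s)(\bG+\epsilon\bH)},
\end{align}
valid for every $\epsilon\neq 0$.

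Finally, I would take $\epsilon\to 0$ on both sides. Since the configuration set $\gset$ is finite, $\gB(\gH,\gH)$ is finite-dimensional, and $\epsilon\mapsto e^{(t-s)(\bG+\epsilon\bH)}$ is continuous in $\epsilon$ uniformly in $s\in[0,t]$ (it is continuous in operator norm and the interval is compact). This justifies passing the limit inside the integral by dominated convergence, giving the claimed formula $D_H(t,\bG)=\int_0^t \ds\, e^{s\bG}\,\bH\, e^{(t-s)\bG}$. The main obstacle, if any, is conceptual rather than technical: one must resist the temptation to treat $(\bG+\epsilon\bH)^n$ via the binomial theorem, which fails since $\bG$ and $\bH$ generally do not commute; the interpolation device $f(s)$ is precisely what sidesteps this noncommutativity by pushing all the $\bH$-dependence into a single intermediate differentiation.
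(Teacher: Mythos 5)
Your proof is correct, and it takes a genuinely different route from the paper's. The paper verifies that the perturbed evolution satisfies the integral (Duhamel) equation $\ket{\varphi(t)}=e^{t\bG}\ket{\varphi(0)}+\epsilon\int_0^t \ds\, e^{(t-s)\bG}\bH\ket{\varphi(s)}$, iterates it once to isolate the first-order term with an explicit $\gO(\epsilon^2)$ remainder, and then invokes uniqueness of solutions of the ODE to identify the operator expansion $e^{t(\bG+\epsilon\bH)}=e^{t\bG}+\epsilon\int_0^t\ds\, e^{(t-s)\bG}\bH e^{s\bG}+\gO(\epsilon^2)$ before dividing by $\epsilon$. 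You instead differentiate the interpolant $f(s)=e^{s\bG}e^{(t-s)(\bG+\epsilon\bH)}$, which yields the \emph{exact} identity
\begin{align}
\frac{e^{t(\bG+\epsilon\bH)}-e^{t\bG}}{\epsilon}=\int_0^t\ds\, e^{s\bG}\,\bH\, e^{(t-s)(\bG+\epsilon\bH)}
\end{align}
for every $\epsilon\neq 0$, and then pass to the limit inside the integral, which is immediate here since $\gH$ is finite-dimensional and the convergence of the integrand is uniform on $[0,t]$. Your version buys an exact pre-limit formula and avoids any appeal to iteration or to uniqueness of ODE solutions; the paper's version more directly exhibits the $\gO(\epsilon^2)$ structure of the remainder, which it implicitly leans on later when tracking which perturbative terms are negligible. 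Both are standard and sound; note only that your computation of $f'(s)$ tacitly uses $\frac{\diff}{\ds}e^{(t-s)(\bG+\epsilon\bH)}=-(\bG+\epsilon\bH)e^{(t-s)(\bG+\epsilon\bH)}$, which is fine but worth stating, since it is exactly the commutation of $\bG+\epsilon\bH$ with its own exponential that makes the cancellation work.
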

    \begin{proof}
        Consider the following differential equation
        \begin{align}
            \frac{\diff }{\dt} \sfield=(\bG+\epsilon \bH)\sfield,
        \end{align}
        and as we know its solution is 
        \begin{align}
            \ket{\varphi(t)}=e^{t(\bG+\epsilon \bH)}\ket{\varphi(0)}.
        \end{align}
        Moreover, it is easy to verify that the following is also a solution, i.e.,
        \begin{align}
            \ket{\varphi(t)}=e^{t\bG}\ket{\varphi(0)}+\epsilon\int_0^t \ds\  e^{(t-s)\bG} \bH \ket{\varphi(s)}.
        \end{align}
        Solving this iteratively, we can obtain
        \begin{align}
            \ket{\varphi(t)}=\left(e^{t\bG}+\epsilon\int_0^t \ds\  e^{(t-s)\bG} \bH e^{s\bG}\right)\ket{\varphi(0)} + \gO(\epsilon^2).
        \end{align}
        Since first order differential equation has a unique solution and that the initial point $\ket{\varphi(0)}$ is arbitrary, it must be that 
        \begin{align}
            e^{t(\bG+\epsilon \bH)}=e^{t\bG}+\epsilon\int_0^t \ds\  e^{(t-s)\bG} \bH e^{sG} + \gO(\epsilon^2).
        \end{align}
        Therefore,
        \begin{align}
            D_H(t, \bG) &:= \lim_{\epsilon\to 0}\frac{e^{t(\bG+\epsilon \bH)}-e^{tG}}{\epsilon}\\
            &=\lim_{\epsilon\to 0}\frac{\epsilon\int_0^t \ds\  e^{(t-s)\bG} \bH e^{sG} + \gO(\epsilon^2)}{\epsilon}\\
            &=\int_0^t \ds\  e^{(t-s)\bG} \bH e^{sG} \\
            &=\int_0^t \ds\  e^{sG} \bH e^{(t-s)\bG} .
        \end{align}
    \end{proof}

    We want to make sure that $\delta \avgfield =\gO(\epsilon)$ given the perturbation $\epsilon \gH$ on the generator $\bG$. This is indeed the case as we can see from the following argument. Given the ergodicity, $\avgfield$ is the unique solution to $(1+\bG)\sfield=\sfield$, i.e., $\avgfield$ is the eigenvector correspond to the non-degenerate eigenvalue $1$ of the linear operator $(1+\bG)$. It is known that analytic perturbation of a linear operator leads to analytic perturbation of the eigenvectors corresponding to non-degenerate eigenvalues~\citep{kato2013perturbation}. Therefore, the perturbation of $\delta \bG=\epsilon \bH$ leads to analytic perturbation of $\delta \avgfield=\gO(\epsilon)$. 
    
    Next, we begin from 
    \begin{align}
        \ppsv = \pps \lG\avgfield.
    \end{align}
    Applying the variation $\delta \bG$, we have 
    \begin{align}
        \delta \ppsv=\pps (\delta\lG)\avgfield+\pps \lG(\delta \avgfield) + o(\epsilon),
    \end{align}
    where $\delta\lG=\epsilon\cdot  \sum_{\field\field'}\act_{\field'}^{\field} H^{\field'}_{\field}=\gO(\epsilon)$ and $\delta\avgfield=\gO(\epsilon)$. 
    Then, by Lemma~\ref{lemma:directional-derivative},
    \begin{align}
        \delta \avgfield&= \lim_{t\to \infty} e^{(\bG+\delta \bG)t} \ket{\varphi(0)}- e^{\bG t} \ket{\varphi(0)}=\lim_{t\to \infty} \left(e^{(\bG+\delta \bG)t} - e^{\bG t} \right)\ket{\varphi(0)}\\
        &=\lim_{t\to \infty} \epsilon\cdot \int_0^t \ds\  e^{s\bG} \bH e^{(t-s)\bG} \ket{\varphi(0)} + o(\epsilon).
    \end{align}
    Given the ergodicity, the above holds for any initial state $\ket{\varphi(0)}$. Thus, let us take $\ket{\varphi(0)}=\avgfield$. Noticing that $\avgfield$ is a fixed-point with respect to the system dynamics, i.e., $ e^{(t-s)\bG} \avgfield=\avgfield$, we have
    \begin{align}
        \delta \avgfield&=\lim_{t\to \infty} \epsilon\cdot \int_0^t \ds\  e^{s\bG} \bH e^{(t-s)\bG} \avgfield + o(\epsilon)\\
        &=\epsilon\cdot \int_0^\infty \ds\  e^{s\bG}\bH \avgfield + o(\epsilon)\\
        &=\epsilon \cdot \bS\bH \avgfield+o(\epsilon)\\
        &=\bS \delta \bG \avgfield + o(\epsilon).
    \end{align}
    One can verify that the above is valid as $\bH \avgfield= \proj \sum_{\field\field'}\act_{\field'}^{\field} H^{\field'}_{\field}\avgfield\in \proj \lH$ indeed lies in $\subH=\proj \lH$, and thus $\bS\bH $ is bounded.

    Putting the above together, and we finally have
    \begin{align}
        \delta \ppsv &=\pps \delta \bG\avgfield + \pps\lG \bS \delta \bG \avgfield  +o(\epsilon)\label{eq:prop-grad-0}\\
        &=\epsilon\cdot\pps \sum_{\field\field'}\act_{\field'}^{\field} H^{\field'}_{\field}\avgfield +\epsilon\cdot \pps\lG \bS \proj \sum_{\field\field'}\act_{\field'}^{\field} H^{\field'}_{\field}\avgfield  +o(\epsilon)\\
        &=\epsilon\cdot\pps (1+\lG \bS\proj)\sum_{\field\field'}\act_{\field'}^{\field} H^{\field'}_{\field}\avgfield  +o(\epsilon).
    \end{align}
    Therefore, the above shows  
    \begin{align}
        \frac{\partial \ppsv}{\partial G^{\field'}_{\field}}=\pps(1+\lG \bS\proj)\act_{\field'}^{\field} \avgfield.
    \end{align}
\end{proof}

\subsection{Fact~\ref{fact:lifted-decompose}}\label{proof-fact:lifted-decompose}
\textbf{Fact~\ref{fact:lifted-decompose}.}
    With the above definition, we have
    \begin{align}
        \bM=\proj\ \lM, \qquad \bN=\proj\ \lN.
    \end{align}
\begin{proof}
    We can verify $\bM=\proj\lM$, and then $\bN=\proj\lN$ would hold by symmetry. By definition,
    \begin{align}
        \proj\ \lM \ \ket{\alpha\beta\mu\nu}&= \proj\ M^{\alpha'\mu'}_{\alpha\beta\mu}\ket{{\alpha'\beta\mu'\nu},{\alpha\beta\mu\nu}}\\
        &=\sum_{\alpha'\mu'}M^{\alpha'\mu'}_{\alpha\beta\mu}(\ket{{\alpha'\beta\mu'\nu}}- \ket{\alpha\beta\mu\nu})\\
        &=\sum_{\alpha'\mu':\alpha'\mu'\neq \alpha\mu}M^{\alpha'\mu'}_{\alpha\beta\mu}(\ket{{\alpha'\beta\mu'\nu}}- \ket{\alpha\beta\mu\nu})\\
        &=\sum_{\alpha'\mu'} M^{\alpha'\mu'}_{\alpha\beta\mu}\ket{{\alpha'\beta\mu'\nu}} \tag{because $\sum_{\alpha'\mu'}M^{\alpha'\mu'}_{\alpha\beta\mu}=-M^{\alpha\mu}_{\alpha\beta\mu}$}\\
        &=\bM \ket{\alpha\beta\mu\nu}.
    \end{align}

\end{proof}

\subsection{Proposition~\ref{prop:grad-entity}}\label{proof-prop:grad-entity}

\textbf{Proposition~\ref{prop:grad-entity} (Gradient Formula in the Two-entity View).}
    Given an objective operator $\pps:\lH\to \sR$, assuming the system is ergodic, with the parameterization of generator $\bM$ given by Definition~\ref{def:lift-generator-entity} we have 
    \begin{align}
        \frac{\partial \ppsv}{\partial M^{\alpha'\mu'}_{\alpha\beta\mu}}=\pps (1+\lG \bS\proj)\act_{\alpha'\mu'}^{\alpha\beta\mu} \avgfield.
    \end{align}

\begin{proof}
    Consider we apply an infinitesimal variation $\delta \bG=\epsilon \bH$ to the original generator $\bG$.
    Proposition~\ref{prop:grad} (\eqref{eq:prop-grad-0}) shows that it would incur a variation $\delta \ppsv=\gO(\epsilon)$ in the objective value as
    \begin{align}
        \delta \ppsv &=\delta(\pps\lG \avgfield)=\pps (\delta\lG)\avgfield+\pps \lG(\delta \avgfield) + o(\epsilon)\\
        &=\pps (\delta\lG)\avgfield+\pps \lG \bS(\delta \bG) \avgfield + o(\epsilon). \label{eq:grad-entity-1}
    \end{align}
    By Proposition~\ref{prop:decomposition} we know $\bG=\bM+\bN$, and thus when the variation is only applied to $\bM$ as $\delta \bM$ we would have $\delta \lG=\delta \lM$, and similarly $\delta \bG = \delta \bM$.

    Therefore, in this case 
    \begin{align}
        \delta \lG=\delta \lM=\sum_{\alpha'\beta'\alpha\beta\mu}\delta M^{\alpha'\mu'}_{\alpha\beta\mu}\act_{\alpha'\mu'}^{\alpha\beta\mu},\qquad \delta \bG =\delta \bM=\proj\ \delta \lM=\proj \ \sum_{\alpha'\beta'\alpha\beta\mu}\delta M^{\alpha'\mu'}_{\alpha\beta\mu}\act_{\alpha'\mu'}^{\alpha\beta\mu}.
    \end{align}
    Plugging this into \eqref{eq:grad-entity-1}, we have 
    \begin{align}
        \delta \ppsv&= \pps (\sum_{\alpha'\beta'\alpha\beta\mu}\delta M^{\alpha'\mu'}_{\alpha\beta\mu}\act_{\alpha'\mu'}^{\alpha\beta\mu})\avgfield+\pps \lG \bS\proj \ (\sum_{\alpha'\beta'\alpha\beta\mu}\delta M^{\alpha'\mu'}_{\alpha\beta\mu}\act_{\alpha'\mu'}^{\alpha\beta\mu}) \avgfield + o(\epsilon)\\
        &= \pps (1+ \lG \bS\proj)  (\sum_{\alpha'\beta'\alpha\beta\mu}\delta M^{\alpha'\mu'}_{\alpha\beta\mu}\act_{\alpha'\mu'}^{\alpha\beta\mu}) \avgfield + o(\epsilon).
    \end{align}
    This shows 
    \begin{align}
        \frac{\partial \ppsv}{\partial M^{\alpha'\mu'}_{\alpha\beta\mu}}=\pps (1+\lG \bS\proj)\act_{\alpha'\mu'}^{\alpha\beta\mu} \avgfield.
    \end{align}
\end{proof}

\subsection{Theorem~\ref{thm:grad-field}}\label{proof-thm:grad-field}
\textbf{Theorem~\ref{thm:grad-field} (Gradient Formula in the Field).}
Given a local objective operator $\pps_x:\lH\to \sR$, assuming the system is ergodic, the gradient formula for $\ppsv(x)=\pps_x \lG \avgfield$ w.r.t. local generator $\bG(x)$ is
    \begin{align}
        \frac{\partial \ppsv(x)}{\partial G(x)_{\field}^{\field'}}=\pps_x (1+\lG \bS\proj) \act(x)^{\field}_{\field'} \avgfield.
    \end{align}
\begin{proof}
    The proof involves two steps: taking a two-entity view of the system and applying Proposition~\ref{prop:grad-entity}.

    The system configuration of a dynamical stochastic field (Definition~\ref{def:space}) is the collection of multiple local configurations. Recall from Definition~\ref{def:space} that 
    \begin{align}
		\gset=\prod_{x\in \gX} \gset_x,
	\end{align}
    where 
    \begin{align}
		\gset_x=\gM(x)\times \prod_{x': x\to x'}\gA(x, x')
	\end{align}
    is the set of configurations where entity $x$ can change. Note that $\gM(x)$ is the set of the inner configurations of entity $x$, unobservable to others, and $\gA(x, x')$ correspond to the signals which $x$ sends to another entity $x'$.   
	Moreover, the following as what is actually being seen by $x$, i.e., 
	\begin{align}
		\nbhd_x=\gM(x)\times \prod_{x': x\to x'}\gA(x, x')\times \prod_{x': x'\to x}\gA(x', x).
	\end{align}
    Therefore, we may take a two-entity view and relabeling $\gset=\gA\times \gB\times \gM\times \gN$ by denoting 
    \begin{align}
        \gA=\prod_{x': x\to x'}\gA(x, x'),\quad \gB= \prod_{x': x'\to x}\gA(x', x),\quad \gM=\gM(x), \quad \gN=\gset/\nbhd_x.
    \end{align}
    Note that $\gA\times \gM=\gset_x$, and $\gA\times \gB\times \gM=\nbhd_x$. 

    With this relabeling, it is easy to see that the generators 
    \begin{align}
        \bM=\bG(x), \qquad \bN=\sum_{x':x'\neq x}\bG(x').
    \end{align}
    Therefore, the system generator $\bG=\sum_x \bG(x)=\bM+\bN$ as desired, and thus for $\field, \field'$ agreeing on $\gset/\gset_x$ the generator parameters
    \begin{align}
        M^{\alpha'\mu'}_{\alpha\beta\mu} \quad \text{corresponds to}\quad G(x)_{\field}^{\field'}.
    \end{align}
    Similarly, with the relabeling, the action operators 
    \begin{align}
        \act(x)^{\alpha\beta\mu}_{\alpha'\mu'} \quad \text{corresponds to}\quad \act(x)^{\field}_{\field'}.
    \end{align}

    Putting the relabeled objects into Proposition~\ref{prop:grad-entity} would result in the gradient formula for the field formulation. 
    
\end{proof}

\subsection{Proposition~\ref{prop:local-ppsv}}\label{proof-prop:local-ppsv}

\textbf{Proposition~\ref{prop:local-ppsv} (Local Objective Values).}
    If the objective propagation satisfy the following conditions:
    \begin{itemize}
        \item $\ppsp_{x'x}\lG\avgfield = \lG\avgfield$ \ for every acting entity $x'$ connecting to acting entity $x$;
        \item $\pps_{x'x}\lG \avgfield =\ppsv$ \ for every environmental entity $x'$ connecting to acting entity $x$;
        \item $\gX_a$ is strongly connected;
    \end{itemize}
    then every acting entity has the same objective value $\ppsv$, i.e.,
    \begin{align}
        \forall x\in \gX_a: \ppsv(x)=\pps_x \lG \avgfield=\ppsv.
    \end{align}

\begin{proof}
    We begin by introduce a standard definition and a lemma which will become handy. For more details and proof of the lemma, one may refer to \citep{horn2012matrix}[6.2.24 to 6.2.27]
    \begin{definition}[\textbf{Irreducibly Diagonally Dominant Matrix}]\label{def:irreducible}
        A square matrix $\Theta\in \sR^{n\times n}$ is  irreducibly diagonally dominant if it satisfies the following properties.
        \begin{enumerate}
            \item $\Theta$ is irreducible, i.e., the directed graph associated to matrix $\Theta$ is strongly connected.
            \item $\Theta$ is diagonally dominant, i.e., $\forall i\in [n]: |\Theta_{i, i}|\geq \sum_{j\neq i} |\Theta_{i,j}|$.
            \item $\exists i\in[n]:|\Theta_{i, i}|> \sum_{j\neq i} |\Theta_{i,j}|$.
        \end{enumerate}
    \end{definition}
    We have the following result of an irreducibly diagonally dominant matrix.
    \begin{lemma}[\textbf{Taussky}]\label{lemma:taussky}
        An irreducibly diagonally dominant matrix is non-singular. 
    \end{lemma}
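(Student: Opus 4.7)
The plan is to argue by contradiction, assuming $\Theta$ is singular, so there exists a nonzero vector $x \in \sR^n$ with $\Theta x = 0$. I will show that the combination of diagonal dominance and irreducibility forces equality in the diagonal-dominance inequality for every row, contradicting condition 3 of Definition~\ref{def:irreducible}.

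First I will set up the contradiction: let $M := \max_j |x_j| > 0$ and define the ``extremal index set'' $I := \{ j : |x_j| = M\}$. For any $i \in I$, the $i$-th row of $\Theta x = 0$ reads $\Theta_{ii} x_i = -\sum_{j \neq i} \Theta_{ij} x_j$. Taking absolute values and invoking diagonal dominance yields the chain
\begin{align}
\sum_{j \neq i} |\Theta_{ij}|\, M \;\leq\; |\Theta_{ii}|\, M \;=\; \Bigl|\sum_{j \neq i} \Theta_{ij} x_j \Bigr| \;\leq\; \sum_{j \neq i} |\Theta_{ij}|\, |x_j| \;\leq\; \sum_{j \neq i} |\Theta_{ij}|\, M,
\end{align}
so every inequality is actually an equality. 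The first equality forces $|\Theta_{ii}| = \sum_{j \neq i} |\Theta_{ij}|$, and the middle equality forces $|\Theta_{ij}|\,(M - |x_j|) = 0$ for every $j \neq i$. Hence whenever $\Theta_{ij} \neq 0$, the index $j$ also belongs to $I$.

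Next I will use irreducibility to propagate this. Condition 1 says the directed graph associated with $\Theta$ (an edge $i \to j$ whenever $\Theta_{ij} \neq 0$) is strongly connected. Pick any $i_0 \in I$ (which is nonempty since $x \neq 0$). For any target vertex $k \in [n]$, strong connectivity gives a directed path $i_0 \to i_1 \to \dots \to i_\ell = k$, and the observation above applied along each edge shows successively that $i_1, i_2, \dots, i_\ell \in I$. Thus $I = [n]$, which means the equality $|\Theta_{ii}| = \sum_{j\neq i}|\Theta_{ij}|$ holds for every row $i$. This directly contradicts condition 3 of Definition~\ref{def:irreducible}, which asserts strict inequality for at least one row. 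Therefore no such nonzero $x$ exists and $\Theta$ is non-singular.

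The proof is largely mechanical once the maximum-modulus trick is deployed; the only subtle step is the propagation via irreducibility, since one must be careful that the directed graph convention matches the row-indexing used in the dominance inequalities (if the graph is defined with edges $j \to i$ whenever $\Theta_{ij} \neq 0$, the propagation direction reverses but strong connectivity still closes the argument). I expect this graph-theoretic bookkeeping — rather than the algebra — to be the main place where care is needed.
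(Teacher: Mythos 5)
Your proof is correct. The key points all check out: the maximum-modulus argument at an extremal row forces equality throughout the diagonal-dominance chain, the term-by-term equality $|\Theta_{ij}|\,(M-|x_j|)=0$ correctly propagates membership in the extremal set $I$ along edges of the associated digraph, strong connectivity then gives $I=[n]$, and this contradicts the strict-inequality row required by condition 3 of Definition~\ref{def:irreducible}. Your side remark about the edge-orientation convention is also right: since strong connectivity of a digraph is equivalent to strong connectivity of its reverse, either convention closes the argument. The one structural difference from the paper is that the paper does not prove this lemma at all --- it invokes it as a known result and points the reader to \citet{horn2012matrix} (items 6.2.24--6.2.27), where essentially this same argument appears. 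So your proposal is not an alternative route so much as a self-contained filling-in of a citation; it is the standard proof of Taussky's theorem and is the one the cited reference uses.
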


    Without loss of generality, we may view the collection of all environmental entities as a single entity by relabeling configurations sets. Then, the ``boundary conditions'' of the objective propagation equation are given by the objective signals from the environmental entity, which stays unchanged. We may label the entities by $x_0, x_1, \dots, x_n$ where $x_0$ denotes the environmental entity, and the rest $n$ entities are the acting entities. Thus, the objective propagation equation (Definition~\ref{def:ppsp-eq}) can be written as 
    \begin{align}
        \pps_{x_i}= \sum_{j=0}^n \ppsadj^{x_j}_{x_i}\pps_{x_j x_i}, \qquad \text{where}\ \ \pps_{x_j x_i}=\pps_{x_j}\ppsp_{x_j x_i},\qquad i,j=1,2,\dots, n,
    \end{align}
    and $\pps_{x_0 x_i}$ is given by the environmental entity. Multiplying both sides by $\lG\avgfield$, we then apply the first two conditions as stated in the proposition:
    \begin{align}
        \pps_{x_i}\lG\avgfield =  \sum_{j=0}^n \ppsadj^{x_j}_{x_i}\pps_{x_j x_i}\lG\avgfield =\ppsadj^{x_0}_{x_i}\ppsv + \sum_{j=1}^n \ppsadj^{x_j}_{x_i}\pps_{x_j}\lG\avgfield.
    \end{align}
   
    Denote $\Theta\in \sR^{n\times n}$ as a matrix where its elements given by $\Theta_{ij}:=\delta_{i}^j-\ppsadj_{x_i}^{x_j}$. Moreover, denote $\bm a, \bm b\in \sR^n$ as two $n$-dimensional vectors where $a_i:= \pps_{x_i}\lG \avgfield \in \sR$ and $b_i=\ppsadj^{x_0}_{x_i}\ppsv$. Then, the above equation can be formulated into 
    \begin{align}
        \Theta \bm a = \bm b. \label{eq:prop-local-ppsv-1}
    \end{align}
    We can observe that $\Theta$ is  irreducibly diagonally dominant by checking that it satisfies the properties in Definition~\ref{def:irreducible}. First, the graph associated to the acting entities is strongly connected as stated in the theorem. Second, $\Theta$ is diagonally dominant because $\forall i: \sum_{j=0}^n \ppsadj_{x_i}^{x_j}=1=\delta_i^i$ (by Definition~\ref{def:adj-weight}). Lastly, note that there are some entity $x_i$ where $\ppsadj_{x_i}^{x_0}>0$, and thus for such $x_i$ we have $\sum_{j=1}^n \ppsadj_{x_i}^{x_j}<1$. This makes $\Theta$ satisfies the third property. Therefore, $\Theta$ is  irreducibly diagonally dominant, and thus non-singular by Lemma~\ref{lemma:taussky}. 

    Therefore, since $\Theta$ is non-singular, there is only one solution to \eqref{eq:prop-local-ppsv-1}. It is easy to see, given that $\sum_{j=0}^n\ppsadj_{x_i}^{x_j}=1$, the unique solution given by is $a_i=\ppsv$ for all $i$. This means, for all entity $x$, 
    \begin{align}
     \pps_x \lG\avgfield=\ppsv.
    \end{align}
\end{proof}

\subsection{Lemma~\ref{lemma:SPiG}}\label{proof-lemma:SPiG}
\textbf{Lemma~\ref{lemma:SPiG}.}
The following equation stands. 
    \begin{align}
        1+\bS \proj \lG=1+\proj \lG \bS =\infevo.
    \end{align} 

\begin{proof}
    Recall Fact~\ref{fact:proj-G}, i.e., $\bG=\proj \lG$, and we can see that 
    \begin{align}
        \bS \proj \lG=\bS \bG = \int_{0}^{\infty}\dt \  e^{\bG t} \bG = \int_{0}^{\infty} \  e^{\bG t} \diff\left(e^{\bG t }\right)=\infevo - 1.
    \end{align}
    Similarly, since $\bS$ and $\bG$ commutes, 
    \begin{align}
        \proj \lG\bS =\bG \bS=\bS \bG = \infevo - 1.
    \end{align}
\end{proof}

\subsection{Proposition~\ref{prop:P-properties}}\label{proof-prop:P-properties}
\textbf{Proposition~\ref{prop:P-properties}. } 
The operator $\ppsp=1+\lG \bS\proj$ satisfies the following properties.
    \begin{enumerate}
        \item $\ppsp \lG \avgfield=\lG\avgfield$.
        \item $\ppsp^2=\ppsp$.
        \item Given a variation $\delta \bG$, the incurred $\delta \ppsp$ satisfies $\delta \ppsp \lG \avgfield=0$.
    \end{enumerate}

\begin{proof}
    We are going to use two facts. The first fact is $\bG=\proj \lG$, and the second is $\bG\avgfield = 0$.

    Then, we can see that
    \begin{align}
        \ppsp \lG \avgfield=(1+\lG \bS\proj)\lG \avgfield=\lG \avgfield+\lG \bS\bG \avgfield=\lG \avgfield.
    \end{align}
    The second property can be derived similarly with Lemma~\ref{lemma:SPiG}.
    \begin{align}
        \ppsp^2&=(1+\lG \bS\proj)^2=1+2\lG \bS\proj+\lG \bS\proj\lG \bS\proj\\
        &=1+2\lG \bS\proj+\lG (\infevo-1)\bS\proj\\
        &=1+\lG \bS\proj+\lG \infevo \bS\proj\\
        &=\ppsp+\lG \infevo \bS\proj.
    \end{align}
    Note that, with the ergodicity, $\infevo$ maps everything to $\avgfield$, i.e., $\infevo:\subH\to 0$. In addition, since $\infevo$ and $\bS$ are all in the form of $e^{\bG t}$, they commute. Therefore, $\infevo \bS\proj=\bS \infevo \proj=0$. This proves 
    \begin{align}
        \ppsp^2=\ppsp+\lG \infevo \bS\proj=\ppsp.
    \end{align}

    Finally,  we can prove the third property directly from the definition. 
    \begin{align}
        \delta\ppsp \lG \avgfield=\delta(\lG \bS)\proj \lG \avgfield=\delta(\lG \bS)\bG \avgfield=0.
    \end{align}

\end{proof}

\subsection{Proposition~\ref{prop:PQ-properties}}\label{proof-prop:PQ-properties}
\textbf{Proposition~\ref{prop:PQ-properties}.}
The operator $\ppsp[\ppsq]=1+\lG \bS \ppsq \proj$ satisfies the following properties.
    \begin{enumerate}
        \item $\ppsp[\ppsq] \lG \avgfield=\lG\avgfield$.
        \item $\ppsp[\ppsq]\ppsp[\ppsq']=\ppsp[(\ppsq,\ppsq')]$.
        \item Given a variation $\delta \bG$, the incurred $\delta (\ppsp[\ppsq])$ satisfies $\delta (\ppsp[\ppsq]) \lG \avgfield=0$.
    \end{enumerate}

\begin{proof}
    The proof is very similar to that of Proposition~\ref{prop:P-properties}. We are going to use two facts. The first fact is $\bG=\proj \lG$, and the second is $\bG\avgfield = 0$.

    Then, we can see that
    \begin{align}
        \ppsp[\ppsq] \lG \avgfield=(1+\lG \bS \ppsq \proj)\lG \avgfield=\lG \avgfield+\lG \bS\ppsq \bG \avgfield=\lG \avgfield.
    \end{align}
    The second property can be derived similarly with Lemma~\ref{lemma:SPiG}.
    \begin{align}
        \ppsp[\ppsq]\ppsp[\ppsq']&=(1+\lG \bS \ppsq \proj)(1+\lG \bS \ppsq' \proj)\\
        &=1+\lG \bS \ppsq \proj+\lG \bS \ppsq' \proj+\lG \bS \ppsq \proj\lG \bS \ppsq' \proj\\
        &=1+\lG \bS \ppsq \proj+\lG \bS \ppsq' \proj+\lG \bS \ppsq (\infevo-1) \ppsq' \proj\\
        &=1+\lG \bS (\ppsq,\ppsq') \proj+\lG \bS \ppsq \infevo \ppsq' \proj\\
        &=1+\lG \bS (\ppsq,\ppsq') \proj,
    \end{align}
    where the last step is because that the image of $\ppsq'$ is in $\subH$ and $\infevo:\subH\to 0$.
    This proves 
    \begin{align}
        \ppsp[\ppsq]\ppsp[\ppsq']=\ppsp[(\ppsq,\ppsq')].
    \end{align}

    Finally, we can prove the third property directly from the definition. 
    \begin{align}
        \delta(\ppsp[\ppsq]) \lG \avgfield= \delta(\lG \bS \ppsq)\proj \lG \avgfield=\delta(\lG \bS \ppsq)\bG \avgfield=0.
    \end{align}
\end{proof}

\subsection{Theorem~\ref{thm:ppsq-grad}}\label{proof-thm:ppsq-grad}
\textbf{Theorem~\ref{thm:ppsq-grad} ($\ppsp[\ppsq]$  Allows for Local Gradient Computations). }
    When the acting entities $\gX_a$ are strongly connected, objective propagators of the form of $\ppsp_{x'x}=\ppsp[\ppsq_{x'x}]$ result in local gradient computations. That is,  Theorem~\ref{thm:grad-field} applies, and we obtain the same gradient formula 
    \begin{align}
        \frac{\partial \ppsv(x)}{\partial G(x)^{\field'}_{\field}}=\pps_x\ppsp\act(x)_{\field'}^{\field} \avgfield.
    \end{align}

\begin{proof}
    The proof is very similar to the proof of Proposition~\ref{prop:local-ppsv} where we need the notion of irreducibly diagonally dominant matrix (Definition~\ref{def:irreducible}) and Lemma~\ref{lemma:taussky}.

    The key ingredient in proving the theorem is to show that $\delta \pps_x \lG\avgfield=0$ for propagator $\ppsp_{x'x}=\ppsp[\ppsq_{x'x}]$. 
    We may view the collection of all environmental entities as a single entity by relabeling configurations sets. Then, the ``boundary conditions'' of the objective propagation equation are given by the objective signals from the environmental entity, which stays unchanged. We may label the entities by $x_0, x_1, \dots, x_n$ where $x_0$ denotes the environmental entity, and the rest $n$ entities are the acting entities. Thus, the objective propagation equation (Definition~\ref{def:ppsp-eq}) can be written as 
    \begin{align}
        \pps_{x_i}= \sum_{j=0}^n \ppsadj^{x_j}_{x_i}\pps_{x_j x_i}, \qquad \text{where}\ \ \pps_{x_j x_i}=\pps_{x_j}\ppsp[\ppsq_{x_j x_i}],\qquad i,j=1,2,\dots, n,
    \end{align}
    and $\pps_{x_0 x_i}$ is given by the environmental entity.
    Next, taking the variation and noting that $\pps_{x_0x_i}$ is unchanged, we obtain 
    \begin{align}
        \delta \pps_{x_i} = \sum_{j=1}^n  \ppsadj_{x_i}^{x_j}\left(  \pps_{x_j} \delta (\ppsp[\ppsq_{x_j x_i}])+\delta \pps_{x_j} \ppsp[\ppsq_{x_j x_i}] \right).
    \end{align}
    Then, The above equation can be very much simplified by multiplying $\lG \avgfield$ on both sides and applying Proposition~\ref{prop:PQ-properties}.
    \begin{align}
        \delta \pps_{x_i} \lG \avgfield &= \sum_{j=1}^n  \ppsadj_{x_i}^{x_j}\left( \pps_{x_j} \delta (\ppsp[\ppsq_{x_j x_i}]) \lG \avgfield+\delta \pps_{x_j} \ppsp[\ppsq_{x_j x_i}]\lG \avgfield\right)\\
        &=\sum_{j=1}^n  \ppsadj_{x_i}^{x_j}\delta \pps_{x_j} \lG \avgfield.
    \end{align}
    Denote $\Theta\in \sR^{n\times n}$ as a matrix where its elements given by $\Theta_{ij}:=\delta_{i}^j-\ppsadj_{x_i}^{x_j}$. Moreover, denote $\bm a\in \sR^n$ as a vector where   $a_i:=\delta \pps_{x_i}\lG \avgfield \in \sR$. Then, the above equation can be formulated into 
    \begin{align}
        \Theta \bm a = 0. \label{eq:ppsq-grad-1}
    \end{align}
    We can observe that $\Theta$ is  irreducibly diagonally dominant by checking that it satisfies the properties in Definition~\ref{def:irreducible}. First, the graph associated to the acting entities is strongly connected as stated in the theorem. Second, $\Theta$ is diagonally dominant because $\forall i: \sum_{j=0}^n \ppsadj_{x_i}^{x_j}=1=\delta_i^i$ (by Definition~\ref{def:adj-weight}). Lastly, note that there are some entity $x_i$ where $\ppsadj_{x_i}^{x_0}>0$, and thus for such $x_i$ we have $\sum_{j=1}^n \ppsadj_{x_i}^{x_j}<1$. This makes $\Theta$ satisfies the third property. Therefore, $\Theta$ is  irreducibly diagonally dominant, and thus non-singular by Lemma~\ref{lemma:taussky}. 

    Therefore, since $\Theta$ is non-singular, the only solution to \eqref{eq:ppsq-grad-1} is $\bm a=0$. This means, for all entity $x$, 
    \begin{align}
        \delta \pps_x \lG\avgfield=0.
    \end{align}

    As a result, the variation on the local objective value enjoys a simple formula as 
    \begin{align}
        \delta \ppsv = \delta(\pps_x \lG\avgfield)= \pps_x \delta(\lG\avgfield) + \delta\pps_x \lG\avgfield = \pps_x \delta(\lG\avgfield).
    \end{align}    
    It means that entity $x$ may view $\pps_x$ as a fixed objective operator when taking the gradient, and it reduces the setting to where Theorem~\ref{thm:grad-field} applies.

\end{proof}

\end{document}